\theoremstyle{plain}
\newtheorem{theorem}{Theorem}[section]
\newtheorem{lemma}[theorem]{Lemma}
\theoremstyle{definition}
\newtheorem{definition}[theorem]{Definition}
\newtheorem{assumption}[theorem]{Assumption}
\theoremstyle{remark}
\crefname{assumption}{Assumption}{Assumptions}
\newcommand{\crefdefpart}[2]{%
  \hyperref[#2]{\namecref{#1}~\labelcref*{#1}(\ref*{#2})}%
}
\newcommand{\refdefpart}[2]{%
  \hyperref[#2]{\labelcref*{#1}(\ref*{#2})}%
}
\newcommand{\factdefpart}[2]{%
  \hyperref[#2]{Fact~(\ref*{#2})}%
}
\newcommand{\halfcheck}{\ding{51}\textsuperscript{\kern-0.55em\ding{55}}}
\newcommand{\ubar}[1]{\underaccent{\bar}{#1}}
\def\ceil#1{\lceil #1 \rceil}
\def\floor#1{\lfloor #1 \rfloor}
\def\1{\bm{1}}
\def\gdphi{{\nabla\Phi}}
\def\barf{{\bar{\nabla}f}}
\def\gdx{{\nabla_{x}}}
\def\gdy{{\nabla_{y}}}
\def\gdxy{{\nabla_{xx}^2}}
\def\gdxy{{\nabla_{xy}^2}}
\def\gdyy{{\nabla_{yy}^2}}
\def\pr{{\mathrm{Pr}}}
\def\d{{\mathrm{d}}}
\def\adaminimax{{\texttt{Ada-Minimax} }}
\def\adagrad{{AdaGrad-Norm}}
\def\vw{{\bm{w}}}
\def\vx{{\bm{x}}}
\def\vz{{\bm{z}}}
\DeclareMathAlphabet{\mathsfit}{\encodingdefault}{\sfdefault}{m}{sl}
\SetMathAlphabet{\mathsfit}{bold}{\encodingdefault}{\sfdefault}{bx}{n}
\def\gD{{\mathcal{D}}}
\def\gF{{\mathcal{F}}}
\def\gI{{\mathcal{I}}}
\def\gN{{\mathcal{N}}}
\def\gS{{\mathcal{S}}}
\newcommand{\E}{\mathbb{E}}
\newcommand{\R}{\mathbb{R}}
\DeclareMathOperator*{\argmax}{arg\,max}
\DeclareMathOperator*{\argmin}{arg\,min}
\title{Adaptive Algorithms with Sharp Convergence Rates for Stochastic Hierarchical Optimization}
\author{%
  Xiaochuan Gong \\
  % Department of Computer Science\\
  George Mason University \\
  \texttt{xgong2@gmu.edu} \\
  \And
  Jie Hao \\
  % Department of Computer Science\\
  George Mason University \\
  \texttt{jhao6@gmu.edu} \\
  \And
  Mingrui Liu \\
  % Mingrui Liu\thanks{Corresponding author.} \\
  % Department of Computer Science\\
  George Mason University \\
  \texttt{mingruil@gmu.edu} \\
}
\begin{document}

\maketitle

% \vspace*{-0.05in}
\begin{abstract}

Hierarchical optimization refers to problems with interdependent decision variables and objectives, such as minimax and bilevel formulations. While various algorithms have been proposed, existing methods and analyses lack adaptivity in stochastic optimization settings: they cannot achieve optimal convergence rates across a wide spectrum of gradient noise levels without prior knowledge of the noise magnitude. In this paper, we propose novel adaptive algorithms for two important classes of stochastic hierarchical optimization problems: nonconvex-strongly-concave minimax optimization and nonconvex-strongly-convex bilevel optimization. Our algorithms achieve sharp convergence rates of $\widetilde{O}(1/\sqrt{T} + \sqrt{\bar{\sigma}}/T^{1/4})$ in $T$ iterations for the gradient norm, where $\bar{\sigma}$ is an upper bound on the stochastic gradient noise. Notably, these rates are obtained \emph{without prior knowledge} of the noise level, thereby enabling automatic adaptivity in both low and high-noise regimes. To our knowledge, this work provides the \emph{first} adaptive and sharp convergence guarantees for stochastic hierarchical optimization. 
% Our algorithm design and analysis rely on a novel synthesis of the momentum normalization technique for updating the upper-level variable with a carefully crafted adaptive stepsize scheme for both upper- and lower-level variables.
Our algorithm design combines the momentum normalization technique with novel adaptive parameter choices. 
Extensive experiments on synthetic and deep learning tasks demonstrate the effectiveness of our proposed algorithms. 

\end{abstract}

\vspace*{-0.05in}
\section{Introduction}
\label{sec:introduction}
\vspace*{-0.04in}

Hierarchical optimization refers to a class of optimization problems characterized by nested structures in their objectives or constraints, such as minimax optimization~\citep{nemirovski2004prox,rafique2018non,lin2020gradient} and bilevel optimization~\citep{bracken1973mathematical,dempe2002foundations}. These problems have wide applications in machine learning. For example, minimax optimization is the foundation for adversarial learning~\citep{goodfellow2014explaining} and AUC maximization~\citep{ying2016stochastic,liu2019stochastic}, while bilevel optimization is central to meta-learning~\citep{finn2017model} and hyperparameter optimization~\citep{franceschi2017forward,feurer2019hyperparameter}. In this paper, we are interested in solving two classes of stochastic hierarchical optimization problems. The first class is the nonconvex-strongly-concave minimax problem in \eqref{problem:minimax}:
\begin{equation} \label{problem:minimax}
\min_{x\in\mathbb{R}^{d_x}}\max_{y\in\mathbb{R}^{d_y}}f(x,y)
:=\mathbb{E}_{\xi\sim\mathcal{D}}\left[F(x,y;\xi)\right],
\end{equation}%
where $\mathcal{D}$ is an unknown distribution where one can sample from, and $f(x,y)$ is nonconvex in $x$ and strongly concave in $y$. The second class is the nonconvex-strongly-convex bilevel problem in \eqref{problem:bilevel}:
\begin{equation} \label{problem:bilevel}
\min_{x\in\mathbb{R}^{d_x}}\Phi(x):=f(x,y^*(x)), \quad
% =\mathbb{E}_{\xi\sim\mathcal{D}_x}[F(x,y^*(x);\xi)], 
\text{s.t.},\ \ y^*(x)=\arg\min_{y\in\mathbb{R}^{d_y}}g(x,y),
% :=\mathbb{E}_{\zeta\in\mathcal{D}_y}[G(x,y;\zeta)],
% \min_{x\in\mathbb{R}^{d_x}}\Phi(x) \coloneqq f(x,y^*(x)), 
% \quad\text{s.t.} \ \ 
% y^*(x)=\argmin_{y\in\mathbb{R}^{d_y}}g(x,y),
\end{equation}
where $\mathcal{D}_x$ and $\mathcal{D}_y$ are unknown distributions where one can sample from, $f(x,y):=\E_{\xi\sim\gD_x}[F(x,y;\xi)]$ is nonconvex in $x$ and $g(x,y):=\E_{\xi\sim\gD_y}[G(x,y;\zeta)]$ is strongly convex in $y$. We call $x$ the upper-level variable and $y$ the lower-level variable. Note that the bilevel problem in \eqref{problem:bilevel} degenerates to the minimax problem in \eqref{problem:minimax} when $g=-f$ and then $\Phi(x)=\max_{y\in\mathbb{R}^{d_y}}f(x,y)$.

There are various algorithms designed for the minimax problem \eqref{problem:minimax} and the bilevel problem \eqref{problem:bilevel} in the stochastic setting~\citep{rafique2018non,lin2020gradient,li2022tiada,ghadimi2018approximation,hong2023two,ji2021bilevel,chen2022single,hao2024bilevel,kwon2023fully,guo2021stochastic,guo2021randomized}. However, existing algorithms and analyses lack adaptivity to various levels of stochastic gradient noise: their convergence rates remain suboptimal in various noise regimes unless the noise level is known a priori (see \cref{app:discussion} for discussion and details).
% ~\footnote{See \cref{app:discussion} for details.}. 
% ~\footnote{Among existing algorithms for nonconvex-strongly-concave minimax optimization, TiAda~\citep{li2022tiada} is the only work that attempts to be noise-adaptive. However, their convergence guarantees in the stochastic setting depend only on upper bounds of the stochastic gradient norm and the function value (e.g., Assumption 3.4, 3.5, and Theorem 3.2 in~\citep{li2022tiada}), rather than the actual noise level of stochastic gradients. Consequently, TiAda does not achieve optimal convergence in terms of the dependency on stochastic gradient variance. See a detailed discussion in Section~\ref{sec:adaptiveminimax}.}. 
In contrast, such an adaptivity guarantee is achieved in single-level stochastic optimization~\citep{li2019convergence,ward2020adagrad,kavis2022high,faw2022power,liu2023high,attia2023sgd} by AdaGrad-type algorithms~\citep{mcmahan2010adaptive,duchi2011adaptive}. This naturally motivates us to study the following question:

\textbf{How can we design adaptive gradient algorithms for stochastic hierarchical optimization problems \eqref{problem:minimax} and \eqref{problem:bilevel} that achieve convergence rates automatically adapting to the level of stochastic gradient noise, without requiring prior knowledge of this noise?}

Designing such algorithms in stochastic hierarchical optimization presents significant challenges. In particular, applying AdaGrad-type algorithms (e.g., AdaGrad-Norm~\citep{ward2020adagrad}) simultaneously to the upper- and lower-level variables will introduce complicated randomness dependency issues due to AdaGrad stepsizes. These dependencies are difficult to handle analytically without imposing strong assumptions such as bounded stochastic gradients or bounded function values~\citep{li2022tiada}. However, such assumptions undermine the algorithm’s ability to adapt effectively in various noise regimes.

In this paper, we address these challenges by developing novel adaptive algorithms for solving \eqref{problem:minimax} and \eqref{problem:bilevel}, respectively. Unlike standard AdaGrad-type algorithms~\citep{ward2020adagrad},
% the key innovation of our algorithm design lies in a novel synthesis of the momentum normalization technique~\citep{cutkosky2020momentum} for updating the upper-level variable and a carefully crafted adaptive stepsize scheme for both upper-level and lower-level variables. 
the key innovation of our approach lies in combining the momentum normalization technique \citep{cutkosky2020momentum} with novel adaptive parameter choices. A distinctive feature of our method is the dynamic adjustment of the momentum parameter based on online estimates of the stochastic gradient variance. This adaptive momentum directly informs our stepsize scheme, enabling improved convergence across both high- and low-noise regimes without requiring prior knowledge of the noise level. The primary challenge in analyzing the convergence of our proposed algorithms is simultaneously controlling the upper-level and lower-level errors under time-varying parameters, including adaptive momentum and stepsizes, while maintaining adaptivity in the presence of unknown stochastic noise. Our main contributions are summarized as follows. 
% A distinguishing feature of our approach is the dynamic adjustment of the momentum parameter based on online estimates of the stochastic gradient variance, which in turn informs the design of the adaptive stepsize scheme to improve convergence across both high- and low-noise regimes without requiring prior knowledge of the noise level. The primary challenge in the convergence analysis of the proposed algorithms is to simultaneously control the upper-level and lower-level errors under time-varying parameters such as momentum parameter and adaptive stepsizes, while maintaining adaptivity in the presence of unknown stochastic noise. Our main contributions are summarized as follows.

\begin{itemize}
\item 
We propose two new adaptive algorithms, namely Ada-Minimax and Ada-BiO, for solving the nonconvex-strong-concave minimax optimization problem \eqref{problem:minimax} and the nonconvex-strongly-convex bilevel optimization problem \eqref{problem:bilevel} respectively. Both algorithms leverage the momentum normalization technique and adaptively set the momentum parameter, along with carefully designed adaptive stepsizes for both upper- and lower-level variables.
% Both algorithms leverage the momentum normalization technique with a time-varying momentum parameter for updating the upper-level variable, along with carefully crafted adaptive stepsizes for both upper- and lower-level variables. 
To our knowledge, adaptive algorithms of this type that distincts from standard AdaGrad approaches are novel within both stochastic single-level and hierarchical optimization problems.
% To our knowledge, such adaptive algorithms without relying on standard AdaGrad stepsizes are novel in both stochastic single-level and hierarchical optimization problems.
%  to achieve adaptivity in the noise level of stochastic gradients

% \item We prove that the average gradient norm of the iterates of our algorithms is bounded by $\widetilde{O}(1/\sqrt{T} + \sqrt{\bar{\sigma}}/T^{1/4})$ in $T$ iterations with high probability,
% % \footnote{Here $\widetilde{O}(\cdot)$ compresses poly-logarithmic factors of $T$ and the failure probability $\delta\in(0,1)$.}, 
% where $\bar{\sigma}$ is an upper-bound on the stochastic gradient noise, and our algorithms invoke a constant number of stochastic gradient/Hessian-vector product evaluations in each iteration. Notably, our algorithms achieve sharp convergence rates that automatically adapt to both high- and low-noise regimes without requiring prior knowledge of the noise level. 
\item We obtain a high probability convergence rate of $\widetilde{O}(1/\sqrt{T} + \sqrt{\bar{\sigma}}/T^{1/4})$ in $T$ iterations for the gradient norm (here $\widetilde{O}(\cdot)$ compresses poly-logarithmic factors of $T$ and the failure probability $\delta\in(0,1)$), where $\bar{\sigma}$ denotes an upper-bound on the stochastic gradient noise. Notably, our algorithms automatically adapt to both high- and low-noise regimes without requiring prior knowledge of the noise levels. 
% while retaining near-optimal dependence on the number of iterations $T$ (up to logarithmic factors),
% A detailed comparison of our algorithms and existing algorithms is illustrated in \textcolor{red}{Table XXX}.

%Notably, the convergence rates of our algorithms seamlessly adapt across both low- and high-noise regimes while maintaining near-optimal convergence guarantees in terms of $T$ up to logarithmic factors.

% \item We empirically validate our theoretical results by an synthetic experiment and further demonstrate the effectiveness of our proposed algorithms on various deep learning tasks, including deep AUC maximization and hyperparameter optimization. We show that our algorithms consistently outperform existing adaptive gradient algorithms and well-tuned non-adaptive baselines. 
\item We empirically validate our theoretical results through a synthetic experiment and various deep learning tasks, including deep AUC maximization and hyperparameter optimization. Our results demonstrate that our proposed algorithms consistently outperform existing adaptive gradient methods as well as well-tuned non-adaptive baselines. 
% The code is available at \url{https://github.com/MingruiLiu-ML-Lab/adaptive-hierarchical-optimization}.
\end{itemize}

%  and is competitive to 

%The algorithms do not require the prior knowledge of the noise level of stochastic gradients but can automatically adapt to it

%In particular, our algorithm updates the momentum parameter on the fly based on the estimated variance of stochastic gradients, and the adaptive stepsize for upper- and lower-level variables are constructed based on the time-varying momentum parameter.

%Such an algorithm design principles is crucial to analyze the upper-level variable 

%  , which are not based on the standard AdaGrad stepsizes. Our algorithms
%We show that these two algorithms enjoy 

%guarantees rely on prior knowledge of the noise magnitude, and thus fail to attain optimal rates in settings where the noise level is unknown.

% either do not have convergence rate which explicitly depends on the stochastic gradient noise, or 

\section{Related Work}
\label{sec:related_work}
\textbf{Minimax Optimization.} Early works on minimax optimization focused on convex-concave settings and developed first-order algorithms with convergence guarantees~\citep{nemirovski2004prox,nesterov2007dual,juditsky2011solving,nemirovski2009robust}. The study of first-order algorithms for nonconvex-concave minimax optimization was pioneered by~\citep{rafique2018non}. Subsequent works improved convergence rates under various assumptions~\citep{liu2019stochastic,yan2020optimal,luo2020stochastic}, proposed single-loop algorithms~\citep{lin2020gradient,guo2021stochastic,yang2022faster}, and relaxed the concavity requirement on the maximization variable~\citep{yang2020global,liu2020towards,liu2021first,boct2023alternating}. Some recent efforts have incorporated adaptive gradient methods into minimax optimization~\citep{liu2020towards,li2022tiada,huang2023adagda,yang2022nest}. However, none of these approaches provide convergence guarantees that adapt across different levels of stochastic gradient noise in nonconvex-strongly-concave settings.

%and leveraging adaptive gradient algorithms~\citep{liu2020towards,li2022tiada,huang2023adagda,yang2022nest}. However, none of them achieve the adaptivity to a wide range of noise levels of stochastic gradient in nonconvex-strongly-concave minimax optimization problems.

\textbf{Bilevel Optimization.} Bilevel optimization~\citep{bracken1973mathematical,dempe2002foundations} is a type of hierarchical optimization problem where one optimization task (i.e., upper-level problem) is constrained by another optimization task (i.e., lower-level problem). The first nonasymptotic convergence guarantees for bilevel optimization problems were established by~\citep{ghadimi2018approximation}, followed by a growing body of work that established improved complexity bounds under various assumptions~\citep{hong2023two,ji2021bilevel,chen2022single,khanduri2021near,chen2021closing,dagreou2022framework,guo2021randomized,yang2021provably,hao2024bilevel,gong2024a,gong2024accelerated,dagreou2022framework,hu2022multi,kwon2023fully,shen2023penalty,lu2023slm,lu2024first}. More recently, a few studies have explored bilevel optimization algorithms with adaptive step sizes~\citep{fan2023bisls,antonakopoulos2024adaptive,yang2024tuning,huang2021biadam}. However, these methods are either restricted to the deterministic setting~\citep{antonakopoulos2024adaptive,yang2024tuning} or fail to adapt to a broad range of stochastic gradient noise levels~\citep{fan2023bisls,huang2021biadam,gong2025convergence} in nonconvex-strongly-convex bilevel optimization problems.

\textbf{Adaptive Gradient Algorithms.} Adaptive gradient algorithms~\citep{mcmahan2010adaptive,duchi2011adaptive,tieleman2012lecture,kingma2014adam} refer to a class of first-order algorithms where the stepsizes are computed based on the historical stochastic gradients, and they are empirically effective for training deep neural networks. The theoretical guarantees of adaptive gradient algorithms for single-level optimization problems are extensively studied and well-understood in the literature~\citep{li2019convergence,ward2020adagrad,kavis2022high,faw2022power,liu2023high,attia2023sgd,li2023convergence,faw2023beyond}. Extensions of adaptive methods to minimax~\citep{liu2020towards,li2022tiada,huang2023adagda,yang2022nest} and bilevel optimization~\citep{huang2021biadam,gong2025convergence,antonakopoulos2024adaptive,yang2024tuning} have also been proposed. However, none of these works establish theoretical guarantees for adaptivity to unknown stochastic gradient noise levels in nonconvex-strongly-concave minimax or nonconvex-strongly-convex bilevel optimization problems, as achieved by our proposed algorithms in this work.

%Due to the strong empirical performance of adaptive gradient algorithms for training neural networks, the theoretical 

\vspace*{-0.01in}
\section{Preliminaries}
\label{sec:preliminaries}
\vspace*{-0.01in}
\textbf{Notations.}
Denote $\|\cdot\|$ as the Euclidean norm. We use the standard $O(\cdot), \Theta(\cdot), \Omega(\cdot)$ notations, with $\widetilde{O}(\cdot), \widetilde{\Theta}(\cdot), \widetilde{\Omega}(\cdot)$ hiding logarithmic factors. Throughout, with slight abuse of notation, we use $\gF_t$ to denote the filtration (i.e., $\sigma$-algebra) generated by stochastic queries up to iteration $t$, and $\E_{t}[\cdot]=\E[\cdot \mid \gF_t]$ to denote the conditional expectation with respect to $\gF_t$, for all algorithms. A function $h$ is said to be $L$-smooth if $\|\nabla h(x) - \nabla h(y)\|\leq L\|x-y\|$ for all $x,y\in\R^d$. We additionally assume that all objective functions are bounded from below, i.e., $f^*\coloneqq \inf_{x}f(x) > -\infty$ (\cref{sec:ada_nsgdm}) and $\Phi^*\coloneqq \inf_{x}\Phi(x) > -\infty$ (\cref{sec:ass_minimax,sec:ass_bilevel}). 

\textbf{Settings.} 
Let $y^*(x)=\argmax_{y\in\R^{d_y}}f(x,y)$ for \eqref{problem:minimax} and $y^*(x)=\argmin_{y\in\R^{d_y}}g(x,y)$ for \eqref{problem:bilevel}. Define the objective function $\Phi(x)=f(x,y^*(x))$ for both minimax and bilevel optimization. Recall from \cref{sec:introduction} that the bilevel problem \eqref{problem:bilevel} reduces to the minimax problem \eqref{problem:minimax} when $g=-f$. The goal of this paper is to minimize $\Phi$. 

% \vspace*{-0.05in}
\subsection{Assumptions for Nonconvex-Strongly-Concave Minimax Optimization}
\label{sec:ass_minimax}

\begin{assumption} \label{ass:smoothness_minimax}
The objective function $f$ is $L$-smooth in $(x,y)$ and $f(x,\cdot)$ is $\mu$-strongly concave. 
% In addition, $\Phi$ is bounded from below, i.e., $\Phi^*\coloneqq \inf_{x}\Phi(x) > -\infty$. 
\end{assumption}

\begin{assumption} \label{ass:noise_minimax}
(i) The gradient oracle is unbiased, i.e., $\E[\nabla F(x,y;\xi) \mid x,y] = \nabla f(x,y)$. (ii) With probability one, the following holds: $\ubar{\sigma}_x \leq \|\nabla_{x} F(x,y;\xi)-\nabla_{x} f(x,y)\|\leq \bar{\sigma}_x$ with $\ubar{\sigma}_x\geq 0$ and $\|\nabla_{y} F(x,y;\xi)-\nabla_{y} f(x,y)\|\leq \sigma_y$.
\end{assumption}

\textbf{Remark:} \cref{ass:smoothness_minimax,ass:noise_minimax}(i) are standard in the minimax optimization literature \citep{lin2020gradient,yang2022minimax,guo2021stochastic}. The main extra assumption we make is \Cref{ass:noise_minimax}(ii): the stochastic gradient noise is lower bounded and upper-bounded (with probability one), which may appear somewhat unusual. However, this assumption holds naturally in the additive noise setting used in certain nonconvex optimization scenarios, such as escaping saddle points with isotropic noise~\citep{ge2015escaping}, where the stochastic gradient noise is sampled uniformly from the unit sphere and therefore has a nonzero magnitude with probability one. We also empirically validate this assumption, as shown in Appendix~\ref{app:verifyass}. In the noiseless case, we have $\bar{\sigma}_x=\ubar{\sigma}_x=0$, and $\sigma_y=0$.

%Note that this assumption about the stochastic gradient noise lower bound holds in the additive noise setting, which is also leveraged in certain scenarios such as escaping saddle points for nonconvex optimization~\citep{ge2015escaping}: the noise is sampled from a unit sphere and hence is almost surely lower bounded. In addition, we also empirically verify this assumption in practice, which is presented in \textcolor{red}{Appendix XXX}.

%it is essential for establishing a tight, high-probability upper bound on the gradient estimation error for the upper-level variable $x$, thereby ensuring the high-probability convergence guarantee for our proposed method. See the detailed analysis in \cref{sec:ada_nsgdm}.

% \vspace*{-0.05in}
\subsection{Assumptions for Nonconvex-Strongly-Convex Bilevel Optimization}
\label{sec:ass_bilevel}

% \begin{assumption} \label{ass:min_value_bilevel}
% The objective function $\Phi$ is bounded from below, i.e., $\Phi^* = \inf_{x}\Phi(x) > -\infty$.
% \end{assumption}

% \begin{assumption} \label{ass:smoothness_bilevel}
% The objective functions $f$ and $g$ satisfy: (i) $f$ is $L$-smooth in $(x,y)$; for every $x$, $\|\nabla_{y} f(x,y)\|\leq l_{f,0}$. (ii) For every $x$, $g(x,\cdot)$ is $\mu_g$-strongly convex for $\mu_g>0$ and $g$ is $l_{g,1}$-smooth in $(x,y)$. (iii) $g$ is twice continuously differentiable, and $\gdxy g, \gdyy g$ are $l_{g,2}$-Lipschitz in $(x,y)$.
% % \begin{enumerate}[(i)]
% %     \item $f$ is $L$-smooth jointly in $(x,y)$; for every $x$, $\|\nabla_{y} f(x,y)\|\leq l_{f,0}$.
% %     % \item For every $x$, $\|\nabla_{y} f(x,y)\|\leq l_{f,0}$.
% %     \item For every $x$, $g(x,\cdot)$ is $\mu_g$-strongly convex for $\mu_g>0$ and $g$ is $l_{g,1}$-smooth jointly in $(x,y)$.
% %     % \item $g$ is $l_{g,1}$-smooth jointly in $(x,y)$.
% %     \item $g$ is twice continuously differentiable, and $\gdxy g, \gdyy g$ are $l_{g,2}$-Lipschitz jointly in $(x,y)$.
% % \end{enumerate}
% \end{assumption}

% %\begin{remark}
% \textbf{Remark:} \cref{ass:smoothness_bilevel} is standard in the bilevel optimization literature~\citep{ji2021bilevel,kwon2023fully,ghadimi2018approximation,hao2024bilevel,chen2023bilevel1}. Under these assumptions, the objective function $\Phi$ is $L_F$-smooth; please refer to \cref{lem:smooth_bilevel} in \cref{app:bilevel} for the definition of $L_F$ and further details.
% %\end{remark}

\begin{assumption} \label{ass:smoothness_bilevel}
The objective functions $f$ and $g$ satisfy: (i) $f$ is $L$-smooth in $(x,y)$; for every $x,\xi$, $\|\nabla_{y} f(x,y)\|\leq l_{f,0}$ and $\|\nabla_{y} F(x,y;\xi)\|\leq l_{f,0}$. (ii) For every $x$, $g(x,\cdot)$ is $\mu_g$-strongly convex for $\mu_g>0$ and $g$ is $l_{g,1}$-smooth in $(x,y)$. (iii) $g$ is twice continuously differentiable, and $\gdxy g, \gdyy g$ are $l_{g,2}$-Lipschitz in $(x,y)$.
% \begin{enumerate}[(i)]
%     \item $f$ is $L$-smooth jointly in $(x,y)$; for every $x$, $\|\nabla_{y} f(x,y)\|\leq l_{f,0}$.
%     % \item For every $x$, $\|\nabla_{y} f(x,y)\|\leq l_{f,0}$.
%     \item For every $x$, $g(x,\cdot)$ is $\mu_g$-strongly convex for $\mu_g>0$ and $g$ is $l_{g,1}$-smooth jointly in $(x,y)$.
%     % \item $g$ is $l_{g,1}$-smooth jointly in $(x,y)$.
%     \item $g$ is twice continuously differentiable, and $\gdxy g, \gdyy g$ are $l_{g,2}$-Lipschitz jointly in $(x,y)$.
% \end{enumerate}
\end{assumption}

%\begin{remark}
\textbf{Remark:} \cref{ass:smoothness_bilevel} is standard in the bilevel optimization literature~\citep{ji2021bilevel,kwon2023fully,ghadimi2018approximation,hao2024bilevel,chen2023bilevel1}. Notably, the condition $\|\nabla_{y} F(x,y;\xi)\|\leq l_{f,0}$ is essential for deriving $\|\barf(x,y;\bar{\xi}) - \E[\barf(x,y;\bar{\xi})]\| \leq \bar{\sigma}_{\phi}$ in \cref{lem:var_bilevel} (see \cref{app:neumann_series} for the definition of $\barf(x,y;\bar{\xi})$), where $\bar{\sigma}_{\phi}$ plays a similar role to $\bar{\sigma}_x$ in \cref{ass:noise_minimax} for minimax optimization. Under these assumptions, the objective function $\Phi$ is $L_F$-smooth; please refer to \cref{lem:smooth_bilevel} in \cref{app:bilevel} for the definition of $L_F$ and further details.
%\end{remark}

\begin{assumption} \label{ass:noise_bilevel}
All stochastic estimators are unbiased, and almost surely satisfy (i) $\|\nabla_{x} F(x,y;\xi) - \nabla_{x} f(x,y)\|\leq \sigma_f$; (ii) $\|\nabla_{y} F(x,y;\xi) - \nabla_{y} f(x,y)\|\leq \sigma_f$; (iii) $\|\nabla_{y} G(x,y;\zeta) - \nabla_{y} g(x,y)\|\leq \sigma_{g,1}$; (iv) $\|\gdxy G(x,y;\zeta) - \gdxy g(x,y)\|\leq \sigma_{g,2}$; (v) $\|\gdyy G(x,y;\zeta) - \gdyy g(x,y)\|\leq \sigma_{g,2}$; (vi) $\|\barf(x,y;\bar{\xi}) - \E[\barf(x,y;\bar{\xi})]\|\geq \ubar{\sigma}_{\phi}$, where $\barf(x,y;\bar{\xi})$ is defined in~\cref{def:barf}. %\textcolor{red}{define $\barf$?}
% \begin{enumerate}[(i)]
%     \item $\|\nabla_{x} F(x,y;\xi) - \nabla_{x} f(x,y)\|\leq \sigma_f$; $\|\nabla_{y} F(x,y;\xi) - \nabla_{y} f(x,y)\|\leq \sigma_f$;
%     \item $\|\nabla_{y} G(x,y;\zeta) - \nabla_{y} g(x,y)\|\leq \sigma_{g,1}$; $\|\gdxy G(x,y;\zeta) - \gdxy g(x,y)\|\leq \sigma_{g,2}$; $\|\gdyy G(x,y;\zeta) - \gdyy g(x,y)\|\leq \sigma_{g,2}$; 
% \end{enumerate}
\end{assumption}

%\begin{remark}
\textbf{Remark:} \cref{ass:noise_bilevel}(i)-(v) assumes the noise in the stochastic gradient and Hessian/Jacobian is bounded with probability one. This is a commonly used assumption to establish high probability guarantees or handle generalized-smooth objective functions in the single-level optimization literature \citep{li2023convergence,attia2023sgd,ivgi2023dog,zhang2019gradient,zhang2020improved}, as well as for stochastic bilevel optimization under the unbounded smoothness setting \citep{hao2024bilevel,gong2024a}. 
% In particular, by a general reduction from sub-Gaussian to (almost surely) bounded noise \citep[Lemma 9]{attia2023sgd}, one can assume a bounded unbiased oracle and obtain the same guarantee with high probability up to logarithmic factors. 
\cref{ass:noise_bilevel}(vi) is a stochastic gradient noise lower bound for the bilevel optimization problem, sharing a similar spirit to \cref{ass:noise_minimax}(ii). Note that \cref{ass:noise_minimax} is empirically verified in \cref{app:verifyass}. Under \cref{ass:noise_bilevel}, we also have $\|\barf(x,y;\bar{\xi}) - \E[\barf(x,y;\bar{\xi})]\|\leq \bar{\sigma}_{\phi}$, where the definition of $\bar{\sigma}_{\phi}$ can be found in \cref{eq:bar_sigma_bilevel}. See the detailed proof in \cref{lem:var_bilevel}.

%and plays a crucial role in establishing sharp bound for the hypergradient estimation error and high-probability convergence guarantee for our proposed method. For a detailed analysis, see \cref{sec:ada_nsgdm}.
%\end{remark}

\textbf{Additional Notations.} 
In the subsequent analysis, we denote $\kappa_{\sigma}\coloneqq \bar{\sigma}/\ubar{\sigma}$ in \cref{sec:ada_nsgdm} (single-level optimization), $\kappa_{\sigma}\coloneqq \bar{\sigma}_x/\ubar{\sigma}_x$ in \cref{sec:adaptive_minimax} (minimax optimization), and $\kappa_{\sigma}\coloneqq \bar{\sigma}_{\phi}/\ubar{\sigma}_{\phi}$ in \cref{sec:adaptive_bilevel} (bilevel optimization).
% and $\kappa_{\sigma}\coloneqq\bar{\sigma}/\ubar{\sigma}$ in single-level optimization (see \cref{sec:ada_nsgdm}). 
We also adopt the convention $0/0\coloneqq1$.

\section{Algorithms and Main Results}
\label{sec:hierarchical}

\subsection{Main Challenges and Algorithm Design}

\textbf{Main Challenges.} While numerous adaptive gradient algorithms with adaptivity to stochastic gradient noise are developed in single-level optimization~\citep{li2019convergence,ward2020adagrad,kavis2022high,faw2022power,liu2023high,attia2023sgd}, designing algorithms with such an adaptive guarantee in hierarchical optimization is nontrivial. The main challenges lies in the following two aspects. First, designing such an algorithm in hierarchical optimization requires a careful balance between the upper- and lower-level update~\citep{guo2021stochastic,hong2023two,chen2023optimal}, which is difficult to achieve without the knowledge of the noise magnitude of stochastic gradient. Second, applying AdaGrad-type algorithms (e.g., AdaGrad-Norm~\citep{ward2020adagrad}) simultaneously to the upper- and lower-level variables will introduce complicated randomness dependency issues due to AdaGrad stepsizes~\citep{attia2023sgd}, which are difficult to handle unless strong assumptions (e.g., bounded stochastic gradient, bounded function value) are imposed as in~\citep{li2022tiada}.

\textbf{Algorithm Design.} To address these main challenges, our proposed algorithms leverage the normalized stochastic gradient descent (SGD) with momentum for the upper-level variable~\citep{cutkosky2020momentum} with a noise-aware adaptive momentum parameter and carefully crafted adaptive stepsize schemes for both levels. The momentum parameter automatically estimates the level of noise in the stochastic gradients on the fly, and this estimate is used to construct the stepsizes to maintain a balanced progress across both levels. These adaptive mechanisms, together with the momentum normalization technique, not only improve optimization stability but also make the theoretical convergence analysis more tractable. In particular, our proposed adaptive algorithms, namely Ada-Minimax and Ada-BiO, are designed for the minimax problem~(\ref{problem:minimax}) and the bilevel problem~(\ref{problem:bilevel}) respectively.  Both algorithms achieve sharp and adaptive convergence rates of $\widetilde{O}(1/\sqrt{T} + \sqrt{\bar{\sigma}}/T^{1/4})$ for the gradient norm, where $\bar{\sigma}$ denotes an upper bound on the stochastic gradient noise. We describe our methods in \cref{alg:ada_minimax,alg:ada_bilevel} with novel parameter choices in \cref{eq:alpha_minimax,eq:eta_minimax}. Their respective convergence guarantees are stated in \cref{thm:minimax,thm:bilevel}.

\textbf{Adaptive Parameter Choices.}
For simplicity, let $\alpha_t = 1-\beta_t$. In particular, for both \cref{alg:ada_minimax,alg:ada_bilevel}, we set $\alpha_t, \alpha_t', \eta_{x,t}, \eta_{y,t}$ as follows:
\begin{align} \label{eq:alpha_minimax}
    \alpha_t = \frac{\alpha}{\sqrt{\alpha^2 + \sum_{k=1}^{t}\|g_{x,k}-\tilde{g}_{x,k}\|^2}},
    \quad
    \alpha_t' = \frac{\alpha}{\sqrt{\alpha^2 + \sum_{k=1}^{t}\|g_{x,k}-\tilde{g}_{x,k}\|^2 + \|g_{y,k}\|^2}},
\end{align}
\begin{align} \label{eq:eta_minimax}
    % \eta_{x,t} = \frac{\eta_x\sqrt{\alpha_t'}}{\sqrt{t}}, 
    \eta_{x,t} = \frac{\eta\sqrt{\alpha_t'}}{\sqrt{t}}, 
    \qquad\text{and}\qquad
    % \eta_{y,t} = \frac{\eta_y}{\sqrt{\gamma^2 + \sum_{k=1}^{t}\|g_{y,k}\|^2}},
    \eta_{y,t} = \frac{\eta}{\sqrt{\gamma^2 + \sum_{k=1}^{t}\|g_{y,k}\|^2}}.
\end{align}
In the above formulas, the terms $g_{x,t}, \tilde{g}_{x,t}$, and $g_{y,t}$ carry different meanings; see the subsequent sections (\cref{sec:adaptive_minimax,sec:adaptive_bilevel}) for their precise definitions.
For simplicity, we set $\eta_x = \eta_y = \eta$ in analysis of \cref{alg:ada_minimax,alg:ada_bilevel} (see \cref{thm:minimax,thm:bilevel}). It is worth noting that this condition is not necessary for establishing convergence, as it only affects the universal constants in the rate.

\begin{figure*}[t]
\begin{minipage}{0.48\textwidth}
\begin{algorithm}[H]
    \caption{Adaptive Algorithm for Minimax Optimization (Ada-Minimax)} \label{alg:ada_minimax}
    \begin{algorithmic}[1]
        \STATE \textbf{Input:} $x_1, y_1, m_1=\nabla_x F(x_1,y_1;\xi_1)$ 
        \FOR{$t=1, \dots, T$}
            \STATE $m_t = \beta_t m_{t-1} + (1-\beta_t) g_{x,t}$
            % , where $g_{x,t}=\nabla_x F(x_t,y_t;\xi_t)$
            \STATE $x_{t+1} = x_t - \eta_{x,t} \frac{m_t}{\|m_t\|}$
            \STATE $y_{t+1} = y_t + \eta_{y,t} g_{y,t}$
            % , where $g_{y,t}=\nabla_y F(x_t,y_t;\xi_t)$
        \ENDFOR
    \end{algorithmic}
\end{algorithm}
\end{minipage}
\hfill % Adds horizontal space between minipages
\begin{minipage}{0.48\textwidth}
\begin{algorithm}[H]
    \caption{Adaptive Algorithm for Bilevel Optimization (Ada-BiO)} \label{alg:ada_bilevel}
    \begin{algorithmic}[1]
        \STATE \textbf{Input:} $x_1, y_1, m_1=\barf(x_1,y_1;\bar{\xi}_1)$ 
        % \textcolor{red}{gradient in terms of x?} 
        \FOR{$t=1, \dots, T$}
            \STATE $m_t = \beta_t m_{t-1} + (1-\beta_t) g_{x,t}$
            \STATE $x_{t+1} = x_t - \eta_{x,t} \frac{m_t}{\|m_t\|}$
            \STATE $y_{t+1} = y_t - \eta_{y,t} g_{y,t}$
        \ENDFOR
    \end{algorithmic}
\end{algorithm}
\end{minipage}
\end{figure*}

%These dynamic estimates of parameters, together with the use of normalization makes the convergence analysis tractable.

%further simplifies this coordination by making the update direction independent of the gradient scale.

%The adaptive momentum parameter estimates the stochastic gradient noise magnitude on the fly, while the adaptive stepsizes for both levels are constructed based on the momentum parameter to balance the update speed for both levels. The normalization operator makes the balance much easier.

%These new techniques will enable the balance among both levels 

%leverages an adaptive momentum esti

%In this section, we present \texttt{Ada-Minimax}, an adaptive method for stochastic nonconvex-strongly-concave minimax optimization, and \texttt{Ada-BiO}, an adaptive method for stochastic nonconvex-strongly-convex bilevel optimization. Both algorithms achieve sharp and adaptive convergence rates of $\widetilde{O}(1/\sqrt{T} + \sqrt{\bar{\sigma}}/T^{1/4})$ for the gradient norm, where $\bar{\sigma}$ denotes an upper bound on the stochastic gradient noise. We describe our Ada-Minimax method in \cref{alg:ada_minimax} and \cref{eq:alpha_minimax,eq:eta_minimax}, and our Ada-Bio method in \cref{alg:ada_bilevel} and \cref{eq:alpha_bilevel,eq:eta_bilevel}. Their respective convergence guarantees are stated in \cref{thm:minimax,thm:bilevel}.

%\paragraph{Key challenges.}

\subsection{Adaptive Algorithm for Minimax Optimization}
\label{sec:adaptive_minimax}

% \begin{algorithm}[t]
%     \caption{Adaptive Minimax Optimization (Ada-Minimax)} \label{alg:ada_minimax}
%     \begin{algorithmic}[1]
%         \STATE \textbf{Input:} $x_1, y_1, m_1=\nabla_x F(x_1,y_1;\xi_1)$ 
%         \FOR{$t=1, \dots, T$}
%             \STATE $m_t = \beta_t m_{t-1} + (1-\beta_t) g_{x,t}$
%             % , where $g_{x,t}=\nabla_x F(x_t,y_t;\xi_t)$
%             \STATE $x_{t+1} = x_t - \eta_{x,t} \frac{m_t}{\|m_t\|}$
%             \STATE $y_{t+1} = y_t + \eta_{y,t} g_{y,t}$
%             % , where $g_{y,t}=\nabla_y F(x_t,y_t;\xi_t)$
%         \ENDFOR
%     \end{algorithmic}
% \end{algorithm}
 
Our proposed algorithm Ada-Minimax is presented in \cref{alg:ada_minimax}. The algorithm updates the upper-level variable using normalized SGD with momentum \citep{cutkosky2020momentum} with adaptive and parameter-free choices for the momentum parameter and learning rates. The lower-level variable is updated by \adagrad. 
% For simplicity, let $\alpha_t = 1-\beta_t$. In particular, we set $\alpha_t, \alpha_t', \eta_{x,t}, \eta_{y,t}$ as follows:
% \begin{align} \label{eq:alpha_minimax}
%     \alpha_t = \frac{\alpha}{\sqrt{\alpha^2 + \sum_{k=1}^{t}\|g_{x,k}-\tilde{g}_{x,k}\|^2}},
%     \quad
%     \alpha_t' = \frac{\alpha}{\sqrt{\alpha^2 + \sum_{k=1}^{t}\|g_{x,k}-\tilde{g}_{x,k}\|^2 + \|g_{y,k}\|^2}},
% \end{align}
% \begin{align} \label{eq:eta_minimax}
%     \eta_{x,t} = \frac{\eta_x\sqrt{\alpha_t'}}{\sqrt{t}}, 
%     \qquad\text{and}\qquad
%     \eta_{y,t} = \frac{\eta_y}{\sqrt{\gamma^2 + \sum_{k=1}^{t}\|g_{y,k}\|^2}},
% \end{align}
In \cref{eq:alpha_minimax,eq:eta_minimax}, $g_{x,t}=\nabla_{x}F(x_t,y_t;\xi_t)$, $\tilde{g}_{x,t}=\nabla_{x}F(x_t,y_t;\xi_t')$, and $g_{y,t}=\nabla_{y}F(x_t,y_t;\xi_t)$, with $\xi_t, \xi_t'$ being independent samples.

Intuitively, the term $\sum_{k=1}^{t}\|g_{x,k}-\tilde{g}_{x,k}\|^2$ in the denominator of $\alpha_t$ is designed to approximate the variance term $\sigma^2T$ as in \citep{cutkosky2020momentum}, and this choice is partly inspired by \adagrad ~\citep{duchi2011adaptive,attia2023sgd}. Additionally, using $\alpha_t'$ instead of $\alpha_t$ in the design of $\eta_{x,t}$ effectively controls the ratio $\eta_{x,t}/\eta_{y,t}$ and facilitates establishing \cref{lem:yt_sum_bound}. It is worth noting that \cref{ass:noise_minimax} plays a crucial role in deriving tight, high-probability upper and lower bounds for both $\sum_{k=1}^{t}\|g_{x,k}-\tilde{g}_{x,k}\|^2$ and $\alpha_t$, see \cref{lem:momentum} for details.
% , thus ensuring the near-optimality of our proposed algorithm.

% \begin{theorem} \label{thm:minimax}
\begin{restatable}{theorem}{minimax} \label{thm:minimax}
Under \cref{ass:smoothness_minimax,ass:noise_minimax} and the parameter choices in \cref{eq:alpha_minimax,eq:eta_minimax}, let $\bar{\sigma}_x=\sigma_y$, then for any $\delta\in(0, 1/7)$, it holds with probability at least $1-7\delta$ that
% {%
% % \small
% \begin{align*}
%     \frac{1}{T}\sum_{t=1}^{T}\|\gdphi(x_t)\|
%     &\leq \frac{C_{m}}{\eta\sqrt{\alpha}}\left(\frac{1}{\sqrt{T}}\left(\alpha^2 + \frac{L^2}{\mu^2\eta^2}\left(4D^2L^2 + 2D\gamma\right)\right)^{1/4} \right.\\
%     &\qquad\qquad\left.+ \frac{1}{T^{3/8}}\left(\frac{2\sqrt{2}L^2D\sigma_{y}}{\mu^2\eta^2}\right)^{1/4} + \frac{1}{T^{1/4}}\left(4\bar{\sigma}_x^2 + \sigma_{y}^2\right)^{1/4}\right),
% \end{align*}
% }%
\begin{align*}
    \frac{1}{T}\sum_{t=1}^{T}\|\gdphi(x_t)\|
    &\leq \frac{C_{m}}{\eta\sqrt{\alpha}}\left(\frac{1}{\sqrt{T}}\left(\alpha^2 + \frac{L^2}{\mu^2\eta^2}\left(4D^2L^2 + 2D\gamma\right)\right)^{1/4} \right.\\
    &\qquad\qquad\left.+ \frac{1}{T^{3/8}}\left(\frac{2\sqrt{2}L^2D\bar{\sigma}_x}{\mu^2\eta^2}\right)^{1/4} + \frac{1}{T^{1/4}}\left(5\bar{\sigma}_x^2\right)^{1/4}\right),
\end{align*}
% \begin{align*}
%     \frac{1}{T}\sum_{t=1}^{T}\|\gdphi(x_t)\|
%     \leq \frac{C_{m}}{\eta\sqrt{\alpha}}\left(\frac{1}{\sqrt{T}}\left(\alpha^2 + \frac{L^2}{\mu^2\eta^2}\left(4D^2L^2 + 2D\gamma\right)\right)^{1/4} 
%     + \frac{1}{T^{3/8}}\left(\frac{2\sqrt{2}L^2D\bar{\sigma}_x}{\mu^2\eta^2}\right)^{1/4} + \frac{1}{T^{1/4}}\left(5\bar{\sigma}_x^2\right)^{1/4}\right),
% \end{align*}
where $C_{m}=\widetilde{O}(\kappa_{\sigma}^4)$ and $D$ are defined in \cref{eq:C_minimax,eq:D}, respectively.
\end{restatable}
% \end{theorem}

\textbf{Remark:} \cref{thm:minimax} demonstrates that Ada-Minimax achieves a rate of $\widetilde{O}(\sqrt{\bar{\sigma}_x}/T^{1/4})$ in the stochastic setting ($\ubar{\sigma}_x>0$) and $\widetilde{O}(1/\sqrt{T})$ rate in the deterministic setting ($\bar{\sigma}_x=0$). More importantly, our bound achieves the same bound of normalized SGD with momentum under known stochastic gradient variance \citep{cutkosky2020momentum}: it automatically interpolates between sharp rates in both high-noise and low-noise regimes \emph{without the knowledge of noise level}. Specifically, the convergence rate improves from $\widetilde{O}(1/T^{1/4})$ to a faster $\widetilde{O}(1/\sqrt{T})$ when $\bar{\sigma}_x$ is sufficiently small, namely $\bar{\sigma}_x=O(1/\sqrt{T})$. Notably, this automatic rate interpolation does not require prior knowledge of any problem-dependent parameters, and our proposed Ada-Minimax algorithm is fully parameter-free. In contrast, TiAda \citep{li2022tiada} does not exhibit such a bound in the low-noise regime (e.g., $\bar{\sigma}_x=O(1/\sqrt{T})$), and its convergence rate is not optimal with respect to $\bar{\sigma}_x$ in the stochastic setting since their convergence rate (e.g., Theorem 3.2 in~\citep{li2022tiada}) does not explicitly depend on $\bar{\sigma}_x$. See detailed proof of \cref{thm:minimax} in \cref{app:minimax}.
% See also comparison of adaptive algorithms for minimax optimization in \cref{tab:bilevel}.
A comparison of adaptive methods for minimax optimization is also presented in \cref{tab:minimax}.
% To our knowledge, \cref{thm:bilevel} provides the first near-optimal adaptive convergence guarantee for stochastic bilevel optimization without any prior knowledge of the noise parameters specified in \cref{ass:noise_bilevel}.
% \citep{cutkosky2020momentum,lan2012optimal,ghadimi2013optimal}
%\end{remark}

\begin{table}[!t]
    \centering
    \caption{Comparison of Adaptive Methods for Minimax Optimization}
    \vspace*{-0.1in}
    \label{tab:minimax}
    \renewcommand{\arraystretch}{0.7}
    \setlength{\tabcolsep}{6pt}
    \resizebox{\textwidth}{!}{
    \begin{tabular}{cccccccc}
    \toprule[2pt]
    Method & Setting & Assumptions & High Probability & Complexity  \\
    \midrule[1pt]
    TiAda \citep{li2022tiada} & Deterministic \citep[Theorem 3.1]{li2022tiada} & Assumptions 3.1 to 3.3 in \citep{li2022tiada} &  & $O(1/\sqrt{T})$  \\ \midrule
    TiAda \citep{li2022tiada} & Stochastic \citep[Theorem 3.2]{li2022tiada} & Assumptions 3.1 to 3.6 in \citep{li2022tiada} & \ding{55} & $O(\text{poly}(G) \cdot (T^{\frac{\alpha-1}{2}} + T^{-\frac{\alpha}{2}} + T^{\frac{\beta-1}{2}} + T^{-\frac{\beta}{2}}))$ \tablefootnote{$G$ denotes the upper bound on the stochastic gradient norm, and $\alpha, \beta$ satisfy $0 < \beta < \alpha < 1$.}  \\ \midrule
    Ada-Minimax & Deterministic \& Stochastic (Theorem 4.1 in this work) & Assumptions 3.1 and 3.2 in this work & \ding{51} & $\widetilde{O}(1/\sqrt{T} + \bar{\sigma}_x^{1/4}/T^{3/8} + \sqrt{\bar{\sigma}_x}/T^{1/4})$  \\  
    \bottomrule[2pt]
    \end{tabular}}%
\end{table}%

\begin{table}[!t]
    \centering
    \caption{Comparison of Adaptive Methods for Bilevel Optimization}
    \vspace*{-0.1in}
    \label{tab:bilevel}
    \renewcommand{\arraystretch}{0.7}
    \setlength{\tabcolsep}{6pt}
    \resizebox{\textwidth}{!}{
    \begin{tabular}{cccccccc}
    \toprule[2pt]
    Method & Setting & Assumptions & High Probability & Complexity  \\
    \midrule[1pt]
    S-TFBO \citep{yang2024tuning} & Deterministic \citep[Theorem 2]{yang2024tuning} & Assumptions 1 to 3 in \citep{yang2024tuning} &  & $\widetilde{O}(1/\sqrt{T})$ \\ \midrule
    Ada-Bio & Deterministic \& Stochastic (Theorem 4.2 in this work) & Assumptions 3.3 and 3.4  in this work & \ding{51} & $\widetilde{O}(1/\sqrt{T} + \sigma_{g,1}^{1/4}/T^{3/8} + (\sqrt{\bar{\sigma}_{\phi}}+\sqrt{\sigma_{g,1}})/T^{1/4})$  \\ 
    \bottomrule[2pt]
    \end{tabular}}%
\end{table}%

\subsection{Adaptive Algorithm for Bilevel Optimization}
\label{sec:adaptive_bilevel}

% \begin{algorithm}[t]
%     \caption{Adaptive Algorithm for Bilevel Optimization (Ada-BiO)} \label{alg:ada_bilevel}
%     \begin{algorithmic}[1]
%         \STATE \textbf{Input:} $x_1, y_1, m_1=\barf(x_1,y_1;\bar{\xi}_1)$ 
%         % \textcolor{red}{gradient in terms of x?} 
%         \FOR{$t=1, \dots, T$}
%             \STATE $m_t = \beta_t m_{t-1} + (1-\beta_t) g_{x,t}$
%             \STATE $x_{t+1} = x_t - \eta_{x,t} \frac{m_t}{\|m_t\|}$
%             \STATE $y_{t+1} = y_t - \eta_{y,t} g_{y,t}$
%         \ENDFOR
%     \end{algorithmic}
% \end{algorithm}

Our proposed algorithm Ada-Bio is presented in \cref{alg:ada_bilevel}. The overall framework closely resembles that of \cref{alg:ada_minimax}. The upper-level variable is updated using normalized SGD with momentum \citep{cutkosky2020momentum}, employing adaptive choices for the momentum parameter and learning rate. This approach differs from those of \citep{hao2024bilevel,gong2024a}, where fixed, non-adaptive momentum parameters and learning rates are used. The lower-level variable is updated via \adagrad. Here in \cref{eq:alpha_minimax,eq:eta_minimax}, $g_{x,t}=\barf(x_t,y_t;\bar{\xi}_t)$, $\tilde{g}_{x,t}=\barf(x_t,y_t;\bar{\xi}_t')$, and $g_{y,t}=\nabla_{y}G(x_t,y_t;\zeta_t)$, where $\barf(x,y;\bar{\xi})$ denotes the Neumann series approximation (see \cref{app:neumann_series} for further details), with $\bar{\xi}_t, \bar{\xi}_t'$ being independent samples.
\begin{restatable}{theorem}{bilevel} \label{thm:bilevel}
Under \cref{ass:smoothness_bilevel,ass:noise_bilevel} and the parameter choices in \cref{eq:alpha_minimax,eq:eta_minimax}, for any $\delta\in(0, 1/7)$, choose $N \geq \frac{3\log T}{2\log(1/(1-\mu_g/l_{g,1}))}$, it holds with probability at least $1-7\delta$ that
% {%
% \small
% \begin{align*}
%     \frac{1}{T}\sum_{t=1}^{T}\|\gdphi(x_t)\|
%     &\leq \frac{C_{b}}{\eta\sqrt{\alpha}}\left(\frac{1}{\sqrt{T}}\left(\alpha^2 + \frac{l_{g,1}^2}{\mu^2\eta^2}\left(4D^2l_{g,1}^2 + 2D\gamma\right)\right)^{\frac{1}{4}} + \frac{1}{T^{\frac{3}{8}}}\left(\frac{2\sqrt{2}l_{g,1}^2D\sigma_{g,1}}{\mu^2\eta^2}\right)^{\frac{1}{4}} + \frac{1}{T^{\frac{1}{4}}}\left(4\bar{\sigma}_{\phi}^2 + \sigma_{g,1}^2\right)^{\frac{1}{4}}\right),
% \end{align*}
% }%
\begin{align*}
    \frac{1}{T}\sum_{t=1}^{T}\|\gdphi(x_t)\|
    &\leq \frac{C_{b}}{\eta\sqrt{\alpha}}\left(\frac{1}{\sqrt{T}}\left(\alpha^2 + \frac{l_{g,1}^2}{\mu^2\eta^2}\left(4D^2l_{g,1}^2 + 2D\gamma\right)\right)^{1/4} \right.\\
    &\qquad\qquad\left.+ \frac{1}{T^{3/8}}\left(\frac{2\sqrt{2}l_{g,1}^2D\sigma_{g,1}}{\mu^2\eta^2}\right)^{1/4} + \frac{1}{T^{1/4}}\left(4\bar{\sigma}_{\phi}^2 + \sigma_{g,1}^2\right)^{1/4}\right),
\end{align*}
where $C_{b}=\widetilde{O}(\kappa_{\sigma}^4), D$, and $\bar{\sigma}_{\phi}$ are defined in \cref{eq:C_bilevel,eq:D,eq:bar_sigma_bilevel}, respectively.
\end{restatable}
% \end{theorem}

\textbf{Remark:} \cref{thm:bilevel} shows that Ada-Bio achieves a sharp rate of $\widetilde{O}((4\bar{\sigma}_{\phi}^2 + \sigma_{g,1}^2)^{1/4}/T^{1/4})$ in the stochastic setting, where all noise terms introduced in \cref{ass:noise_bilevel} are positive. Moreover, it is obvious that Ada-Bio implicitly adapts to the noise level; in the noiseless case (where all noise parameters in \cref{ass:noise_bilevel} vanish), Ada-Bio automatically recovers the near-optimal $\widetilde{O}(1/\sqrt{T})$ rate.
To the best of our knowledge, \cref{thm:bilevel} provides the first sharp and adaptive convergence guarantee for stochastic bilevel optimization without any prior knowledge of the noise parameters specified in \cref{ass:noise_bilevel}. In fact, we only require the knowledge of $\mu_g, l_{g,1}$ and $T$ due to the construction of Neumann series. See detailed proof of \cref{thm:bilevel} in \cref{app:bilevel}. 
A comparison of adaptive methods for bilevel optimization is also presented in \cref{tab:bilevel}.

\section{Theoretical Analysis}
\label{sec:analysis}

In this section, we provide the convergence analysis for \cref{alg:ada_minimax,alg:ada_bilevel} with the adaptive parameter choices in \cref{eq:alpha_minimax,eq:eta_minimax}. We begin in \cref{sec:ada_nsgdm} by analyzing an adaptive version of normalized SGD with momentum (\cref{alg:ada_nsgdm}) in the nonconvex stochastic (single-level) optimization setting, where we establish a convergence rate of $\widetilde{O}(1/\sqrt{T}+\sqrt{\bar{\sigma}}/T^{1/4})$, where $\bar{\sigma}$ is an upper bound on the stochastic gradient noise. We then extend this novel framework for the upper-level analysis in both minimax and bilevel optimization, combining it with a generalized AdaGrad-Norm analysis in the (strongly) convex case \citep{attia2023sgd} under time shift for the lower-level variables, presented in \cref{sec:proof_sketch}. Due to space limitations, we defer the full proofs to \cref{app:lower_analysis,app:minimax,app:bilevel}.
% Due to space limitations, we defer the full proofs of \cref{thm:minimax,thm:bilevel} to \cref{app:minimax,app:bilevel}.

\subsection{Adaptive Normalized SGD with Momentum}
\label{sec:ada_nsgdm}

With a slight abuse of notation, we consider minimizing an objective function $f(x)=\E[F(x;\xi)]$. We start with analyzing adaptive normalized SGD with momentum presented in Algorithm~\ref{alg:ada_nsgdm}, where $g_t=\nabla F(x_t;\xi_t)$. This algorithm builds on the method introduced by~\citep{cutkosky2020momentum}, with the key difference being that we incorporate both an adaptive momentum parameter $\beta_t$ and an adaptive stepsize $\eta_t$, each of which varies across iterations. In particular, let $\alpha_t = 1-\beta_t$ and we set $\alpha_t$ and $\eta_t$ as
%\begin{small}
\begin{align} \label{eq:eta_ada_nsgdm}
    \alpha_t = \frac{\alpha}{\sqrt{\alpha^2 + \sum_{k=1}^{t}\|g_k-\tilde{g}_k\|^2}}
    \quad\text{and}\quad
    \eta_t = \frac{\eta\sqrt{\alpha_t}}{\sqrt{t}},
\end{align}
%\end{small}
where $\tilde{g}_t=\nabla F(x_t;\xi_t')$ and $\xi_t,\xi_t'$ are independent samples.
We will make the following assumptions.

\begin{assumption} \label{ass:smoothness}
The objective function $f$ is $L$-smooth.
% and bounded from below, i.e., $f^*\coloneqq \inf_{x}f(x) > -\infty$.
\end{assumption}

\begin{assumption} \label{ass:noise}
The gradient oracle is unbiased, i.e., $\E[\nabla F(x;\xi) \mid x] = \nabla f(x)$, and with probability one, satisfies $\ubar{\sigma} \leq \|\nabla F(x;\xi)-\nabla f(x)\|\leq \bar{\sigma}$. 
\end{assumption}

% \textbf{Our proposed \texttt{Ada-NSGDM} algorithm (\cref{alg:ada_nsgdm}):} 
% For simplicity, let $\alpha_t = 1-\beta_t$. In particular, we set $\alpha_t$ and $\eta_t$ as
% \begin{align} \label{eq:eta_ada_nsgdm}
%     \alpha_t = \frac{\alpha}{\sqrt{\alpha^2 + \sum_{k=1}^{t}\|g_k-\tilde{g}_k\|^2}}
%     \quad\text{and}\quad
%     \eta_t = \frac{\eta\sqrt{\alpha_t}}{\sqrt{t}}.
% \end{align}

Before proceeding, we introduce the definition of $\kappa_{\sigma}$ and $t_0$, which will be frequently used throughout the subsequent analysis. Specifically, we define (with the convention $0/0\coloneqq1$)
\begin{equation} \label{eq:t0}
    % \kappa_{\sigma} = \frac{\bar{\sigma}}{\ubar{\sigma}}
    \kappa_{\sigma} = 
    \begin{cases}
        % \frac{\bar{\sigma}}{\ubar{\sigma}} & \ubar{\sigma}>0 \\
        \bar{\sigma} / \ubar{\sigma} & \ubar{\sigma}>0 \\
        1 & \bar{\sigma}=0
    \end{cases},
    \quad
    % c_0 = 
    % \begin{cases}
    %     \displaystyle \min\left\{\frac{1}{2}, \frac{\ubar{\sigma}^2}{4\bar{\sigma}^2-2\ubar{\sigma}^2}\right\} & \ubar{\sigma}>0 \\
    %     \displaystyle \frac{1}{2} & \bar{\sigma}=0
    % \end{cases},
    % c_0 = \min\left\{\frac{1}{2}, \frac{\ubar{\sigma}^2}{4\bar{\sigma}^2-2\ubar{\sigma}^2}\right\},
    c_0 = \frac{\ubar{\sigma}^2}{4\bar{\sigma}^2-2\ubar{\sigma}^2},
    \quad\text{and}\quad
    % t_0 = \min\left\{\max\left\{2, 4A_{T}(\delta)+2\sqrt{B_{T}(\delta)}\right\}, T\right\},
    t_0 = \max\left\{2, \frac{A_{T}(\delta)+c_0\sqrt{B_{T}(\delta)}}{c_0^2}\right\},
\end{equation}
where $A_{T}(\cdot)$ and $B_{T}(\cdot)$ are logarithmic factors (double-log in $T$) defined in \cref{lem:emp_bernstein}.
% where $A_{T}(\cdot)$ and $B_{T}(\cdot)$ are defined in \cref{lem:emp_bernstein}.
% (double-logarithmic factors in $T$)

\begin{algorithm}[t]
    \caption{Adaptive Normalized SGD with Momentum (Ada-NSGDM)} \label{alg:ada_nsgdm}
    \begin{algorithmic}[1]
        \STATE \textbf{Input:} $x_1, m_1=\nabla F(x_1;\xi_1)$ 
        \FOR{$t=1, \dots, T$}
            \STATE $m_t = \beta_t m_{t-1} + (1-\beta_t) g_t$
            \STATE $x_{t+1} = x_t - \eta_t \frac{m_t}{\|m_t\|}$
        \ENDFOR
    \end{algorithmic}
\end{algorithm}

% We now present the main lemmas along with their proofs. 
We now present the main lemmas necessary to establish \cref{thm:ada_nsgdm}. All of these lemmas rely on \cref{ass:smoothness,ass:noise}, unless explicitly stated otherwise. The full proof of these lemmas are deferred to \cref{app:ada_nsgdm}. The following lemma is a standard recursion for the momentum deviation.

% \begin{lemma} 
\begin{restatable}{lemma}{bias} \label{lem:momentum_recursion}
Define $\hat{\epsilon}_t = m_t-\nabla f(x_t)$ and $\epsilon_t = g_t-\nabla f(x_t)$. Further, let $S_t = \nabla f(x_{t-1})-\nabla f(x_t)$. For all $t\geq 1$, it holds that 
$\hat{\epsilon}_t = \beta_{2:t}\hat{\epsilon}_1 + \sum_{k=2}^{t}\beta_{(k+1):t}\alpha_{k}\epsilon_{k} + \sum_{k=2}^{t}\beta_{k:t}S_{k}$.
% \begin{align*}
%     \textstyle
%     \hat{\epsilon}_t = \beta_{2:t}\hat{\epsilon}_1 + \sum_{k=2}^{t}\beta_{(k+1):t}\alpha_{k}\epsilon_{k} + \sum_{k=2}^{t}\beta_{k:t}S_{k}.
% \end{align*}
\end{restatable}
% \end{lemma}

In order to obtain a high probability bound for $\|\hat{\epsilon}_t\|$, we need the following technical lemma, which leverages the concentration bound introduced in \citep[Lemma 2.4]{liu2023near} and tools from linear programming (see \cref{lem:cross_bound} in \cref{app:lp}) to resolve the difficulties arising from statistical dependency among $\alpha_t, \beta_t$, and $\epsilon_t$.

% \begin{lemma} 
\begin{restatable}{lemma}{noise} \label{lem:var}
Let $0\leq \ubar{\alpha}_t\leq \alpha_t\leq \bar{\alpha}_t$ and $0\leq \ubar{\beta}_t\leq \beta_t\leq \bar{\beta}_t$, where $\ubar{\alpha}_t, \bar{\alpha}_t, \ubar{\beta}_t$, and $\bar{\beta}_t$ are independent of $\gF_t$. Then with probability at least $1-2\delta$, it holds for all $t\leq T$ that 
% $\left\|\sum_{k=2}^{t}\beta_{(k+1):t}\alpha_{k}\epsilon_{k}\right\| \leq \bar{\sigma}\sqrt{\left(1+32\log\frac{2T}{\delta}\right)\sum_{k=2}^{t}\bar{\beta}_{(k+1):t}^2\bar{\alpha}_{k}^2}$.
\begin{align} \label{eq:sum_noise_bound}
    \textstyle
    \left\|\sum_{k=2}^{t}\beta_{(k+1):t}\alpha_{k}\epsilon_{k}\right\|
    \leq \bar{\sigma}\sqrt{\left(1+32\log\frac{2T}{\delta}\right)\sum_{k=2}^{t}\bar{\beta}_{(k+1):t}^2\bar{\alpha}_{k}^2}.
\end{align}
\end{restatable}
% \end{lemma}

Next, we provide high-probability lower and upper bounds for $\alpha_t$ and $\beta_t$, which help us to derive tight upper bound for the right-hand side of \cref{eq:sum_noise_bound} (see \cref{lem:var_coefficient} in \cref{app:algebra}). Our analysis relies on the martingale technique developed by \citep{carmon2022making}, which uses an empirical Bernstein concentration bound introduced by \citep{howard2021time}. Recall the definition of $t_0$ as in \cref{eq:t0}. \cref{lem:momentum} indicates that $\alpha_t$ and $\beta_t$ reliably approximate the optimal momentum parameter settings after $t_0$ iterations: they are both upper- and lower-bounded by quantities of the same order, even without prior knowledge of the noise level $\sigma$.

%  $(\ubar{\alpha}_t, \bar{\alpha}_t)$ and $(\ubar{\beta}_t, \bar{\beta}_t)$

% \begin{lemma} \label{lem:momentum}
\begin{restatable}{lemma}{momentum} \label{lem:momentum}
With probability at least $1-\delta$, for all $t\leq T$,
\begin{align*}
    \frac{\alpha}{\sqrt{\alpha^2+4\bar{\sigma}^2t}} 
    \eqqcolon \ubar{\alpha}_t &\leq \alpha_t \leq \bar{\alpha}_t \coloneqq 
    \mathbb{I}(t<t_0) + \frac{\alpha}{\sqrt{\alpha^2+\ubar{\sigma}^2t}}\mathbb{I}(t\geq t_0), \\
    \left(1-\frac{\alpha}{\sqrt{\alpha^2+\ubar{\sigma}^2t}}\right)\mathbb{I}(t\geq t_0) 
    \eqqcolon \ubar{\beta}_t &\leq \beta_t \leq \bar{\beta}_t \coloneqq 
    1-\frac{\alpha}{\sqrt{\alpha^2+4\bar{\sigma}^2t}}.
\end{align*}
\end{restatable}
% \end{lemma}

Now we are ready the present our main theorem regarding \cref{alg:ada_nsgdm}.

% \begin{theorem} \label{thm:ada_nsgdm}
\begin{restatable}{theorem}{nsgdm} \label{thm:ada_nsgdm}
Under \cref{ass:smoothness,ass:noise} and the parameter choices in \cref{eq:eta_ada_nsgdm}, for any $\delta\in(0, 1/3)$, it holds with probability at least $1-3\delta$ that
\begin{align*}
    \frac{1}{T}\sum_{t=1}^{T}\|\nabla f(x_t)\|
    \leq \frac{C}{\eta}\left(\frac{1}{\sqrt{T}} + \frac{2\sqrt{\bar{\sigma}}}{\sqrt{\alpha}T^{1/4}}\right),
\end{align*}
where $C=\widetilde{O}(\kappa_{\sigma}^4)$ is defined in \cref{eq:C_nsgdm}.
% $\Delta_1 = F(x_1)-F^*$ and
% \begin{align}
%     t_0 
%     &= \max\left\{2, 8\left(32\kappa_{\sigma}^4 - 30\kappa_{\sigma}^2 + 7\right)\log\left(\frac{60\log(6T)}{\delta}\right)\right\}, \\ \notag
%     C
%     &= \Delta_1 + 4\eta\bar{\sigma}\left(\sqrt{t_0-1} - 1 + e\sqrt{\kappa_{\sigma}}\left(1+2\sqrt{\bar{\sigma}/\alpha}\right)\right) + \frac{L\eta^2}{2}(1+\log T) \\ \notag
%     &\quad+ 2\eta\sqrt{1+32\log(2T/\delta)}\left(\left(t_0-1 + 2e\sqrt{t_0-2}\sqrt{\kappa_{\sigma}}\left(1+2\sqrt{\bar{\sigma}/\alpha}\right)\right)\bar{\sigma} + 3\sqrt{e}\kappa_{\sigma}^2\alpha\log T\right) \\ 
%     &\quad+ 4L\eta^2\left((t_0-1)\left(1+ e\sqrt{\kappa_{\sigma}}\left(1+2\sqrt{\bar{\sigma}/\alpha}\right)\right) + e(\sqrt{\kappa_{\sigma}}+2\kappa_{\sigma})\log T\right)
%     = \widetilde{O}(\kappa_{\sigma}^4).
% \end{align}
\end{restatable}
% \end{theorem}

\textbf{Remark:} 
To our knowledge, this is the first adaptive convergence guarantee for normalized SGD with momentum. 
% In the stochastic setting, \cref{alg:ada_nsgdm} achieves a rate of $\widetilde{O}(1/T^{1/4})$, and in the deterministic setting, it achieves a faster rate of $\widetilde{O}(1/\sqrt{T})$. 
\cref{alg:ada_nsgdm} achieves a rate of $\widetilde{O}(1/T^{1/4})$ in the stochastic setting and $\widetilde{O}(1/\sqrt{T})$ in the deterministic setting. This rate-interpolation occurs automatically without requiring any prior knowledge of problem-dependent parameters. We emphasize that \cref{thm:ada_nsgdm} builds a general analytical framework for proving \cref{thm:minimax,thm:bilevel}. See \cref{app:ada_nsgdm} for detailed proofs.

\subsection{Proof Sketch of \cref{thm:minimax,thm:bilevel}}
\label{sec:proof_sketch}

In this section, we present a unified lower-level analysis applicable to both minimax and bilevel optimization. 
% Recall from \cref{sec:preliminaries} that the bilevel optimization problem \eqref{problem:bilevel} reduces to the minimax optimization problem \eqref{problem:minimax} when $g=-f$.
Recall from \cref{sec:introduction,sec:preliminaries} that the bilevel optimization problem \eqref{problem:bilevel} reduces to the minimax optimization problem \eqref{problem:minimax} when $g=-f$. Therefore, we analyze line 5 of \cref{alg:ada_minimax} using the function $g$ and stochastic gradient descent (instead of the original $f$ and stochastic gradient ascent): $y_{t+1} = y_t - \eta_{y,t}g_{y,t}$, where $g_{y,t} = \nabla_{y}G(x_t,y_t;\xi_t) = -\nabla_{y}F(x_t,y_t;\xi_t)$. Note that the following lemma (\cref{lem:yt_sum_bound}) as well as \cref{lem:convex_martingale,lem:sum_eta_noise} in \cref{app:lower_analysis} are applicable to the proofs of both \cref{thm:minimax,thm:bilevel}.

The following lemma provides high-probability guarantees for the lower-level estimation error, which are crucial for controlling and bounding the (hyper)gradient bias. The core of our result generalizes the AdaGrad-Norm analysis developed for convex settings by \citep{attia2023sgd}, accommodating iteration-dependent shifts induced by the upper-level variable and incorporating our novel adaptive parameter choices as detailed in \cref{eq:alpha_minimax,eq:eta_minimax,sec:adaptive_minimax,sec:adaptive_bilevel}.
% The core of our analysis generalizes the AdaGrad-Norm analysis developed for convex settings by \citep{attia2023sgd} to accommodate shifts across iterations due to the upper-level variable, incorporating our novel adaptive parameter choices detailed in \cref{eq:alpha_minimax,eq:eta_minimax}.
% The core of our result is a generalization of the AdaGrad-Norm analysis in the convex setting developed by \citep{attia2023sgd} under upper-level variable shift across iterations and our novel adaptive parameter selections detailed in \cref{eq:alpha_minimax,eq:eta_minimax}.

% \begin{lemma} \label{lem:yt_sum_bound}
\begin{restatable}{lemma}{yt} \label{lem:yt_sum_bound}
% With probability at least $1-4\delta$, it holds for all $t\leq T+1$ that $\bar{d}_t \coloneqq \max_{k\leq t}\|y_k-y_k^*\| \leq D$, $\sum_{k=1}^{t}\|y_k - y_k^*\|^2 \leq \frac{1}{\mu^2\eta^2}\left(4D^2L^2 + 2D\gamma + 2\sqrt{2}D\sigma_y\sqrt{t}\right)$, and $\sum_{k=1}^{t}\|y_k - y_k^*\| \leq \frac{1}{\mu\eta}\left(\left(\sqrt{2}DL + \sqrt{D\gamma}\right)\sqrt{t} + \sqrt{2D\sigma_y}t^{3/4}\right)$,
With probability at least $1-4\delta$, for all $t\leq T+1$, $\bar{d}_t \coloneqq \max_{k\leq t}\|y_k-y_k^*\| \leq D$, and
\begin{align}
    % d_t^2 \label{eq:yt_bound_a}
    % &\leq D^2
    % \coloneqq d_1^2 + \left(1+\frac{\mu\eta}{\gamma}\right)C_1 + \frac{\eta L^2}{\mu^3}(\mu\eta+\alpha+\gamma)(1+\log T) + \frac{\eta^2}{4} \\ \notag
    % &\qquad\qquad+ \frac{4(1 + 2\mu\eta/\gamma)^2C_3^2}{\eta^2} + 16\left(1+\frac{\mu\eta}{\gamma}\right)^2\left(A_T(\delta) C_3 + \frac{\eta^2\sigma_y^2}{\gamma^2} B_T(\delta)\right), \\ 
    \label{eq:yt_bound_b}
    \sum_{k=1}^{t}\|y_k - y_k^*\|^2
    &\leq \frac{1}{\mu^2\eta^2}\left(4D^2L^2 + 2D\gamma + 2\sqrt{2}D\sigma\sqrt{t}\right), \\ \label{eq:yt_bound_c}
    \sum_{k=1}^{t}\|y_k - y_k^*\|
    &\leq \frac{1}{\mu\eta}\left(\left(\sqrt{2}DL + \sqrt{D\gamma}\right)\sqrt{t} + \sqrt{2D\sigma}t^{3/4}\right),
\end{align}
where $D$ is defined in \cref{eq:D}, and $\sigma=\sigma_y$ for \cref{alg:ada_minimax} and $\sigma=\sigma_{g,1}$ for \cref{alg:ada_bilevel}.
\end{restatable}
% \end{lemma}

Combining the upper-level analysis framework introduced in \cref{sec:ada_nsgdm} with the lower-level estimation error bounds (i.e., bounds on the gradient/hypergradient estimation bias) established in \cref{lem:yt_sum_bound}, we can prove \cref{thm:minimax,thm:bilevel} similarly to how we derived \cref{thm:ada_nsgdm}. The complete proofs are deferred to \cref{app:lower_analysis,app:minimax,app:bilevel}.
% Combing the upper-level analysis framework introduced in \cref{sec:ada_nsgdm} with the lower-level estimation error (i.e., bias of the gradient/hypergradient estimation) developed in \cref{lem:yt_sum_bound}, we are able to prove \cref{thm:minimax,thm:bilevel}, in a similar way as deriving \cref{thm:ada_nsgdm}. We defer the full proof to \cref{app:ada_nsgdm,app:minimax,app:bilevel}.

\section{Experiments}
\label{sec:experiments}
In this section, we empirically evaluate our proposed algorithms on three tasks, including synthetic test functions (Section~\ref{sec:synthetic}), deep AUC maximization (Section~\ref{sec:deepauc}), and hyperparameter optimization (Appendix~\ref{app:hyperparameteroptimization}). In addition, we further test the robustness of our algorithms by varying several key parameters (e.g., initial learning rates, initial momentum parameter), which is included in Section~\ref{sec:robustness}. The code is available at \url{https://github.com/MingruiLiu-ML-Lab/adaptive-hierarchical-optimization}.

% \vspace*{-0.05in}
\subsection{Synthetic Experiments}
\label{sec:synthetic}
% \begin{itemize}
% \item show our algorithm is adaptive to noise. When the noise becomes small, the algorithm becomes automatically faster without changing hyperparameters
% \item show that TiAda cannot converge under certain noise regimes even if the parameters are tuned extensively, while our algorithm can converge under the same initial learning rates
% \item show that our algorithm is insensitive to hyperparameter choice (check how rachael ward presents their results)
% \end{itemize}
We conduct synthetic experiments on a simple one-dimensional function $f(x,y) = \cos{x} + xy - \frac{1}{2}y^2$, which satisfies the nonconvex-strongly-concave minimax optimization setting. It is straightforward to verify that $y^*(x) = x$, $\Phi(x) = f(x,y^*(x)) = \cos{x}+\frac{1}{2}x^2$, and $\nabla\Phi(x) = x-\sin{x}$. To simulate stochastic gradients, we add Gaussian noise sampled from $\gN(0,\sigma^2)$ to the ground-truth gradients. As demonstrated in \cref{fig:synthetic}, our proposed method, \adaminimax, consistently outperforms TiAda~\citep{li2022tiada} across various noise magnitudes. These results clearly illustrate our algorithm's adaptivity to noise levels: specifically, as the noise magnitude decreases, our algorithm automatically achieves faster convergence. Notably, under high-noise regimes (e.g., $\sigma=100$), TiAda fails to converge even after extensive parameter tuning, whereas our algorithm successfully converges. The hyperparameter settings are included in \cref{app:synthetic}.
% \begin{align}
%     f(x,y) = \cos{x} + xy - \frac{1}{2}y^2, 
% \end{align}

% \vspace*{-0.05in}

\begin{figure*}[!h]
\begin{center}
\subfigure[$\sigma=0$]{\includegraphics[width=0.24\linewidth]{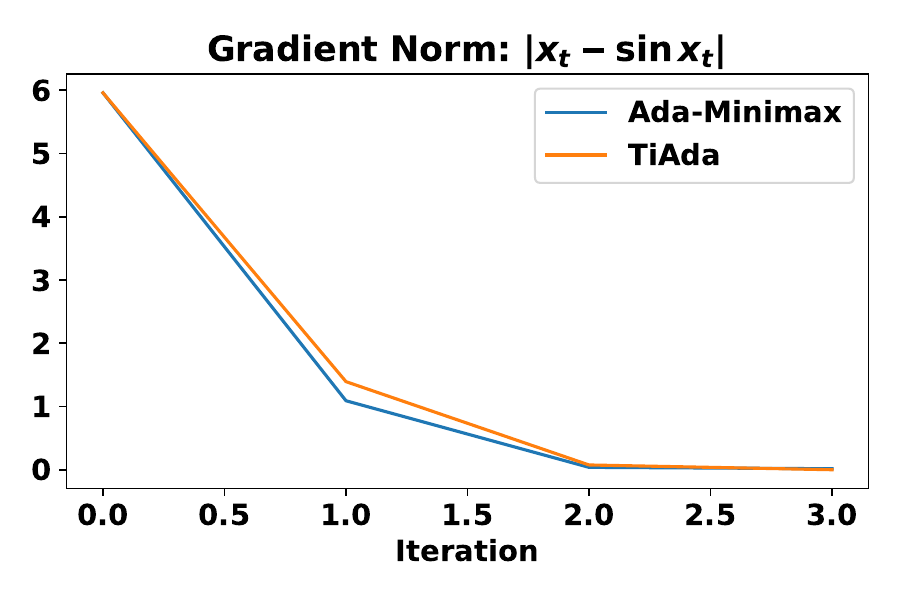}}   
\subfigure[$\sigma=20$]{\includegraphics[width=0.24\linewidth]{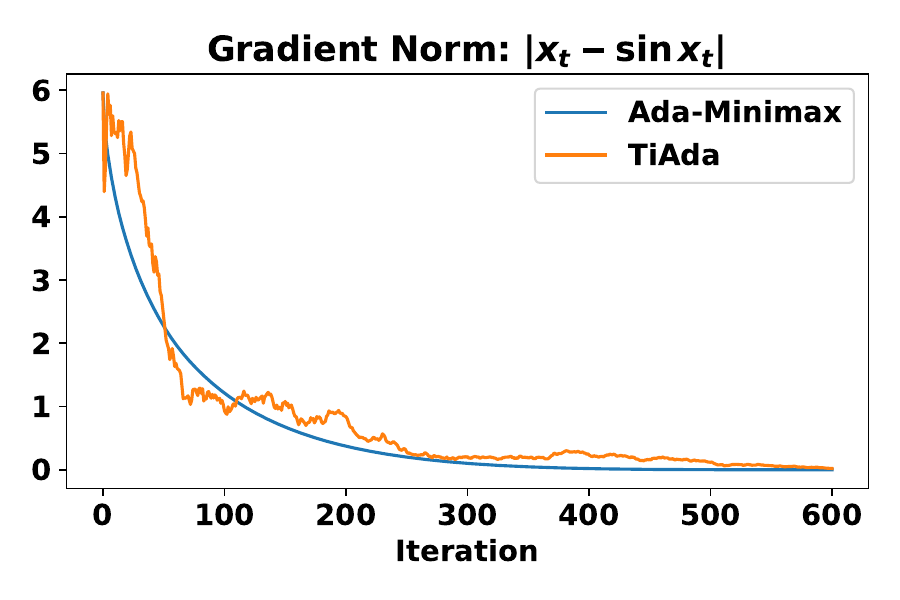}}   
\subfigure[$\sigma=50$]{\includegraphics[width=0.24\linewidth]{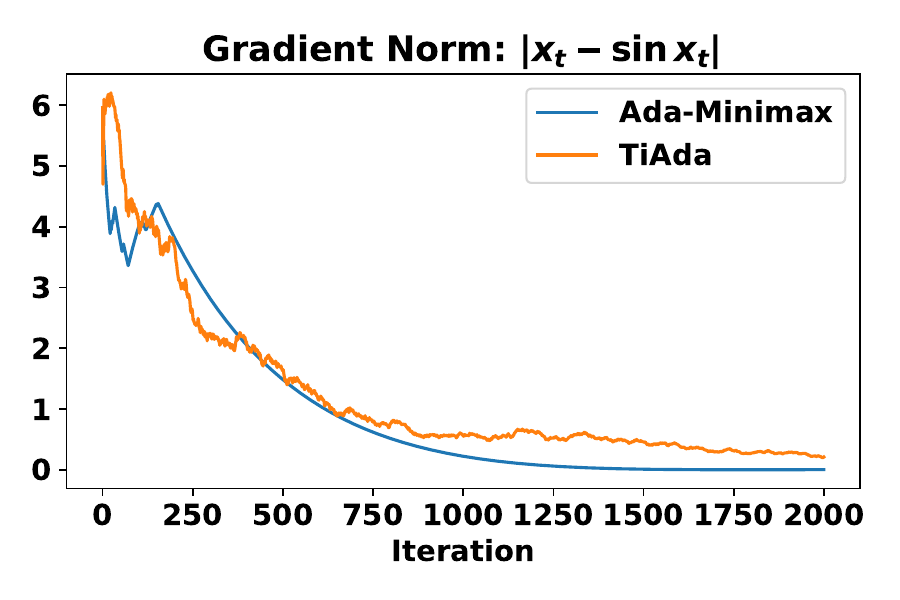}}  
\subfigure[$\sigma=100$]{\includegraphics[width=0.24\linewidth]{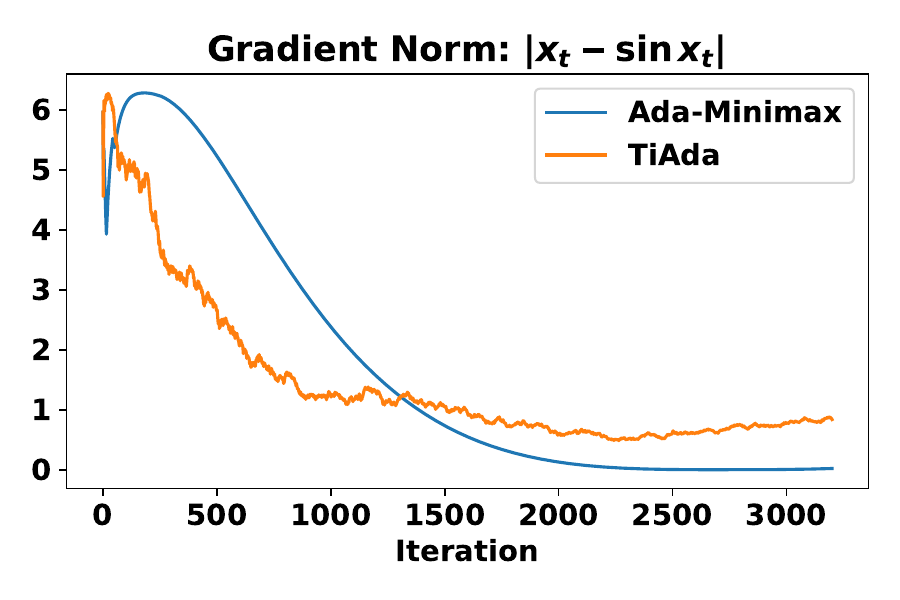}}
\end{center}
% \vspace*{-0.15in}
\vspace*{-0.10in}
\caption{Synthetic experiments on a 1-dimensional function for minimax optimization.}
\label{fig:synthetic}
\end{figure*}

\begin{figure*}[!t]
\begin{center}
\subfigure[Training AUC]{\includegraphics[width=0.24\linewidth]{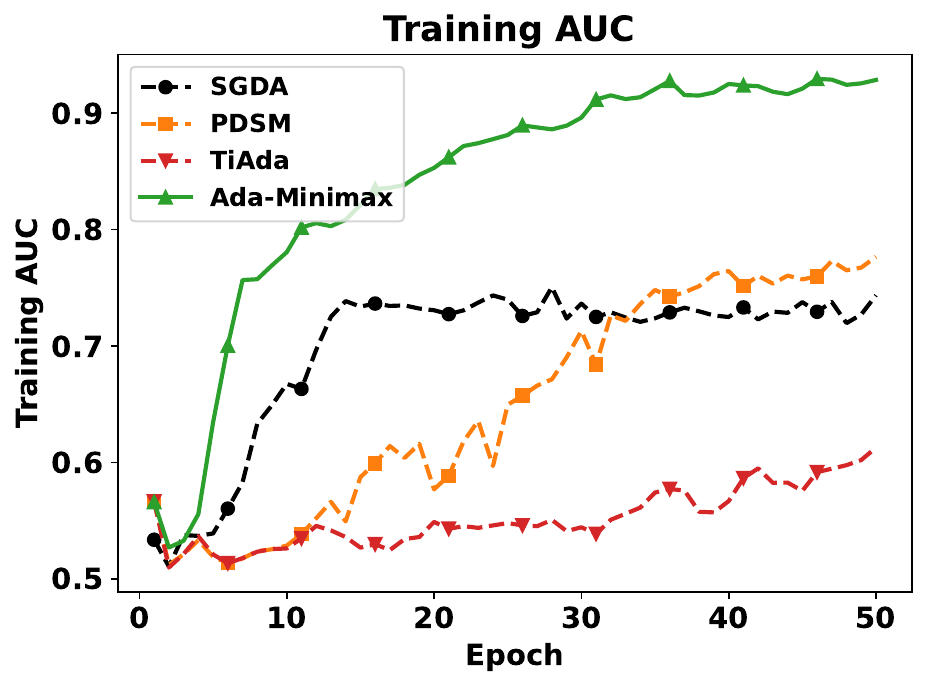}}   
\subfigure[Test AUC]{\includegraphics[width=0.24\linewidth]{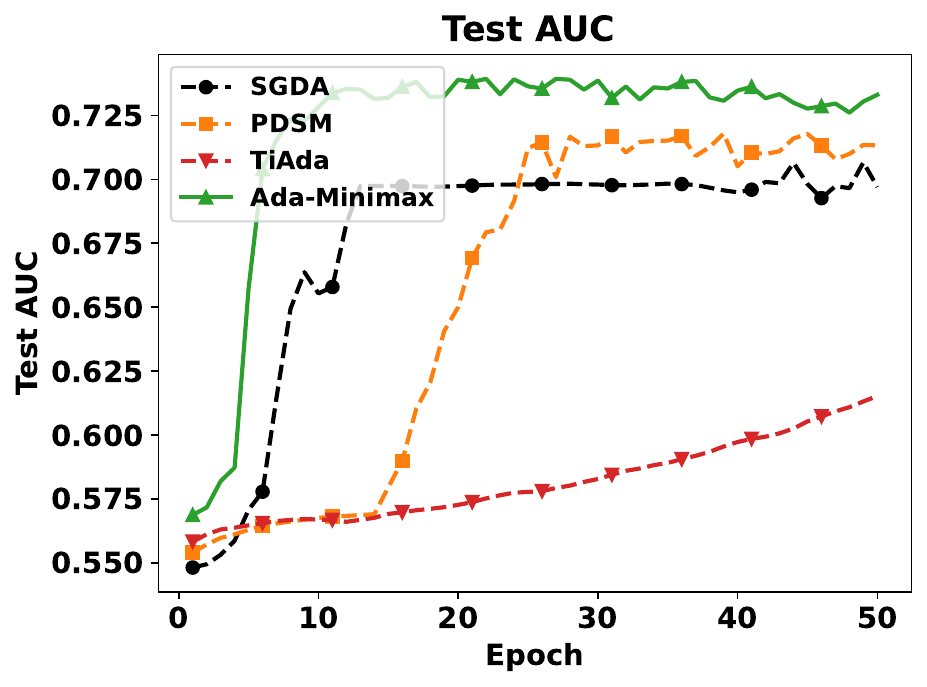}}   
\subfigure[Training AUC vs. Time]{\includegraphics[width=0.24\linewidth]{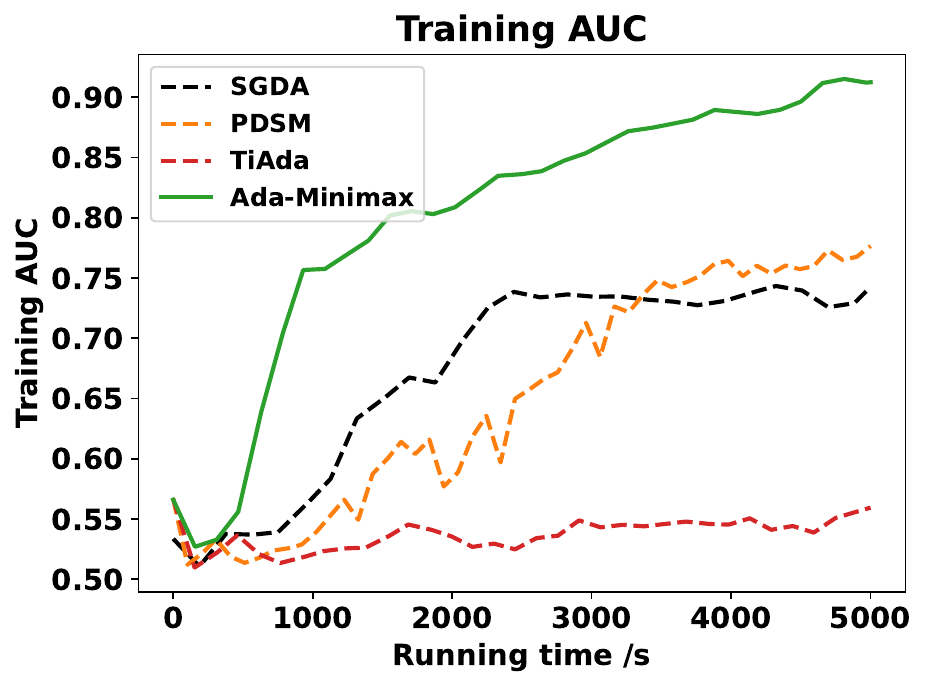}}  
\subfigure[Test AUC vs. Time]{\includegraphics[width=0.24\linewidth]{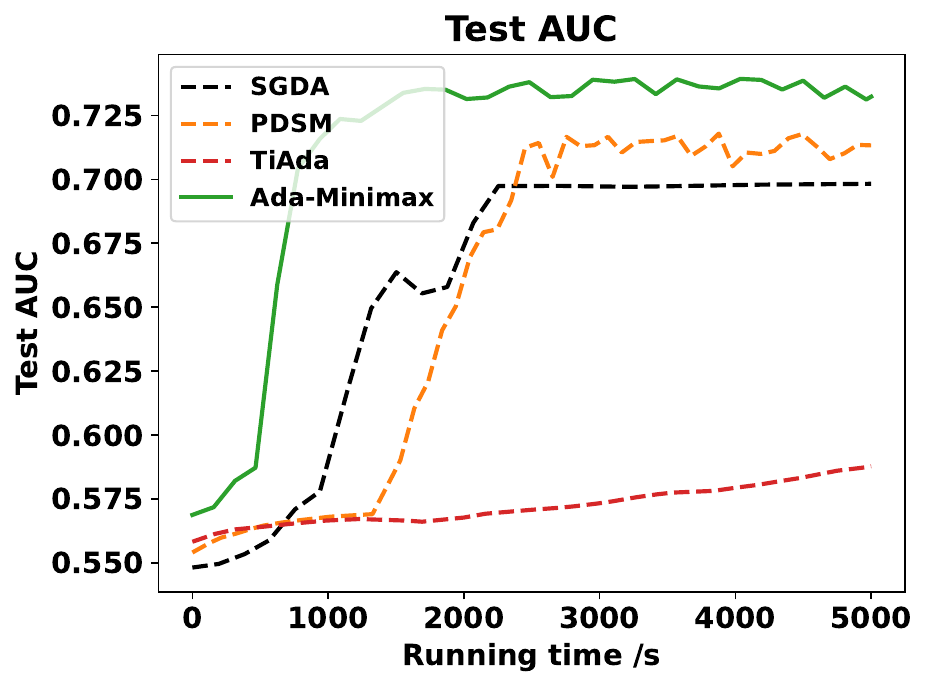}}
\end{center}
\vspace*{-0.15in}
% \vspace*{-0.10in}
% \caption{Comparison of a 2-layer Transformer model on deep AUC maximization on imbalanced Sentiment140 dataset. }
\caption{2-layer Transformer for deep AUC maximization on imbalanced Sentiment140 dataset.}
\label{fig:auc_result}
\end{figure*}

% \vspace*{-0.05in}

\subsection{Deep AUC Maximization}
\label{sec:deepauc}
% mention a practical version of our algorithm by replacing $\tilde{g}_t$ by $g_{t-1}$
% \textbf{include hyperparameter sensitivity test for alpha, eta, xxx, as in Rachel Ward's paper}
The Area Under the ROC Curve (AUC) is a performance measure of classifiers~\citep{hanley1982meaning,hanley1983method}, which is widely used in the imbalanced data classification setting. Deep AUC Maximization (DAM) \citep{zhao2011online,yuan2021large} is a new paradigm for learning a deep neural network by maximizing the AUC score of the model on a dataset. Recent studies \citep{ying2016stochastic,yuan2021large,liu2019stochastic}  have shown great success of deep AUC maximization in various domains (e.g., medical image classification and drug discovery). Following \citep{ying2016stochastic,liu2019stochastic}, AUC maximization can be formulated as a minimax problem,
\begin{equation}
\begin{aligned}
    \min_{\vw\in\mathbb{R}^{d}, (a,b)\in \mathbb{R}^2}\max_{\alpha \in \mathbb{R}} f(\vw, a, b, \alpha) =  \mathbb{E}_{\xi\sim \mathcal{D}}[F(\vw, a, b, \alpha; \xi)],
\end{aligned}
\end{equation}
where $F(\vw, a, b, \alpha; \vz) = (1-p)(h(\vw; \vx)-a )^2 \mathbb{I}_{[y=1]} + p (h(\vw;\vx)-b)^2\mathbb{I}_{[y=-1]}+ 2(1+\alpha)(p h(\vw;\vx)\mathbb{I}_{[y=-1]}-(1-p)h(\vw; \vx)\mathbb{I}_{[y=1]}) - p(1-p)\alpha^2$, $\vw$ is the parameter of a deep neural network (e.g., a two-layer transformer as the predictive model), $h(\vw;\vx)$ is the score function parameterized by $\vw$ with the input data $\vx$, $\xi=(\vx, y)$ is a random sample from training set $\mathcal{D}$ with input $\vx$ and a binary label $y \in \{-1, 1\}$. The imbalanced ratio $p$ is the proportion of the positive samples in the training set. Therefore, $(\vw,a,b)$ and $\alpha$ are primal and dual variables respectively. %and $\alpha$ is the dual variable.

%The function $h(\vw; \vx)$ is the predicted output based on the model ($\vw$) and input ($\vx$). $\vw, a, b, \alpha$ are trainable parameters. 

To verify the effectiveness of our proposed Algorithm \ref{alg:ada_minimax}, we run a practical variant (refer to \cref{app:auc_exp_setting}) of our algorithm in deep AUC maximization experiments on imbalanced text classification, and compare with other minimax baselines, including SGDA~\citep{lin2020gradient}, PDSM~\citep{guo2025unified}), and an adaptive minimax algorithm TiAda \citep{li2022tiada}. We first construct the imbalanced binary classification dataset Sentiment140~\citep{go2009twitter} (under  Creative Commons Attribution 4.0 License). The practical variant of our algorithm replaces the term $\sum_{k=1}^{t}\|g_{x, k}-\tilde{g}_{x,k}\|^2$ in \cref{eq:alpha_minimax} with $\sum_{k=1}^{t}\|g_{x, k}-g_{x,k-1}\|^2$, where $g_{x,k-1}$ denotes the gradient of $x$ computed at the previous iteration (i.e., $(k-1)$-th iteration). Additionally, we modify the step size from $\eta_{x,t}=\eta_x\sqrt{\alpha'_t}/\sqrt{t}$ to $\eta_{x,t}=\eta_x\sqrt{\alpha'_t}/\sqrt{T}$ (note that this change does not affect the convergence of \cref{alg:ada_minimax}). In this subsection, with a slight abuse of notation, we use $\eta_x$ to denote $\eta_x/\sqrt{T}$.
Following the data setting in \citep{yuan2021large}, we randomly remove positive samples (labeled as 1) from the training set until the proportion of positive samples is exactly $0.9$ (i.e., $p=0.9$). In the experiment, we adopt a two-layer transformer as the classifier with the hidden size of 4096 and the output dimension of 2.  The hyperparameter settings of each baseline and experimental details are described in \cref{app:auc_exp_setting}. The comparison results of training and test curve over 50 epochs are shown in \cref{fig:auc_result}. From subfigures (a) and (b), our algorithm Ada-Minimax shows $20\%$ higher training AUC and $2\%$ higher test AUC than the best compared algorithm PDSM. From running time curve (c) and (d), our algorithm demonstrates the fastest convergence rate than other baselines.     

% equivalently absorbing the $1/\sqrt{T}$ factor into the constant $\eta_x$ in the following part.

% \begin{figure*}[!t]
% \begin{center}
% \subfigure[Training AUC]{\includegraphics[width=0.24\linewidth]{neurips_2025_main/figures/Training AUC_imratio_0.9_adaminimax.pdf}}   
% \subfigure[Test AUC]{\includegraphics[width=0.24\linewidth]{neurips_2025_main/figures/Test AUC_imratio_0.9_adaminimax.pdf}}   
% \subfigure[Training AUC vs. Time]{\includegraphics[width=0.24\linewidth]{neurips_2025_main/figures/Training AUC_imratio_0.9_adaminimax_running_time.pdf}}  
% \subfigure[Test AUC vs. Time]{\includegraphics[width=0.24\linewidth]{neurips_2025_main/figures/Test AUC_imratio_0.9_adaminimax_running_time.pdf}}
% \end{center}
% % \vspace*{-0.15in}
% \vspace*{-0.10in}
% % \caption{Comparison of a 2-layer Transformer model on deep AUC maximization on imbalanced Sentiment140 dataset. }
% \caption{2-layer Transformer for deep AUC maximization on imbalanced Sentiment140 dataset.}
% \label{fig:auc_result}
% \end{figure*}

\subsection{Hyperparameter Robustness Analysis}
\label{sec:robustness}

We investigate the robustness of our method to the hyperparameters $\alpha$, $\eta_y$, $\eta_x$, and $\gamma$ by varying each parameter independently while keeping others fixed, as shown in Figure~\ref{fig:robust}. Specifically, Figure~\ref{fig:robust}(a) indicates that changing $\alpha$ within the range $[0.1, 2.0]$ has minimal impact on convergence speed and final AUC performance. In Figure~\ref{fig:robust}(b), varying $\eta_y$ between $0.001$ and $0.05$ yield nearly overlapping curves after the initial training stage. Similarly, Figure~\ref{fig:robust}(c) shows that varying $\eta_x$ from $0.001$ to $0.01$ affects only early-stage training dynamics without compromising the final performance; however, increasing $\eta_x$ to $0.05$ results in a noticeable decline in the final training AUC. Lastly, Figure~\ref{fig:robust}(d) illustrates that the algorithm maintains consistent performance across a wide range of $\gamma$ values $[0.01, 2.0]$. Therefore, these results demonstrate that our algorithm exhibits strong robustness across broad ranges of these hyperparameters, significantly reducing the time required for hyperparameter tuning in practice.

\begin{figure*}[!t]
\begin{center}
\subfigure[Robustness of $\alpha$]{\includegraphics[width=0.24\linewidth]{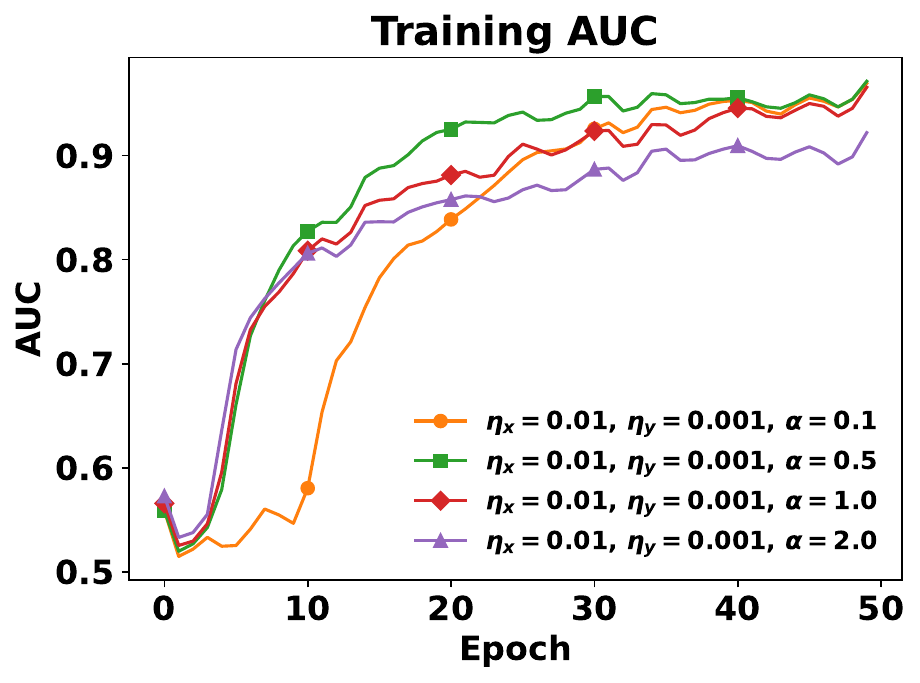}}      
\subfigure[Robustness of $\eta_x$]{\includegraphics[width=0.24\linewidth]{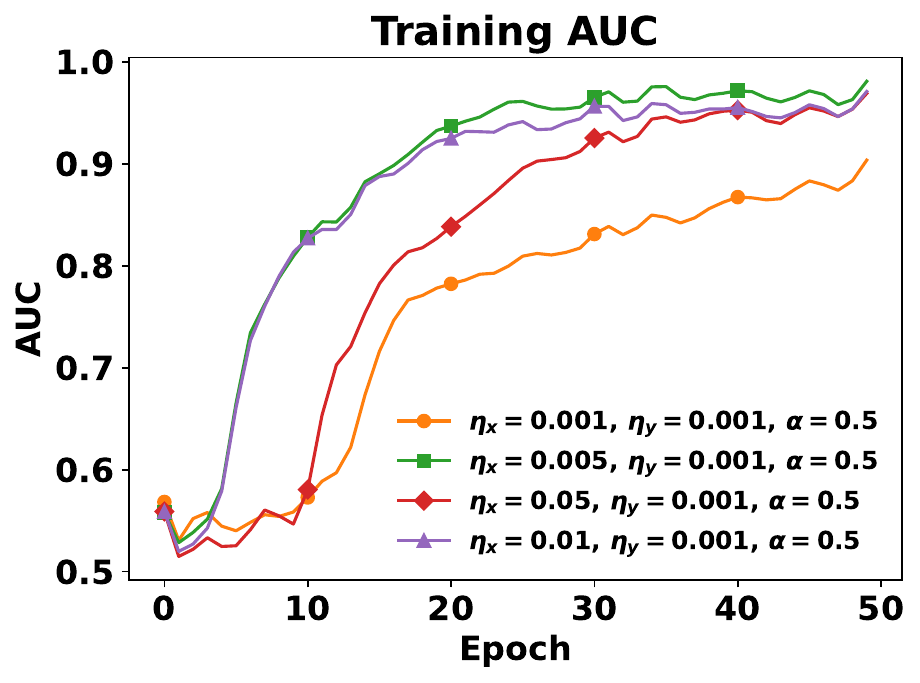}}  
\subfigure[Robustness of $\eta_y$]{\includegraphics[width=0.24\linewidth]{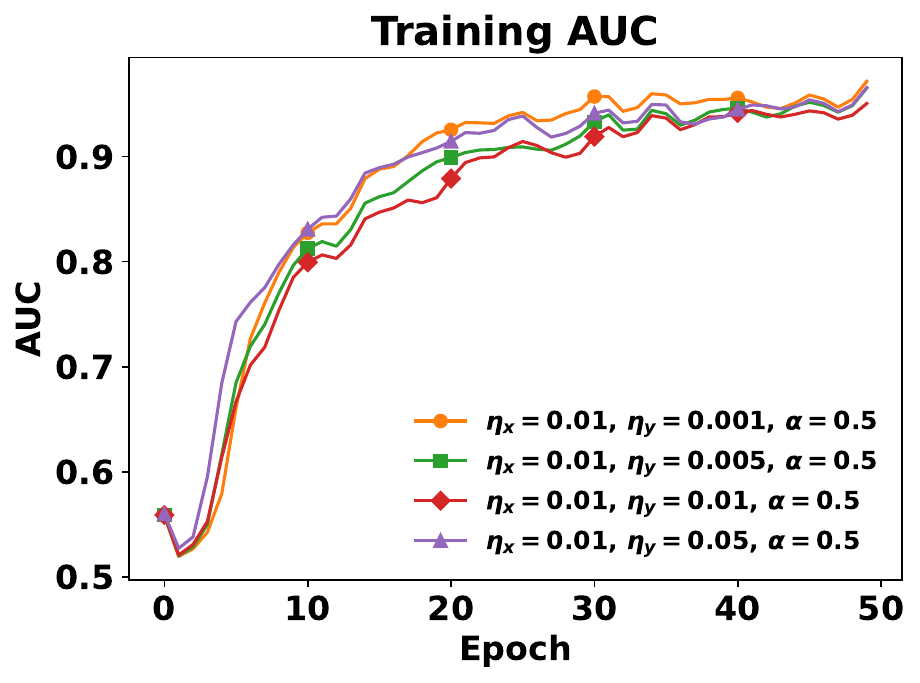}}
\subfigure[Robustness of $\gamma$]{\includegraphics[width=0.24\linewidth]{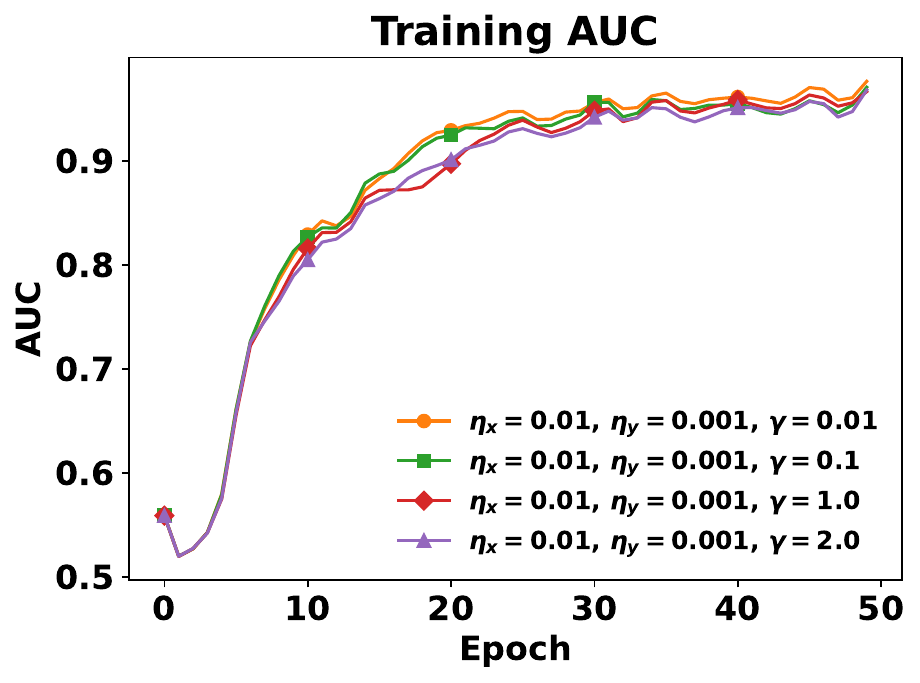}}  
\end{center}
\vspace*{-0.15in}
% \vspace*{-0.10in}
\caption{Robustness of hyperparameters.}
\label{fig:robust}
\end{figure*}

\section{Conclusion}
\label{sec:conclusion}

We introduced two novel adaptive algorithms for nonconvex-strongly-concave minimax optimization and nonconvex-strongly-convex bilevel optimization. Both algorithms achieve sharp and adaptive convergence rates: they automatically adapt to unknown variance in stochastic gradient estimates. Our approach leverages the momentum normalization framework along with novel adaptive schemes for jointly setting the momentum parameter and the learning rate. Experimental results validate and support our theoretical analyses. One limitation of our work is the assumption that the stochastic gradient noise is lower-bounded. In future work, we aim to remove this assumption while maintaining the sharp convergence guarantees.

% We have introduced two novel adaptive algorithms, Ada-Minimax and Adao-Bio, designed for nonconvex–strongly-concave minimax optimization and nonconvex–strongly-convex bilevel optimization, respectively. Both algorithms achieve sharp and adaptive convergence rates: they automatically adapt to unknown variance in stochastic gradient estimates. Our approach leverages the momentum normalization framework along with novel adaptive schemes for jointly setting the momentum parameter and the learning rate. Experimental results validate and support our theoretical analyses. One limitation of our work is the assumption that the stochastic gradient noise is lower-bounded with probability one (i.e., \cref{ass:noise_minimax}(ii) for minimax optimization and \cref{ass:noise_bilevel}(vi) for bilevel optimization). In future work, we aim to remove this assumption while maintaining the sharp convergence guarantees.

\section*{Acknowledgements}

We would like to thank the anonymous reviewers for their helpful comments. We would like to thank Francesco Orabona and El Mehdi Saad for helpful discussions about the concentration inequalities. This work has been supported by the Presidential Scholarship, the ORIEI seed funding, and the IDIA P3 fellowship from George Mason University, and NSF award \#2436217, \#2425687. The Computations were run on Hopper, a research computing cluster provided by the Office of Research Computing at George Mason University (URL: https://orc.gmu.edu).

\bibliography{ref}
\bibliographystyle{plainnat}

%%%%%%%%%%%%%%%%%%%%%%%%%%%%%%%%%%%%%%%%%%%%%%%%%%%%%%%%%%%%

\appendix

\newpage
\tableofcontents

% \newpage

\section{Martingale Concentration Bounds and Basic Inequalities}
\label{app:martingale}
\begin{lemma}[{\citep[Corollary 1]{carmon2022making}}] \label{lem:emp_bernstein}
Let $X_t$ be adapted to $\gF_t$ such that $|X_t|\leq 1$ with probability 1 for all $t$. Then, for every $\delta\in(0,1)$ and any $\hat{X}_t\in\gF_{t-1}$ such that $|\hat{X}_t|\leq 1$ with probability 1,
\begin{align*}
    \pr\left(\exists t<\infty : \left|\sum_{s\leq t}(X_s-\E[X_s \mid \gF_{s-1}])\right| \geq \sqrt{A_{t}(\delta)\sum_{s\leq t}(X_s-\hat{X}_s)^2+B_{t}(\delta)}\right) \leq \delta,
\end{align*}
where $A_t(\delta)=16\log\left(\frac{60\log(6t)}{\delta}\right)$ and $B_t(\delta)=16\log^2\left(\frac{60\log(6t)}{\delta}\right)$.
\end{lemma}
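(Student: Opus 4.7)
The plan is to prove this time-uniform empirical Bernstein inequality by combining two ingredients: (i) a time-uniform Freedman-type bound that controls the martingale by its \emph{predictable} quadratic variation, and (ii) an auxiliary self-bounded martingale argument that swaps the predictable variance for the \emph{empirical} sum $\sum_{s}(X_s-\hat{X}_s)^2$. Throughout, write $\ell_t(\delta) = \log(60\log(6t)/\delta)$ so that $A_t(\delta)=16\ell_t(\delta)$ and $B_t(\delta)=16\ell_t(\delta)^2$.

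First I would define the martingale differences $D_s = X_s - \E[X_s\mid\gF_{s-1}]$, which are bounded in magnitude by $2$, and the predictable quadratic variation $V_t = \sum_{s\leq t}\E[D_s^2\mid\gF_{s-1}]$. I would construct the exponential supermartingale $\exp(\lambda\sum_{s\leq t}D_s - \psi(\lambda)V_t)$ with $\psi(\lambda)$ a sub-exponential cumulant bound valid for bounded increments, and apply the method-of-mixtures / line-crossing argument of Howard et al.\ (2021) with an appropriate mixing distribution to obtain, with probability at least $1-\delta/2$, the time-uniform Freedman bound $|\sum_{s\leq t}D_s| \leq \sqrt{c_1 V_t\,\ell_t(\delta) + c_2\,\ell_t(\delta)^2}$ for all $t<\infty$. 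The mixture choice is what produces the doubly-logarithmic $\log\log t$ rate inside $\ell_t$ rather than a $\log t$.

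Next I would swap $V_t$ for an empirical quantity. The variational property of the conditional mean (it minimizes conditional MSE) gives $\E[D_s^2\mid\gF_{s-1}] \leq \E[(X_s-\hat{X}_s)^2\mid\gF_{s-1}]$ for any $\gF_{s-1}$-measurable $\hat{X}_s$, so $V_t \leq \widetilde{V}_t := \sum_{s\leq t}\E[(X_s-\hat{X}_s)^2\mid\gF_{s-1}]$. To replace $\widetilde{V}_t$ by the observable $U_t := \sum_{s\leq t}(X_s-\hat{X}_s)^2$, I introduce the martingale $N_t = \sum_{s\leq t}\bigl(\E[(X_s-\hat{X}_s)^2\mid\gF_{s-1}] - (X_s-\hat{X}_s)^2\bigr)$, whose increments are bounded by $4$ and are self-bounded by their conditional second moment. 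A second application of the Howard et al.\ time-uniform inequality (with confidence $\delta/2$), together with the self-bounding absorption trick (moving a $\tfrac12\widetilde{V}_t$ term to the left using $2\sqrt{ab}\leq a+b$), yields $\widetilde{V}_t \leq 2 U_t + c_3\ell_t(\delta)^2$ for all $t<\infty$.

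Finally I would chain the two estimates: $V_t\leq\widetilde V_t\leq 2U_t + c_3\ell_t(\delta)^2$, substitute into the Freedman bound, apply $\sqrt{a+b}\leq\sqrt{a}+\sqrt{b}$, union-bound over the two failure events, and track constants to absorb everything into the clean form $\sqrt{A_t(\delta)\,U_t + B_t(\delta)}$. The main obstacle is obtaining the tight iterated-log dependence $\log(\log t/\delta)$ uniformly in $t$ for both the Freedman and the variance-swap steps; a naive dyadic-peeling / stitching of finite-horizon Freedman would only give $\log(t/\delta)$, so one really needs a genuine mixture-supermartingale / boundary-crossing argument in both steps. A secondary difficulty is bookkeeping the constants $c_1,c_2,c_3$ carefully so that the final prefactors collapse to exactly $16$ and the arguments of the logarithms to exactly $60\log(6t)/\delta$.
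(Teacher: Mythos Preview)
The paper does not prove this lemma at all: it is stated as a direct citation of \citep[Corollary 1]{carmon2022making} and used as a black box throughout the analysis. There is no ``paper's own proof'' to compare against.

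Your sketch is a reasonable outline of how the original result is actually established in Carmon--Hinder (building on the time-uniform empirical-Bernstein machinery of Howard et al.), namely a mixture-supermartingale Freedman bound in terms of predictable variance combined with a self-bounding swap to the empirical deviation $\sum_s(X_s-\hat X_s)^2$. You have correctly identified the two genuine technical obstacles: obtaining the iterated-logarithm rate uniformly in $t$ (which requires a true mixture/boundary-crossing argument rather than stitching) and the constant bookkeeping to land on exactly $16$ and $60\log(6t)$. For the purposes of this paper, however, none of that is needed---you can simply invoke the cited corollary.
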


\begin{lemma}[{\cite[Lemma 2.4]{liu2023near}}] \label{lem:MDS}
Suppose $X_1, \dots , X_T$ is a martingale difference sequence adapted to a filtration $\gF_1, \dots, \gF_T$ in a Hilbert space such that $\|X_t\| \leq R_t$ almost surely for some $R_t\in\gF_{t-1}$. Then for any $\delta\in(0,1)$, with probability at least $1-\delta$, for any fixed $t$ we have
\begin{align*}
    \left\|\sum_{s=1}^{t}X_s\right\| \leq 4\sqrt{\log\frac{2}{\delta}\sum_{s=1}^{T}R_s^2}.
\end{align*}
\end{lemma}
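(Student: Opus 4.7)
The plan is to prove this as the Hilbert-space analogue of the Azuma--Hoeffding inequality, essentially following Pinelis's construction of an exponential supermartingale that exploits the $2$-smoothness of the Hilbert-space norm $\|u+v\|^2 = \|u\|^2 + 2\langle u,v\rangle + \|v\|^2$. This identity is what allows one to control $\|S_t\| := \|\sum_{s=1}^t X_s\|$ directly in norm, bypassing any net or scalar-projection argument that would introduce a dependence on the ambient dimension.

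First I would construct an exponential supermartingale. Fix any $\lambda>0$ and define
\begin{align*}
M_t(\lambda) \;=\; \cosh\!\bigl(\lambda\|S_t\|\bigr)\,\exp\!\Biggl(-\frac{\lambda^2}{2}\sum_{s=1}^{t}R_s^2\Biggr).
\end{align*}
The core technical step is Pinelis's conditional inequality: for any $\gF_{t-1}$-measurable vector $v$ and any conditionally zero-mean $X_t$ with $\|X_t\|\le R_t$ a.s.\ (with $R_t\in\gF_{t-1}$),
\begin{align*}
\E\bigl[\cosh(\lambda\|v+X_t\|)\,\big|\,\gF_{t-1}\bigr] \;\le\; \cosh\!\bigl(\lambda\|v\|\bigr)\,\exp\!\bigl(\lambda^2 R_t^2/2\bigr).
\end{align*}
One proves this by expanding $\|v+X_t\|^2 = \|v\|^2 + 2\langle v,X_t\rangle + \|X_t\|^2$, applying the scalar Hoeffding MGF bound to the conditionally zero-mean term $\langle v,X_t\rangle$ (bounded by $\|v\|R_t$), and combining with convexity of $\cosh$. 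The predictability $R_t\in\gF_{t-1}$ is what makes Hoeffding's lemma applicable conditionally. Iterating the bound across $s=1,\dots,t$ gives $\E[M_t(\lambda)]\le 1$.

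Next I would apply Markov's inequality together with the elementary estimate $\cosh(x)\ge\tfrac{1}{2}e^{x}$: for any fixed $\lambda>0$,
\begin{align*}
\pr\!\Biggl(\|S_t\| \;\ge\; \frac{\log(2/\delta)}{\lambda} \,+\, \frac{\lambda}{2}\sum_{s=1}^{T}R_s^2\Biggr) \;\le\; \delta,
\end{align*}
where I have also used the monotone inequality $\sum_{s=1}^{t}R_s^2\le \sum_{s=1}^{T}R_s^2$. Since $\sum R_s^2$ is itself random, $\lambda$ cannot be optimized in advance; I would therefore apply this inequality along a geometric grid $\lambda\in\{2^{-K},\dots,2^{K}\}$ and union-bound, guaranteeing that some grid value lies within a factor of $2$ of the oracle optimizer $\lambda^{\ast}=\sqrt{2\log(2/\delta)/\sum R_s^2}$. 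Balancing the two terms yields $\|S_t\|\le C\sqrt{\log(2/\delta)\sum_{s=1}^{T}R_s^2}$ for an absolute constant $C\le 4$.

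The main obstacle is precisely this last step: optimizing over $\lambda$ in the presence of a random normalizer $\sum R_s^2$. The dyadic union-bound (or equivalently a method-of-mixtures argument with a $\lambda$-prior) handles it cleanly, and the mild $\log\log$-type overhead is absorbed into the loose prefactor $4$; if $R_s$ were deterministic the argument collapses to a single application of Markov and recovers the sharper Pinelis constant $\sqrt{2}$.
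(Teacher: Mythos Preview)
Your proposal is correct: the Pinelis $\cosh$-supermartingale construction combined with a geometric union bound (or method of mixtures) over $\lambda$ is the standard route to the Hilbert-space Azuma--Hoeffding inequality with predictable bounds $R_t\in\gF_{t-1}$, and the constant $4$ is attainable once the grid overhead is folded in.

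The paper, however, does not prove this lemma at all---its entire ``proof'' is the one-line observation that the statement is the special case $R_t\in\gF_{t-1}$ of \cite[Lemma~2.4]{liu2023nearlyoptimal}. So there is no technical comparison to make: you have supplied a genuine self-contained argument where the paper defers to a citation. For what it is worth, the cited reference proves its Lemma~2.4 by essentially the same Pinelis-type supermartingale machinery you outline, so your sketch is faithful to the underlying proof as well.
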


\begin{proof}[Proof of \cref{lem:MDS}]
The proof concludes by setting $R_t \in \mathcal{F}_{t-1}$ in \cite[Lemma 2.4]{liu2023near}.
\end{proof}

\begin{lemma}[{\cite[Lemma 4]{attia2023sgd}}] \label{lem:sum_eta_g_t}
Let $g_1,\dots,g_T \in \R^d$ be an arbitrary sequence of vectors, and let $G_0 > 0$. For all $t \geq 1$, define
\begin{align*}
    G_t = \sqrt{G_0^2 + \sum_{s=1}^t \|g_s\|^2}.
\end{align*}
Then
\begin{align*}
    \sum_{t=1}^T \frac{\|g_t\|^2}{G_t} \leq 2\sqrt{\sum_{t=1}^T \|g_t\|^2},
    \qquad
    \text{and}
    \qquad
    \sum_{t=1}^T \frac{\|g_t\|^2}{G_t^2} \leq 2\log\frac{G_T}{G_0}.
\end{align*}
\end{lemma}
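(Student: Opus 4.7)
Both inequalities are standard AdaGrad-style ``bookkeeping'' bounds that follow by telescoping, leveraging the key identity $G_t^2 - G_{t-1}^2 = \|g_t\|^2$ (with $G_0$ interpreted as the initial value in the recursion). The two bounds are independent of each other and use two different elementary inequalities applied term-by-term. I do not anticipate a real obstacle; the only subtle point is keeping track of the constant $G_0$ in the first bound so that the right-hand side contains $\sqrt{\sum \|g_t\|^2}$ rather than $\sqrt{G_0^2 + \sum \|g_t\|^2}$.

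\textbf{First inequality.} I will start from the elementary inequality $\sqrt{b} - \sqrt{b-a} \geq a/(2\sqrt{b})$, valid for $0 \leq a \leq b$ (this is immediate from the concavity of $\sqrt{\cdot}$, or by rationalizing the numerator). Applying it with $b = G_t^2$ and $a = \|g_t\|^2$ (so that $b - a = G_{t-1}^2$), I obtain
\begin{align*}
    \frac{\|g_t\|^2}{G_t} \leq 2\bigl(G_t - G_{t-1}\bigr).
\end{align*}
Summing over $t = 1, \dots, T$ telescopes to give $\sum_{t=1}^T \|g_t\|^2/G_t \leq 2(G_T - G_0)$. Finally, since $\sqrt{u+v} \leq \sqrt{u} + \sqrt{v}$ for $u,v \geq 0$, I have $G_T = \sqrt{G_0^2 + \sum_{t=1}^T \|g_t\|^2} \leq G_0 + \sqrt{\sum_{t=1}^T \|g_t\|^2}$, hence $G_T - G_0 \leq \sqrt{\sum_{t=1}^T \|g_t\|^2}$, yielding the claimed bound.

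\textbf{Second inequality.} Here I will use $\log(x) \geq 1 - 1/x$ for all $x > 0$ (equivalently, $\log(x/y) \geq (x-y)/x$ for $x,y > 0$). Applying it with $x = G_t^2$ and $y = G_{t-1}^2$ gives
\begin{align*}
    \log\frac{G_t^2}{G_{t-1}^2} \geq \frac{G_t^2 - G_{t-1}^2}{G_t^2} = \frac{\|g_t\|^2}{G_t^2}.
\end{align*}
Summing over $t = 1, \dots, T$ and telescoping the left-hand side yields $\sum_{t=1}^T \|g_t\|^2/G_t^2 \leq \log(G_T^2/G_0^2) = 2\log(G_T/G_0)$, which completes the proof.
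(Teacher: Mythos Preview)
Your proof is correct. The paper does not supply its own proof of this lemma---it is simply cited from \cite{attia2023sgd}---so there is nothing to compare against; your argument is the standard telescoping proof of these AdaGrad bookkeeping inequalities and matches what one finds in the cited reference.
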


\begin{lemma}[{\cite[Lemma 6]{attia2023sgd}}] \label{lem:sum_eta_g_t_2}
For Ada-NSGDM (\cref{alg:ada_nsgdm}) we have
\begin{align} \label{eq:C1}
    \sum_{t=1}^T \frac{\|g_t\|^2}{G_t^2}
    \leq C_1
    \coloneqq \log\left(1 + \frac{2\bar{\sigma}^2T + 4\eta^2L^2T^3 + 8L\Delta_1T}{\gamma^2}\right),
\end{align}
where $\Delta_1=f(x_1)-f^*$.
% with $C_1$ given in \cref{eq:C1}.
\end{lemma}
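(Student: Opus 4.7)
The plan is to combine the \emph{second} inequality from \cref{lem:sum_eta_g_t} (specialized with $G_0=\gamma$ so that $G_t=\sqrt{\gamma^2+\sum_{s=1}^{t}\|g_s\|^2}$) with a crude polynomial bound on the cumulative gradient mass $\sum_{s=1}^{T}\|g_s\|^2$. Concretely, \cref{lem:sum_eta_g_t} gives
\begin{equation*}
\sum_{t=1}^{T}\frac{\|g_t\|^2}{G_t^2}\leq 2\log\frac{G_T}{\gamma}=\log\frac{G_T^2}{\gamma^2}=\log\!\left(1+\frac{\sum_{s=1}^{T}\|g_s\|^2}{\gamma^2}\right),
\end{equation*}
so the entire task reduces to showing
\begin{equation*}
\sum_{s=1}^{T}\|g_s\|^2\leq 2\bar{\sigma}^2T+4\eta^2L^2T^3+8L\Delta_1 T.
\end{equation*}

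To establish that inequality, I would first split $\|g_s\|^2\leq 2\|\nabla f(x_s)\|^2+2\|g_s-\nabla f(x_s)\|^2$ and use the almost-sure noise bound from \cref{ass:noise} to bound the second term by $2\bar{\sigma}^2$, producing $\sum_s\|g_s\|^2\leq 2\bar{\sigma}^2T+2\sum_s\|\nabla f(x_s)\|^2$. Next, control $\|\nabla f(x_s)\|$ via $L$-smoothness: $\|\nabla f(x_s)\|\leq\|\nabla f(x_1)\|+L\|x_s-x_1\|$, so $\|\nabla f(x_s)\|^2\leq 2\|\nabla f(x_1)\|^2+2L^2\|x_s-x_1\|^2$. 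The baseline gradient term is bounded by the standard consequence of $L$-smoothness plus lower boundedness, $\|\nabla f(x_1)\|^2\leq 2L(f(x_1)-f^*)=2L\Delta_1$. For the iterate drift, the key observation is that Ada-NSGDM uses normalized updates, so $\|x_{k+1}-x_k\|=\eta_k$ exactly; then by $\eta_k=\eta\sqrt{\alpha_k}/\sqrt{k}\leq\eta$ I telescope to get $\|x_s-x_1\|\leq\sum_{k=1}^{s-1}\eta_k\leq \eta(s-1)\leq\eta T$. Combining yields $\|\nabla f(x_s)\|^2\leq 4L\Delta_1+2L^2\eta^2 T^2$, and summing over $s=1,\dots,T$ gives $\sum_s\|\nabla f(x_s)\|^2\leq 4L\Delta_1 T+2L^2\eta^2 T^3$, which plugged back in completes the bound.

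There is no genuine technical obstacle; this is a routine ``log-AdaGrad'' style estimate. The only points that require a little care are (i) recognizing that one should use the loose uniform bound $\eta_k\leq\eta$ rather than the sharper $\eta/\sqrt{k}$, since the resulting $T^3$ factor ultimately sits inside a logarithm and is therefore harmless, and (ii) making sure to invoke the \emph{first}-squared form of \cref{lem:sum_eta_g_t} so that the factor $2\log(G_T/\gamma)$ rewrites cleanly as $\log(1+\gamma^{-2}\sum_s\|g_s\|^2)$ with no stray constants.
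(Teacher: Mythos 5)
Your proof is correct and follows essentially the same route as the paper's source for this lemma: the paper gives no independent argument but cites \citep[Lemma 6]{attia2023sgd}, whose proof is exactly this combination of the AdaGrad log bound $\sum_t\|g_t\|^2/G_t^2\leq 2\log(G_T/\gamma)$ with the crude bound $\sum_s\|g_s\|^2\leq 2\bar{\sigma}^2T+4\eta^2L^2T^3+8L\Delta_1T$ obtained from the almost-sure noise bound, $L$-smoothness, $\|\nabla f(x_1)\|^2\leq 2L\Delta_1$, and the per-step movement bound. Your only adaptation — replacing the AdaGrad-Norm step bound $\|x_{k+1}-x_k\|=\eta\|g_k\|/G_k\leq\eta$ by the normalized-momentum bound $\|x_{k+1}-x_k\|=\eta_k\leq\eta$ — is exactly what is needed here, and your constants match the stated $C_1$.
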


% \section{Adaptive Normalized SGD with Momentum}
\section{Proofs of \cref{sec:ada_nsgdm}}
\label{app:ada_nsgdm}
% \begin{assumption} \label{ass:smoothness}
% The objective function $f$ is $L$-smooth and bounded from below, i.e., $f^*\coloneqq \inf_{x}f(x) > -\infty$.
% \end{assumption}

% \begin{assumption} \label{ass:noise}
% The gradient oracle is unbiased, i.e., $\E[\nabla F(x;\xi) \mid x] = \nabla f(x)$, and with probability one, satisfies $\ubar{\sigma} \leq \|\nabla F(x;\xi)-\nabla f(x)\|\leq \bar{\sigma}$. 
% \end{assumption}

\subsection{Technical Lemmas}

% \begin{lemma}[{\citep[Corollary 1]{carmon2022making}}] \label{lem:emp_bernstein}
% Let $X_t$ be adapted to $\gF_t$ such that $|X_t|\leq 1$ with probability 1 for all $t$. Then, for every $\delta\in(0,1)$ and any $\hat{X}_t\in\gF_{t-1}$ such that $|\hat{X}_t|\leq 1$ with probability 1,
% \begin{align*}
%     \pr\left(\exists t<\infty : \left|\sum_{s\leq t}(X_s-\E[X_s \mid \gF_{s-1}])\right| \geq \sqrt{A_{t}(\delta)\sum_{s\leq t}(X_s-\hat{X}_s)^2+B_{t}(\delta)}\right) \leq \delta,
% \end{align*}
% where $A_t(\delta)=16\log\left(\frac{60\log(6t)}{\delta}\right)$ and $B_t(\delta)=16\log^2\left(\frac{60\log(6t)}{\delta}\right)$.
% \end{lemma}

% \begin{lemma}[{\cite[Lemma 2.4]{liu2023nearlyoptimal}}] \label{lem:MDS}
% Suppose $X_1, \dots , X_T$ is a martingale difference sequence adapted to a filtration $\gF_1, \dots, \gF_T$ in a Hilbert space such that $\|X_t\| \leq R_t$ almost surely for some $R_t\in\gF_{t-1}$. Then for any $\delta\in(0,1)$, with probability at least $1-\delta$, for all $t$ we have
% \begin{align*}
%     \left\|\sum_{s=1}^{t}X_s\right\| \leq 4\sqrt{\log\frac{2}{\delta}\sum_{s=1}^{T}R_s^2}.
% \end{align*}
% \end{lemma}

% \begin{lemma} \label{lem:momentum_recursion}
% Define $\hat{\epsilon}_t = m_t-\nabla f(x_t)$ and $\epsilon_t = g_t-\nabla f(x_t)$. Further, let $S_t = \nabla f(x_{t-1})-\nabla f(x_t)$.
% For all $t\geq 1$, it holds that 
% \begin{align*}
%     \hat{\epsilon}_t = \beta_{2:t}\hat{\epsilon}_1 + \sum_{k=2}^{t}\beta_{(k+1):t}\alpha_{k}\epsilon_{k} + \sum_{k=2}^{t}\beta_{k:t}S_{k}.
% \end{align*}
% \end{lemma}

\bias*

\begin{proof}[Proof of \cref{lem:momentum_recursion}]
The proof follows from a straightforward calculation:
\begin{align*}
    \hat{\epsilon}_t
    &= m_t - \nabla f(x_t) \\
    &= \beta_tm_{t-1} + (1-\beta_t)g_t - \nabla f(x_t) \\
    &= \beta_t(\hat{\epsilon}_{t-1} + \nabla f(x_{t-1})) + (1-\beta_t)(\epsilon_t+\nabla f(x_t)) - \nabla f(x_t) \\
    &= \beta_t\hat{\epsilon}_{t-1} + (1-\beta_t)\epsilon_t + \beta_tS_t.
\end{align*}
Unrolling the recursion and using $\alpha_t=1-\beta_t$ yields the result.
\end{proof}

% \begin{lemma} \label{lem:var}
% Let $0\leq \ubar{\alpha}_t\leq \alpha_t\leq \bar{\alpha}_t$ and $0\leq \ubar{\beta}_t\leq \beta_t\leq \bar{\beta}_t$, where $\ubar{\alpha}_t, \bar{\alpha}_t, \ubar{\beta}_t, \bar{\beta}_t$ are independent of $\gF_t$. Then with probability at least $1-2\delta$, it holds for all $t\leq T$ that
% \begin{align*}
%     % \textstyle
%     \left\|\sum_{k=2}^{t}\beta_{(k+1):t}\alpha_{k}\epsilon_{k}\right\|
%     \leq \bar{\sigma}\sqrt{\left(1+32\log\frac{2T}{\delta}\right)\sum_{k=2}^{t}\bar{\beta}_{(k+1):t}^2\bar{\alpha}_{k}^2}.
% \end{align*}
% \end{lemma}

\noise*

\begin{proof}[Proof of \cref{lem:var}]
Define $\gamma_{k,t} \coloneqq \beta_{(k+1):t}\alpha_{k}$ and $\gI_t = \{(i,j) \mid 3\leq i\leq t,\ 2\leq j< i\}$. Then for all $k\leq t$,
\begin{align} \label{eq:gamma_def}
    \ubar{\beta}_{(k+1):t}\ubar{\alpha}_{k} \eqqcolon \ubar{\gamma}_{k,t} \leq \gamma_{k,t} \leq \bar{\gamma}_{k,t} \coloneqq \bar{\beta}_{(k+1):t}\bar{\alpha}_{k}.
\end{align}
% We write 
% \begin{align*}
%     \left\|\sum_{k=2}^{t}\beta_{(k+1):t}\alpha_{k}\epsilon_{k}\right\|^2
%     &= \sum_{k=2}^{t}\gamma_{k,t}^2\|\epsilon_{k}\|^2 + 2\sum_{i=3}^{t}\sum_{j=2}^{i-1}\gamma_{i,t}\gamma_{j,t}\langle \epsilon_i, \epsilon_j \rangle.
% \end{align*}
By \cref{lem:cross_bound}, there exists a set $\{b_{ij,t}^*\}_{(i,j)\in\gI_t}$ with each $b_{ij,t}^*$ satisfying either $b_{ij,t}^*=\ubar{\gamma}_{i,t}\ubar{\gamma}_{j,t}$ or $b_{ij,t}^*=\bar{\gamma}_{i,t}\bar{\gamma}_{j,t}$ for every pair $(i,j)$, such that
\begin{align*}
    \sum_{i=3}^{t}\sum_{j=2}^{i-1}\gamma_{i,t}\gamma_{j,t}\langle \epsilon_i, \epsilon_j \rangle
    \leq \sum_{i=3}^{t}\sum_{j=2}^{i-1}b_{ij,t}^*\langle \epsilon_i, \epsilon_j \rangle.
\end{align*}
Applying \cref{lem:MDS} with $X_i = \left\langle \epsilon_i, \sum_{j=2}^{i-1}b_{ij,t}^*\epsilon_j \right\rangle$ and $R_i = \bar{\sigma}\left\|\sum_{j=2}^{i-1}b_{ij,t}^*\epsilon_j\right\| \in \gF_{i-1}$, 
% \begin{align*}
%     X_i = \left\langle \epsilon_i, \sum_{j=2}^{i-1}b_{ij,t}^*\epsilon_j \right\rangle
%     \quad\text{and}\quad
%     R_i = \bar{\sigma}\left\|\sum_{j=2}^{i-1}b_{ij,t}^*\epsilon_j\right\| \in \gF_{i-1},
% \end{align*}
and using a union bound over $t$, with probability at least $1-\delta$, for all $t\leq T$,
\begin{align} \label{eq:var_high_prob_1}
    \sum_{i=3}^{t}\sum_{j=2}^{i-1}b_{ij,t}^*\langle \epsilon_i, \epsilon_j \rangle
    = \sum_{i=3}^{t}\left\langle \epsilon_i, \sum_{j=2}^{i-1}b_{ij,t}^*\epsilon_j \right\rangle
    % \leq \left|\sum_{i=3}^{t}\left\langle \epsilon_i, \sum_{j=2}^{i-1}b_{ij,t}^*\epsilon_j \right\rangle\right|
    \leq 4\sqrt{\log\frac{2T}{\delta}\sum_{i=3
    }^{t}\bar{\sigma}^2\left\|\sum_{j=2}^{i-1}b_{ij,t}^*\epsilon_j\right\|^2}.
\end{align}
Applying \cref{lem:MDS} again with $X_j = b_{ij,t}^*\epsilon_j$ and $R_j = b_{ij,t}^*\bar{\sigma} \in \R$, 
% \begin{align*}
%     X_j = b_{ij,t}^*\epsilon_j
%     \quad\text{and}\quad
%     R_j = b_{ij,t}^*\bar{\sigma} \in \R,
% \end{align*}
and using a union bound over $i$, with probability at least $1-\delta$, for all $i\leq T$,
\begin{align} \label{eq:var_high_prob_2}
    \left\|\sum_{j=2}^{i-1}b_{ij,t}^*\epsilon_j\right\|^2
    \leq 16\log\frac{2T}{\delta}\sum_{j=2}^{i-1}(b_{ij,t}^*\bar{\sigma})^2.
\end{align}
Combing \cref{eq:var_high_prob_1,eq:var_high_prob_2}, with probability at least $1-2\delta$ (via a union bound), for all $t\leq T$,
\begin{align*}
    \sum_{i=3}^{t}\sum_{j=2}^{i-1}b_{ij,t}^*\langle \epsilon_i, \epsilon_j \rangle
    &\leq 16\bar{\sigma}^2\log\frac{2T}{\delta}\sqrt{\sum_{i=3}^{t}\sum_{j=2}^{i-1}(b_{ij,t}^*)^2} \\
    &\leq 16\bar{\sigma}^2\log\frac{2T}{\delta}\sqrt{\sum_{i=3}^{t}\sum_{j=2}^{i-1}(\bar{\gamma}_{i,t}\bar{\gamma}_{j,t})^2}
    % \leq 16\bar{\sigma}^2\log\frac{2T}{\delta}\sum_{i=2}^{t}\bar{\gamma}_{i,t}^2
    \leq 16\log\frac{2T}{\delta}\sum_{i=2}^{t}\bar{\gamma}_{i,t}^2\bar{\sigma}^2,
\end{align*}
where the second inequality uses $b_{ij,t}^* \leq \bar{\gamma}_{i,t}\bar{\gamma}_{j,t}$. 
Hence, with probability at least $1-2\delta$,
\begin{align*}
    \left\|\sum_{k=2}^{t}\beta_{(k+1):t}\alpha_{k}\epsilon_{k}\right\|^2
    &= \sum_{k=2}^{t}\gamma_{k,t}^2\|\epsilon_{k}\|^2 + 2\sum_{i=3}^{t}\sum_{j=2}^{i-1}\gamma_{i,t}\gamma_{j,t}\langle \epsilon_i, \epsilon_j \rangle \\
    &\leq \sum_{k=2}^{t}\gamma_{k,t}^2\bar{\sigma}^2 + 32\log\frac{2T}{\delta}\sum_{i=2}^{t}\bar{\gamma}_{i,t}^2\bar{\sigma}^2 \\
    &\leq \left(1+32\log\frac{2T}{\delta}\right)\sum_{k=2}^{t}\bar{\gamma}_{k,t}^2\bar{\sigma}^2.
    % &\leq \bar{\sigma}^2\left(1+32\log\frac{2T}{\delta}\right)\sum_{i=2}^{t}\bar{\gamma}_{i,t}^2.
\end{align*}
Plugging in the definition of $\bar{\gamma}_{k,t}$ as in \cref{eq:gamma_def} yields the result.
\end{proof}

% \begin{lemma} \label{lem:momentum_bound}
% With probability at least $1-\delta$, for all $t\leq T$,
% \begin{align*}
%     \frac{\alpha}{\sqrt{\alpha^2+4\bar{\sigma}^2t}} 
%     \eqqcolon \ubar{\alpha}_t &\leq \alpha_t \leq \bar{\alpha}_t \coloneqq 
%     \mathbb{I}(t<t_0) + \frac{\alpha}{\sqrt{\alpha^2+\ubar{\sigma}^2t}}\mathbb{I}(t\geq t_0), \\
%     \left(1-\frac{\alpha}{\sqrt{\alpha^2+\ubar{\sigma}^2t}}\right)\mathbb{I}(t\geq t_0) 
%     \eqqcolon \ubar{\beta}_t &\leq \beta_t \leq \bar{\beta}_t \coloneqq 
%     1-\frac{\alpha}{\sqrt{\alpha^2+4\bar{\sigma}^2t}}.
% \end{align*}
% % \begin{align*}    
% %     \frac{\alpha}{\sqrt{\alpha^2+4\bar{\sigma}^2t}} 
% %     \eqqcolon \ubar{\alpha}_t \leq \alpha_t \leq \bar{\alpha}_t \coloneqq 
% %     \mathbb{I}(t<t_0) + \frac{\alpha}{\sqrt{\alpha^2+\ubar{\sigma}^2t}}\mathbb{I}(t\geq t_0),
% % \end{align*}
% % \begin{align*}
% %     \left(1-\frac{\alpha}{\sqrt{\alpha^2+\ubar{\sigma}^2t}}\right)\mathbb{I}(t\geq t_0) 
% %     \eqqcolon \ubar{\beta}_t \leq \beta_t \leq \bar{\beta}_t \coloneqq 
% %     1-\frac{\alpha}{\sqrt{\alpha^2+4\bar{\sigma}^2t}}.
% % \end{align*}
% \end{lemma}

\momentum*

\begin{proof}[Proof of \cref{lem:momentum}]
Consider the case $0<\ubar{\sigma}\leq \bar{\sigma}$. By \cref{ass:noise} and Young's inequality,
\begin{align} \label{eq:momentum_upper}
    \sum_{k=1}^{t}\|g_k-\tilde{g}_k\|^2
    \leq 2\sum_{k=1}^{t}\|g_k-\nabla f(x_k)\|^2 + \|\tilde{g}_k-\nabla f(x_k)\|^2
    \leq 4\bar{\sigma}^2t.
\end{align}
We proceed to derive high probability lower bound for $\sum_{k=1}^{t}\|g_k-\tilde{g}_k\|^2$. 
Denote $\sigma_t^2=\E_{t-1}[\|g_t-\nabla f(x_t)\|^2]$. Let $Z_t = \|g_t-\tilde{g}_t\|^2-2\sigma_t^2$, then $\{Z_t\}_{t\geq 1}$ is a martingale difference sequence since
\begin{align*}
    \E_{t-1}[Z_t] 
    &= \E_{t-1}[\|g_t-\tilde{g}_t\|^2-2\sigma_t^2] \\
    &= \E_{t-1}[\|g_t-\nabla f(x_t)\|^2 + \|\tilde{g}_t-\nabla f(x_t)\|^2 - 2\langle g_t-\nabla f(x_t), \tilde{g}_t-\nabla f(x_t) \rangle] - 2\sigma_t^2 \\
    &= 0.
\end{align*}
Using \cref{ass:noise} and Young's inequality again, we have
\begin{align*}
    Z_t\geq -2\sigma_t^2
    \quad\text{and}\quad
    Z_t\leq 2\|g_t-\nabla f(x_t)\|^2 + 2\|\tilde{g}_t-\nabla f(x_t)\|^2 - 2\sigma_t^2 \leq 4\bar{\sigma}^2-2\sigma_t^2.
\end{align*}
This implies that
\begin{align*}
    |Z_t|
    \leq \max\left\{2\sigma_t^2, 4\bar{\sigma}^2-2\sigma_t^2\right\}
    = 4\bar{\sigma}^2-2\sigma_t^2,
\end{align*}
where the last equality is due to $\sigma_t\leq \bar{\sigma}$ almost surely.
% In order to invoke \cref{lem:emp_bernstein}, 
Define $X_t=Z_t/(4\bar{\sigma}^2-2\sigma_t^2)$, then $|X_t|\leq 1$ with probability 1. Applying \cref{lem:emp_bernstein} with the $X_s$ we defined and $\hat{X}_s=0$, for any $\delta\in(0,1)$, with probability at least $1-\delta$, for all $t\leq T$,
% \begin{align} \label{eq:Xs-bound}
%     \left|\sum_{k\leq t}X_k\right| 
%     \leq \sqrt{A_t(\delta)\sum_{k\leq t}X_k^2 + B_t(\delta)}
%     \leq \sqrt{A_t(\delta)\cdot t + B_t(\delta)},
% \end{align}
\begin{align} \label{eq:Xs-bound}
    \left|\sum_{k=1}^{t}X_k\right| 
    \leq \sqrt{A_t(\delta)\sum_{k=1}^{t}X_k^2 + B_t(\delta)}
    \leq \sqrt{A_t(\delta)\cdot t + B_t(\delta)},
\end{align}
where the last inequality uses $\sum_{k=1}^{t}X_k^2\leq t$ since $|X_k|\leq 1$. 
Recall the definition of $t_0$ and $c_0$ as in \cref{eq:t0} ($t_0$ is the solution to the equation $A_T(\delta)\cdot t+B_T(\delta)=c_0^2t^2$), for all $t\geq t_0$, 
\begin{align*} 
    \sqrt{A_t(\delta)\cdot t + B_t(\delta)} 
    \leq \sqrt{A_T(\delta)\cdot t + B_T(\delta)} 
    \leq c_0t
    % = t \cdot \min\left\{\frac{1}{2}, \frac{\ubar{\sigma}^2}{4\bar{\sigma}^2-2\ubar{\sigma}^2}\right\}.
    = \frac{\ubar{\sigma}^2t}{4\bar{\sigma}^2-2\ubar{\sigma}^2}.
\end{align*}
Then, expanding \cref{eq:Xs-bound} and using the above condition yields that, with probability at least $1-\delta$, for all $t_0\leq t\leq T$,
% \begin{align*}
%     -\frac{\ubar{\sigma}^2t}{4\bar{\sigma}^2-2\ubar{\sigma}^2} \leq \sum_{s\leq t}\frac{\|g_s-\tilde{g}_s\|^2-2\sigma_s^2}{4\bar{\sigma}^2-2\sigma_s^2} \leq \frac{t}{2}
%     \quad\implies\quad
%     \ubar{\sigma}^2t \leq \sum_{s\leq t}\|g_s-\tilde{g}_s\|^2 \leq 6\bar{\sigma}^2t.
%     % \ubar{\sigma}^2t\leq \frac{2\bar{\sigma}^2\ubar{\sigma}^2t}{4\bar{\sigma}^2-2\ubar{\sigma}^2} \leq \sum_{s\leq t}\|g_s-\tilde{g}_s\|^2 \leq \frac{3t(4\bar{\sigma}^2-2\ubar{\sigma}^2)}{2}.
% \end{align*}
\begin{align} \label{eq:momentum_lower}
    \sum_{k=1}^{t}\frac{\|g_k-\tilde{g}_k\|^2-2\sigma_k^2}{4\bar{\sigma}^2-2\sigma_k^2} \geq -\frac{\ubar{\sigma}^2t}{4\bar{\sigma}^2-2\ubar{\sigma}^2}
    \quad\implies\quad
    \sum_{k=1}^{t}\|g_k-\tilde{g}_k\|^2 \geq \ubar{\sigma}^2t.
\end{align}
% Rearranging and using $\sigma_k\geq \ubar{\sigma}$ yields
% \begin{align}
%     \sum_{k=1}^{t}\frac{\|g_k-\tilde{g}_k\|^2}{4\bar{\sigma}^2-2\sigma_k^2} 
%     \geq \sum_{k=1}^{t}\frac{2\sigma_k^2}{4\bar{\sigma}^2-2\sigma_k^2} - \frac{\ubar{\sigma}^2t}{4\bar{\sigma}^2-2\ubar{\sigma}^2}
%     \geq \frac{\ubar{\sigma}^2t}{4\bar{\sigma}^2-2\ubar{\sigma}^2}.
% \end{align}
We conclude the proof by combining \cref{eq:momentum_upper,eq:momentum_lower} and noting that the results also hold for the case $\ubar{\sigma}=\bar{\sigma}=0$.
\end{proof}

\begin{lemma}[Descent Lemma] \label{lem:descent}
Under \cref{ass:noise,ass:smoothness}, define $\hat{\epsilon}_t\coloneqq m_t-\nabla f(x_t)$, then
\begin{align*}
    f(x_{t+1}) \leq f(x_t) - \eta_t\|\nabla f(x_t)\| + 2\eta_t\|\hat{\epsilon}_t\| + \frac{L\eta_t^2}{2}.
\end{align*}
Further, define $\Delta_1\coloneqq f(x_1)-f^*$, taking summation and rearranging we have
\begin{align*}
    \sum_{t=1}^{T}\eta_t\|\nabla f(x_t)\|
    \leq \Delta_1 + 2\sum_{t=1}^{T}\eta_t\|\hat{\epsilon}_t\| + \frac{L}{2}\sum_{t=1}^{T}\eta_t^2.
\end{align*}
\end{lemma}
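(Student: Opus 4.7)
The plan is to prove the per-step inequality by a standard application of $L$-smoothness followed by a cancellation argument that isolates the momentum error $\hat{\epsilon}_t$, then sum telescopically to obtain the bound on $\sum_{t=1}^T \eta_t \|\nabla f(x_t)\|$.

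First I would invoke $L$-smoothness of $f$ (\cref{ass:smoothness}) together with the update rule $x_{t+1} = x_t - \eta_t m_t/\|m_t\|$ from \cref{alg:ada_nsgdm} to write
\begin{align*}
f(x_{t+1}) \le f(x_t) - \eta_t \left\langle \nabla f(x_t), \frac{m_t}{\|m_t\|} \right\rangle + \frac{L\eta_t^2}{2}.
\end{align*}
The key move is then to decompose $\nabla f(x_t) = m_t - \hat{\epsilon}_t$ inside the inner product, giving
\begin{align*}
\left\langle \nabla f(x_t), \frac{m_t}{\|m_t\|} \right\rangle = \|m_t\| - \left\langle \hat{\epsilon}_t, \frac{m_t}{\|m_t\|} \right\rangle \ge \|m_t\| - \|\hat{\epsilon}_t\|
\end{align*}
by Cauchy--Schwarz. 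Substituting back and using $\|m_t\| \ge \|\nabla f(x_t)\| - \|\hat{\epsilon}_t\|$ (the reverse triangle inequality applied to $m_t = \nabla f(x_t) + \hat{\epsilon}_t$) yields the stated descent inequality with the factor of $2\eta_t \|\hat{\epsilon}_t\|$ absorbing both error contributions.

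For the summed version, I would simply add the per-step bound over $t = 1, \dots, T$, telescope $f(x_{t+1}) - f(x_t)$ on the left, and lower bound $f(x_{T+1}) \ge f^*$ to replace $f(x_1) - f(x_{T+1})$ by $\Delta_1 = f(x_1) - f^*$. Rearranging produces the claimed inequality. There is essentially no obstacle here: the result is purely deterministic given $\hat{\epsilon}_t$, and neither \cref{ass:noise} beyond defining $\hat{\epsilon}_t$ nor the particular adaptive choices of $\eta_t, \beta_t$ play any role in the argument. The only care needed is that the normalization $m_t/\|m_t\|$ is well-defined, which follows from the convention $0/0 \coloneqq 1$ introduced in the preliminaries and does not affect the bound since the inner product term simply vanishes when $m_t = 0$.
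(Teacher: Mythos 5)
Your argument is correct and is exactly the standard descent argument for normalized SGD with momentum (decompose $\nabla f(x_t)=m_t-\hat{\epsilon}_t$, apply Cauchy--Schwarz and the reverse triangle inequality, then telescope and use $f(x_{T+1})\geq f^*$), which is the proof the paper relies on (it states the lemma without proof, following \citep{cutkosky2020momentum}). The edge case $m_t=0$ is indeed immaterial, since then $\|\nabla f(x_t)\|=\|\hat{\epsilon}_t\|$ and the claimed inequality holds trivially.
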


\subsection{Proof of \cref{thm:ada_nsgdm}}

% \begin{theorem}[Restatement of \cref{thm:ada_nsgdm}] \label{thm:ada_nsgdm_app}
% Under \cref{ass:smoothness,ass:noise}, for any $\delta\in(0, 1/3)$, it holds with probability at least $1-3\delta$ that
% \begin{align*}
%     \frac{1}{T}\sum_{t=1}^{T}\|\nabla f(x_t)\|
%     \leq \frac{C}{\eta}\left(\frac{1}{\sqrt{T}} + \frac{2\sqrt{\bar{\sigma}}}{\sqrt{\alpha}T^{1/4}}\right),
% \end{align*}
% where $\Delta_1 = F(x_1)-F^*$ and
% \begin{align}
%     t_0 
%     &= \max\left\{2, 8\left(32\kappa_{\sigma}^4 - 30\kappa_{\sigma}^2 + 7\right)\log\left(\frac{60\log(6T)}{\delta}\right)\right\}, \\ \notag
%     C
%     &= \Delta_1 + 4\eta\bar{\sigma}\left(\sqrt{t_0-1} - 1 + e\sqrt{\kappa_{\sigma}}\left(1+2\sqrt{\bar{\sigma}/\alpha}\right)\right) + \frac{L\eta^2}{2}(1+\log T) \\ \notag
%     &\quad+ 2\eta\sqrt{1+32\log(2T/\delta)}\left(\left(t_0-1 + 2e\sqrt{t_0-2}\sqrt{\kappa_{\sigma}}\left(1+2\sqrt{\bar{\sigma}/\alpha}\right)\right)\bar{\sigma} + 3\sqrt{e}\kappa_{\sigma}^2\alpha\log T\right) \\ 
%     &\quad+ 4L\eta^2\left((t_0-1)\left(1+ e\sqrt{\kappa_{\sigma}}\left(1+2\sqrt{\bar{\sigma}/\alpha}\right)\right) + e(\sqrt{\kappa_{\sigma}}+2\kappa_{\sigma})\log T\right)
%     = \widetilde{O}(\kappa_{\sigma}^4).
% \end{align}
% \end{theorem}

Before proving \cref{thm:ada_nsgdm},
let us define (recall the definition of $\kappa_{\sigma}$ and $t_0$ in \cref{eq:t0}, here $\kappa_{\sigma}= \bar{\sigma}/\ubar{\sigma}$ in single-level optimization)
\begin{align}
    \Delta_1 
    &= f(x_1)-f^*, 
    \quad
    t_0 
    = \max\left\{2, 8\left(32\kappa_{\sigma}^4 - 30\kappa_{\sigma}^2 + 7\right)\log\left(\frac{60\log(6T)}{\delta}\right)\right\}, \\ \label{eq:C_nsgdm}
    C
    &= \Delta_1 + 4\eta\bar{\sigma}\left(\sqrt{t_0-1} - 1 + e\sqrt{\kappa_{\sigma}}\left(1+2\sqrt{\bar{\sigma}/\alpha}\right)\right) + \frac{L\eta^2}{2}(1+\log T) \\ \notag
    &\quad+ 2\eta\sqrt{1+32\log(2T/\delta)}\left(\left(t_0-1 + 2e\sqrt{t_0-2}\sqrt{\kappa_{\sigma}}\left(1+2\sqrt{\bar{\sigma}/\alpha}\right)\right)\bar{\sigma} + 3\sqrt{e}\kappa_{\sigma}^2\alpha\log T\right) \\ \notag
    &\quad+ 4L\eta^2\left((t_0-1)\left(1+ e\sqrt{\kappa_{\sigma}}\left(1+2\sqrt{\bar{\sigma}/\alpha}\right)\right) + e(\sqrt{\kappa_{\sigma}}+2\kappa_{\sigma})\log T\right).
    % = \widetilde{O}(\kappa_{\sigma}^4). 
\end{align}

\nsgdm*

\begin{proof}[Proof of \cref{thm:ada_nsgdm}]
Without loss of generality, we assume $t_0$ is an integer (see definition in \cref{eq:t0}). By \cref{lem:momentum_recursion,lem:momentum,lem:var,lem:descent,lem:var_coefficient}, with probability at least $1-3\delta$,
\begin{align*}
    &\sum_{t=1}^{T}\eta_t\|\nabla f(x_t)\|
    \leq \Delta_1 + 2\sum_{t=1}^{T}\eta_t\|\hat{\epsilon}_t\| + \frac{L}{2}\sum_{t=1}^{T}\eta_t^2 \\
    &\leq \Delta_1 + 2\sum_{t=1}^{T}\eta_t\left(\beta_{2:t}\|\hat{\epsilon}_1\| + \left\|\sum_{k=2}^{t}\beta_{(k+1):t}\alpha_{k}\epsilon_{k}\right\| + \sum_{k=2}^{t}\beta_{k:t}\|S_{k}\|\right) + \frac{L}{2}\sum_{t=1}^{T}\eta_t^2 \\
    &\leq \Delta_1 + 2\sum_{t=1}^{T}\eta_t\left(\beta_{2:t}\bar{\sigma} + \bar{\sigma}\sqrt{\left(1+32\log\frac{2T}{\delta}\right)\sum_{k=2}^{t}\bar{\beta}_{(k+1):t}^2\bar{\alpha}_{k}^2} + L\sum_{k=2}^{t}\beta_{k:t}\eta_{k-1}\right) + \frac{L}{2}\sum_{t=1}^{T}\eta_t^2 \\
    &\leq \Delta_1 + 4\eta\bar{\sigma}\left(\sqrt{t_0-1} - 1 + e\sqrt{\kappa_{\sigma}}\left(1+2\sqrt{\bar{\sigma}/\alpha}\right)\right) + \frac{L\eta^2}{2}(1+\log T) \\
    &\quad+ 2\eta\sqrt{1+32\log(2T/\delta)}\left(\left(t_0-1 + 2e\sqrt{t_0-2}\sqrt{\kappa_{\sigma}}\left(1+2\sqrt{\bar{\sigma}/\alpha}\right)\right)\bar{\sigma} + 3\sqrt{e}\kappa_{\sigma}^2\alpha\log T\right) \\
    &\quad+ 4L\eta^2\left((t_0-1)\left(1+ e\sqrt{\kappa_{\sigma}}\left(1+2\sqrt{\bar{\sigma}/\alpha}\right)\right) + e(\sqrt{\kappa_{\sigma}}+2\kappa_{\sigma})\log T\right) \\
    &= C,
\end{align*}
where the third inequality uses $\|\hat{\epsilon}_1\|=\|\epsilon_1\|\leq \bar{\sigma}$ and $\|S_k\|=\|\nabla f(x_{k-1})-\nabla f(x_k)\|\leq L\eta_{k-1}$, and the last inequality is due to the definition of $C$. 
Then, using $\eta_t\geq \eta_T$ for $t\leq T$,
\begin{align*}
    \sum_{t=1}^{T}\eta_T\|\nabla f(x_t)\| 
    \leq \sum_{t=1}^{T}\eta_t\|\nabla f(x_t)\|
    \leq C.
\end{align*}
Therefore, with probability at least $1-3\delta$,
\begin{align*}
    \frac{1}{T}\sum_{t=1}^{T}\|\nabla f(x_t)\|
    \leq \frac{C}{T\eta_{T}}
    \leq \frac{C(\alpha^2+4\bar{\sigma}^2T)^{1/4}\sqrt{T}}{\eta\sqrt{\alpha}T}
    % \leq \frac{C}{\eta\sqrt{T}} + \frac{C\sqrt{\bar{\sigma}}}{\eta\sqrt{\alpha}T^{\frac{1}{4}}}.
    \leq \frac{C}{\eta}\left(\frac{1}{\sqrt{T}} + \frac{2\sqrt{\bar{\sigma}}}{\sqrt{\alpha}T^{1/4}}\right).
\end{align*}
\end{proof}

% \section{Adaptive Minimax Optimization}
% \section{General Technical Lemmas for \cref{alg:ada_minimax,alg:ada_bilevel}}
\section{Proof of \cref{sec:proof_sketch}}
\label{app:lower_analysis}

% \begin{assumption} \label{ass:smoothness_minimax}
% The objective function $f$ is $L$-smooth jointly in $(x,y)$ and $f(x,\cdot)$ is $\mu$-strongly concave. In addition, $\Phi$ is bounded from below, i.e., $\Phi^*\coloneqq \inf_{x}\Phi(x) > -\infty$. 
% \textcolor{red}{ML: strongly concave?}
% \end{assumption}

% \begin{assumption} \label{ass:noise_minimax}
% The gradient oracle is unbiased, i.e., $\E[\nabla F(x,y;\xi) \mid x,y] = \nabla f(x,y)$, and with probability one, satisfies $\ubar{\sigma}_x \leq \|\nabla_{x} F(x,y;\xi)-\nabla_{x} f(x,y)\|\leq \bar{\sigma}_x$ and $\|\nabla_{y} F(x,y;\xi)-\nabla_{y} f(x,y)\|\leq \sigma$.
% \end{assumption}

The core of our result in this section generalizes the AdaGrad-Norm analysis developed for convex settings by \citep{attia2023sgd}, accommodating iteration-dependent shifts induced by the upper-level variable $x_t$ and incorporating our novel adaptive parameter choices as detailed in \cref{eq:alpha_minimax,eq:eta_minimax,sec:adaptive_minimax,sec:adaptive_bilevel}. In particular, \cref{lem:convex_martingale,lem:sum_eta_noise} are direct applications of \citep[Lemmas 15 and 16]{attia2023sgd}, whereas \cref{lem:yt_sum_bound} extends \citep[Lemma 13]{attia2023sgd} to account for time shifts.

Recall from \cref{sec:introduction,sec:preliminaries} that the bilevel optimization problem \eqref{problem:bilevel} reduces to the minimax optimization problem \eqref{problem:minimax} when $g=-f$. Therefore, we analyze line 5 of \cref{alg:ada_minimax} using the function $g$ and stochastic gradient descent (instead of the original $f$ and stochastic gradient ascent): $y_{t+1} = y_t - \eta_{y,t}g_{y,t}$, where $g_{y,t} = \nabla_{y}G(x_t,y_t;\xi_t) = -\nabla_{y}F(x_t,y_t;\xi_t)$. Note that the following lemmas (\cref{lem:convex_martingale,lem:sum_eta_noise,lem:yt_sum_bound}) are applicable to the proofs of both \cref{thm:minimax,thm:bilevel}.

\textbf{Additional Notations.}
Let $y_t^*=y^*(x_t)$ and $d_t=\|y_t-y_t^*\|$. Define $\bar{d}_t\coloneqq\max_{k\leq t}d_t$. In the proof below, we use a ``decorrelated step size'' given by
\begin{align*}
    \hat{\eta}_{y,t}
    \coloneqq \frac{\eta}{\sqrt{G_{y,t-1}^2 + \|\nabla_{y} g(x_t,y_t)\|^2}},
    \quad\text{where}\quad
    G_{y,t} = \sqrt{\gamma^2 + \sum_{k=1}^{t}\|g_{y,k}\|^2}.
\end{align*}

% \subsection{Technical Lemmas}

% \begin{lemma}[{\cite[Lemma 4.3]{lin2020gradient}}] \label{lem:smooth_minimax}
% % Under \cref{ass:smoothness_minimax}, $y^*(x)$ is $\kappa$-Lipschitz and $\Phi(x)$ is $(1+\kappa)L$-smooth.
% Under \cref{ass:smoothness_minimax}, $y^*(x)$ is $L/\mu$-Lipschitz and $\Phi(x)$ is $(\mu+L)L/\mu$-smooth.
% \end{lemma}

% Recall from \cref{sec:introduction,sec:preliminaries} that the bilevel optimization problem \eqref{problem:bilevel} reduces to the minimax optimization problem \eqref{problem:minimax} when $g=-f$. Therefore, we analyze line 5 of \cref{alg:ada_minimax} using the function $g$ and stochastic gradient descent (instead of the original $f$ and stochastic gradient ascent): $y_{t+1} = y_t - \eta_{y,t}g_{y,t}$, where $g_{y,t} = \nabla_{y}G(x_t,y_t;\xi_t) = -\nabla_{y}F(x_t,y_t;\xi_t)$. Note that the following lemmas (\cref{lem:convex_martingale,lem:sum_eta_noise,lem:yt_sum_bound}) are applicable to the proofs of both \cref{thm:minimax,thm:bilevel}.

\begin{lemma} \label{lem:convex_martingale}
Let $\bar{d}_t' = \max\{\bar{d}_t,\eta\}$. Then with probability at least $1-2\delta$, it holds that for all $t \leq T$,
\begin{align*} 
    &\sum_{k=1}^t \hat{\eta}_{y,k} \langle \nabla_{y}g(x_k,y_k)-g_{y,k}, y_k-y_k^* \rangle \\ \notag
    % &\qquad\leq 2\bar{d}_t'\sqrt{A_t(\delta/\log(4T)) \sum_{k \leq t} \eta_{y,k-1}^2 \|\nabla_{y}g(x_k,y_k)-g_{y,k}\|^2 + \frac{\eta^2\sigma^2}{\gamma^2} B_t(\delta/\log(4T))}
    &\qquad\leq 2\bar{d}_t'\sqrt{A_t(\delta/\log(4T)) \sum_{k \leq t} \eta_{y,k-1}^2 \|\nabla_{y}g(x_k,y_k)-g_{y,k}\|^2 + \eta_{y,0}^2\sigma^2 B_t(\delta/\log(4T))}
\end{align*}
and
\begin{align*} 
% \label{eq:convex_martingale_2}
    &\sum_{k=1}^t \hat{\eta}_{y,k}^2 \langle \nabla_{y}g(x_k,y_k)-g_{y,k}, y_k-y_k^* \rangle \\ \notag
    &\qquad\leq 2\bar{d}_t'\eta_{y,0}\sqrt{A_t(\delta/\log(4T)) \sum_{k \leq t} \eta_{y,k-1}^2\|\nabla_{y}g(x_k,y_k)-g_{y,k}\|^2 + \eta_{y,0}^2\sigma^2 B_t(\delta/\log(4T))},
    % \qedhere
\end{align*}
where $A_t(\cdot)$ and $B_t(\cdot)$ are defined in \cref{lem:emp_bernstein}, and $\sigma=\sigma_y$ for \cref{alg:ada_minimax} and $\sigma=\sigma_{g,1}$ for \cref{alg:ada_bilevel}.
\end{lemma}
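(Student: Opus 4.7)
The plan is to apply the empirical Bernstein concentration bound (\cref{lem:emp_bernstein}) to the martingale difference sequence $X_k := \hat{\eta}_{y,k}\langle \nabla_y g(x_k,y_k)-g_{y,k},\, y_k-y_k^*\rangle$, after resolving two coupled issues: the stepsize dependency and the fact that the natural almost-sure bound on $|X_k|$ involves the random quantity $\bar d_t$.

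First I would verify the MDS property. Since $\hat{\eta}_{y,k}$ is built from $G_{y,k-1}^2$ (which depends on $g_{y,1:k-1}$) and the true gradient $\nabla_y g(x_k,y_k)$, it is $\gF_{k-1}$-measurable; likewise $y_k$ and $y_k^*=y^*(x_k)$ are $\gF_{k-1}$-measurable. Combined with $\E_{k-1}[g_{y,k}]=\nabla_y g(x_k,y_k)$, this gives $\E_{k-1}[X_k]=0$. By Cauchy–Schwarz and the noise bound $\|\nabla_y g(x_k,y_k)-g_{y,k}\|\leq \sigma$, we get $|X_k|\leq \hat{\eta}_{y,k}\sigma d_k$. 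Unfortunately $d_k\leq \bar d_t$ is not $\gF_{k-1}$-measurable, so we cannot directly plug a uniform bound into \cref{lem:emp_bernstein}.

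The key device is a dyadic stratification over $\bar d_t'$ combined with stopping times, which is exactly the source of the $\log(4T)$ factor in the failure probability. Set $D_j = 2^j \eta$ for $j=0,1,\dots,J$ with $J=\lceil\log_2(4T)\rceil$, and define the stopping time $\tau_j = \min\{k: d_k > D_j\}\wedge(T+1)$. For $k<\tau_j$ we have $d_k\leq D_j$, so the truncated sequence $\tilde X_k^{(j)} := X_k\mathbb{1}[k<\tau_j]/(D_j \eta_{y,0})$ satisfies $|\tilde X_k^{(j)}|\leq 1$ because $\hat{\eta}_{y,k}\leq \eta_{y,0}\coloneqq\eta/\gamma$ and $|X_k|\leq \hat{\eta}_{y,k}\sigma D_j$ (with $\sigma\leq$ implicit constant absorbed into the $\eta_{y,0}^2\sigma^2 B_t$ term). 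Applying \cref{lem:emp_bernstein} to each $\tilde X^{(j)}$ at level $\delta/\log(4T)$ with reference sequence $\hat X_k=0$, then union-bounding over $j$, yields a uniform-in-$t$ bound of the form
\begin{align*}
\Big|\sum_{k=1}^t X_k\mathbb{1}[k<\tau_j]\Big|\leq D_j\,\Big(A_t(\delta')\sum_{k\leq t}\hat{\eta}_{y,k}^2\|\nabla_y g(x_k,y_k)-g_{y,k}\|^2 + \eta_{y,0}^2\sigma^2 B_t(\delta')\Big)^{1/2},
\end{align*}
with $\delta'=\delta/\log(4T)$. Finally, on the event $\bar d_t'\in[D_{j-1},D_j]$ one has $\tau_j>t$ (so the indicator is vacuous) and $D_j\leq 2\bar d_t'$, giving the factor $2\bar d_t'$ in the stated bound. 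Replacing $\hat{\eta}_{y,k}$ by the larger actual stepsize $\eta_{y,k-1}$ inside the variance term (using $\hat{\eta}_{y,k}\leq \eta_{y,k-1}$) produces the final expression.

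The second inequality, involving $\hat{\eta}_{y,k}^2$, is proved identically: redefining $X_k := \hat{\eta}_{y,k}^2\langle \nabla_y g(x_k,y_k)-g_{y,k},\,y_k-y_k^*\rangle$ still gives an MDS, and the extra factor $\hat{\eta}_{y,k}\leq \eta_{y,0}$ accounts for the additional $\eta_{y,0}$ prefactor. The main obstacle is the one addressed above: $\bar d_t$ couples the noise across all iterations, so a naive application of martingale concentration fails; the dyadic peeling of $\bar d_t'$ together with the stopping-time truncation is precisely what decouples the analysis at the cost of only a $\log T$ factor in the confidence parameter.
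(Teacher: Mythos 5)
Your proposal follows essentially the same route as the paper's proof: stratify the unknown scale $\bar{d}_t'$ into $O(\log T)$ dyadic levels, apply the empirical-Bernstein bound of \cref{lem:emp_bernstein} with confidence $\delta/\log(4T)$ at each level, union bound over the levels, and finally select the level matching $\bar{d}_t'$ at the cost of a factor $2$. The only real difference is the localization device: you truncate via the stopping times $\tau_j$ (valid, since $\mathbb{I}(k<\tau_j)$ is $\gF_{k-1}$-measurable, so the truncated sequence remains a martingale difference sequence), whereas the paper rescales $y_k-y_k^*$ by a data-dependent dyadic level $a_{s_t}$ and projects onto the unit ball, then takes a maximum over levels; both devices yield the same bound. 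Two details in your write-up need tightening. First, to invoke \cref{lem:emp_bernstein} you need $|\tilde{X}_k^{(j)}|\le 1$ almost surely, so the normalization must be $D_j\eta_{y,0}\sigma$ rather than $D_j\eta_{y,0}$; this factor is not something to be ``absorbed'' informally --- it is exactly where the $\eta_{y,0}^2\sigma^2 B_t$ term originates, and it is how the paper normalizes. Second, your grid $D_j=2^j\eta$ with $J=\lceil\log_2(4T)\rceil$ need not cover $\bar{d}_t'$ when $d_1$ is large: it only reaches about $4T\eta$, while $\bar{d}_t\le d_1+\eta(t-1)$ can exceed that. The paper avoids this by anchoring the dyadic levels at $\bar{d}_1'=\max\{d_1,\eta\}$, so that $\bar{d}_t'/\bar{d}_1'\le t$ and at most $\log(4T)$ levels are ever needed; anchoring your grid the same way fixes the union-bound count and makes the $\delta/\log(4T)$ choice consistent.
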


\begin{proof}[Proof of \cref{lem:convex_martingale}]
In order to invoke \cref{lem:emp_bernstein} we will replace $y_k-y_k^*$ with a version which is scaled and projected to the unit ball.
We denote $a_s = 2^{s-1} \bar{d}_1'$ and $s_t=\ceil{\log(\bar{d}_t'/\bar{d}_1')}+1$. Thus, $\bar{d}_t \leq \bar{d}_t' \leq a_{s_t} \leq 2\bar{d}_t'$. Since $\|y_{k+1}-y_{k+1}^*\| \leq \|y_k-y_k^*\|+\eta$ for all $s$, $\bar{d}_t \leq d_1 + \eta(t-1)$ and $1 \leq s_t \leq \ceil{\log (t)} + 1 \leq \log(4T)$. Defining the projection to the unit ball, $\Pi_1(x)=x/\max\{1,\|x\|\}$,
\begin{align*}
    \frac{y_k-y_k^*}{a_{s_t}} = \Pi_1\left(\frac{y_k-y_k^*}{a_{s_t}}\right)
\end{align*}
% To simplify notation, let $\eta_{y,0}\sigma = \eta_{y,0}\sigma$ and 
Note that 
\begin{align} \label{eq:hat_eta_noise}
    \|\hat{\eta}_{y,k}(\nabla_{y}g(x_k,y_k)-g_{y,k})\|
    % \leq \frac{\eta\sigma}{\gamma}.
    \leq \eta_{y,0}\sigma.
\end{align}
Thus,
\begin{align} \label{eq:sum_martingales}
    &\sum_{k=1}^t\frac{\hat{\eta}_{y,k} \langle \nabla_{y}g(x_k,y_k)-g_{y,k}, y_k-y_k^* \rangle}{\eta_{y,0}\sigma a_{s_t}} \notag
    = \sum_{k=1}^t \left\langle \frac{\hat{\eta}_{y,k} (\nabla_{y}g(x_k,y_k)-g_{y,k})}{\eta_{y,0}\sigma}, \Pi_1\left(\frac{y_k-y_k^*}{a_{s_t}}\right) \right\rangle \\
    &\qquad\leq \left|\sum_{k=1}^t \left\langle \frac{\hat{\eta}_{y,k} (\nabla_{y}g(x_k,y_k)-g_{y,k})}{\eta_{y,0}\sigma}, \Pi_1\left(\frac{y_k-y_k^*}{a_{s_t}}\right) \right\rangle\right| \\ \notag
    &\qquad\leq \max_{1 \leq s \leq \floor{\log(4T)}}\left|\sum_{k=1}^t \left\langle \frac{\hat{\eta}_{y,k} (\nabla_{y}g(x_k,y_k)-g_{y,k})}{\eta_{y,0}\sigma}, \Pi_1\left(\frac{y_k-y_k^*}{a_s}\right) \right\rangle\right|.
\end{align}
Let $X_k^{(s)}$ be defined as 
\begin{align*}
    X_k^{(s)} = \left\langle \frac{\hat{\eta}_{y,k} (\nabla_{y}g(x_k,y_k)-g_{y,k})}{\eta_{y,0}\sigma}, \Pi_1\left(\frac{y_k-y_k^*}{a_s}\right) \right\rangle
\end{align*}
for some $s$. Then $X_k^{(s)}$ is a martingale difference sequence since
\begin{align*}
    \E_{k-1}[g_{y,k}] = \nabla_{y}g(x_k,y_k)
    \quad\implies\quad
    \E_k[X_k^{(s)}] = 0.
\end{align*}
Also note that $X_k^{(s)}\leq 1$ with probability $1$. Using \cref{lem:emp_bernstein} with the $X_k^{(s)}$ we defined and $\hat{X_k}=0$, for any $k$ and $\delta' \in (0,1)$, with probability at least $1-\delta'$, for all $t \leq T$,
\begin{align*}
    \left|\sum_{k\leq t} X_k^{(s)}\right| \leq \sqrt{A_t(\delta') \sum_{k\leq t} (X_k^{(s)})^2 + B_t(\delta')}.
\end{align*}
We can upper bound $(X_k^{(s)})^2$,
\begin{align*}
    (X_k^{(s)})^2
    &\leq \frac{\hat{\eta}_{y,k}^2\|\nabla_{y}g(x_k,y_k)-g_{y,k}\|^2}{(\eta_{y,0}\sigma)^2}\left\|\Pi_1\left(\frac{y_k-y_k^*}{a_k}\right)\right\|^2 \\
    &\leq \frac{\hat{\eta}_{y,k}^2\|\nabla_{y}g(x_k,y_k)-g_{y,k}\|^2}{(\eta_{y,0}\sigma)^2} \\
    &\leq \frac{\eta_{y,k-1}^2\|\nabla_{y}g(x_k,y_k)-g_{y,k}\|^2}{(\eta_{y,0}\sigma)^2},
\end{align*}
where the first inequality uses Cauchy-Schwarz inequality, the second inequality is due to $\|\Pi_1(x)\|\leq 1$, and the last inequality uses $\hat{\eta}_{y,k} \leq \eta_{y,k-1}$.
Thus, returning to \cref{eq:sum_martingales} multiplied by $\eta_{y,0}\sigma a_{s_t}$, with probability at least $1-\delta' \log(4T)$ (union bound for all $1 \leq s \leq \floor{\log(4T)})$, 
\begin{align*}
    \sum_{k=1}^t \hat{\eta}_{y,k} \langle \nabla_{y}g(x_k,y_k)-g_{y,k}, y_k-y_k^* \rangle
    \leq \eta_{y,0}\sigma a_{s_t}\sqrt{A_t(\delta') \sum_{k\leq t} \frac{\eta_{y,k-1}^2}{(\eta_{y,0}\sigma)^2}\|\nabla_{y}g(x_k,y_k)-g_{y,k}\|^2 + B_t(\delta')}.
\end{align*}
As $a_{s_t} \leq 2\bar{d}_t'$, picking $\delta'=\delta/\log(4T)$, with probability at least $1-\delta$,
\begin{align} \label{eq:convex_martingale_1}
    &\sum_{k=1}^t \hat{\eta}_{y,k} \langle \nabla_{y}g(x_k,y_k)-g_{y,k}, y_k-y_k^* \rangle \\ \notag
    &\qquad\leq 2\bar{d}_t'\sqrt{A_t(\delta/\log(4T)) \sum_{k\leq t} \eta_{y,k-1}^2\|\nabla_{y}g(x_k,y_k)-g_{y,k}\|^2 + \eta_{y,0}^2\sigma^2 B_t(\delta/\log(4T))}.
    % \qedhere
\end{align}
% Plugging in the definition of $\eta_{y,0}\sigma$ yields the first result.

Similarly, replacing $\hat{\eta}_{y,k}$ by $\hat{\eta}_{y,k}^2$ and $\eta_{y,0}\sigma$ by $\eta_{y,0}^2\sigma$ in \cref{eq:hat_eta_noise}, following the analysis above, and using $\eta_{y,k-1}\leq \eta_{y,0}$ yields that, with probability at least $1-\delta$,
\begin{align} \label{eq:convex_martingale_2}
    \sum_{k=1}^t &\hat{\eta}_{y,k}^2 \langle \nabla_{y}g(x_k,y_k)-g_{y,k}, y_k-y_k^* \rangle \\ \notag
    &\leq 2\bar{d}_t'\sqrt{A_t(\delta/\log(4T)) \sum_{k\leq t} \eta_{y,k-1}^4\|\nabla_{y}g(x_k,y_k)-g_{y,k}\|^2 + \eta_{y,0}^4\sigma^2 B_t(\delta/\log(4T))} \\ \notag
    &\leq 2\bar{d}_t'\eta_{y,0}\sqrt{A_t(\delta/\log(4T)) \sum_{k\leq t} \eta_{y,k-1}^2\|\nabla_{y}g(x_k,y_k)-g_{y,k}\|^2 + \eta_{y,0}^2\sigma^2 B_t(\delta/\log(4T))}.
    % \qedhere
\end{align}
We conclude by applying a union bound over the two events (\cref{eq:convex_martingale_1,eq:convex_martingale_2}).
\end{proof}

\begin{lemma} \label{lem:sum_eta_noise}
With probability at least $1-2\delta$, for all $1 \leq t \leq T$,
\begin{align*}
    \sum_{k=1}^t \eta_{y,k-1}^2\|\nabla_{y}g(x_k,y_k)-g_{y,k}\|^2
    \leq C_2
    % \coloneqq 2\eta^2\log(1+\sigma^2 T/2\gamma^2) + 7\eta_{y,0}^2\sigma^2\log(T/\delta).
    \coloneqq 2\eta^2\log\left(1+\frac{\sigma^2 T}{2\gamma^2}\right) + \frac{7 \eta^2\sigma^2}{\gamma^2}\log\frac{T}{\delta},
\end{align*}
where $\sigma=\sigma_y$ for \cref{alg:ada_minimax} and $\sigma=\sigma_{g,1}$ for \cref{alg:ada_bilevel}.
% for all $1 \leq t \leq T$ and \cref{alg:ada_bilevel}, 
% \begin{align*}
%     \sum_{k=1}^t \eta_{y,k-1}^2\|\nabla_{y}g(x_k,y_k)-g_{y,k}\|^2
%     \leq C_2
%     % \coloneqq 2\eta^2\log(1+\sigma^2 T/2\gamma^2) + 7\eta_{y,0}^2\sigma^2\log(T/\delta).
%     \coloneqq 2\eta^2\log\left(1+\frac{\sigma_{g,1}^2 T}{2\gamma^2}\right) + \frac{7 \eta^2\sigma_{g,1}^2}{\gamma^2}\log\frac{T}{\delta}.
% \end{align*}
\end{lemma}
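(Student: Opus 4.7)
The plan is to decompose the weighted squared noise sum around its conditional mean and then handle each piece separately. Set $\epsilon_{y,k} \coloneqq g_{y,k} - \nabla_{y}g(x_k,y_k)$ and $\sigma_k^2 \coloneqq \E_{k-1}[\|\epsilon_{y,k}\|^2] \leq \sigma^2$, and exploit that $\eta_{y,k-1}$ is $\gF_{k-1}$-measurable (it depends only on $g_{y,1},\dots,g_{y,k-1}$) to split
\begin{align*}
\sum_{k=1}^t\eta_{y,k-1}^2\|\epsilon_{y,k}\|^2
= \sum_{k=1}^t\eta_{y,k-1}^2\sigma_k^2
+ \sum_{k=1}^t\eta_{y,k-1}^2\bigl(\|\epsilon_{y,k}\|^2-\sigma_k^2\bigr).
\end{align*}
The first sum is predictable and should produce the logarithmic factor; the second is a martingale-difference sum handled by concentration.

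For the martingale-difference term, rescale $X_k = \eta_{y,k-1}^2\|\epsilon_{y,k}\|^2/M$ with $M = \eta^2\sigma^2/\gamma^2$, so that $|X_k|\leq 1$, and apply the empirical Bernstein bound \cref{lem:emp_bernstein} with $\hat X_k = 0$. This gives, uniformly over $t\leq T$ with probability at least $1-\delta$,
\begin{align*}
\Bigl|\sum_{k\leq t}\eta_{y,k-1}^2(\|\epsilon_{y,k}\|^2-\sigma_k^2)\Bigr|
\leq \sqrt{A_t M \sum_{k\leq t}\eta_{y,k-1}^2\|\epsilon_{y,k}\|^2 + M^2 B_t}.
\end{align*}
Writing $S\coloneqq\sum_{k\leq t}\eta_{y,k-1}^2\|\epsilon_{y,k}\|^2$, applying $\sqrt{a+b}\leq\sqrt a+\sqrt b$, and using Young's inequality ($\sqrt{A_t M S}\leq S/2 + A_t M/2$), a rearrangement yields $S\leq 2\sum_k\eta_{y,k-1}^2\sigma_k^2 + A_t M + 2M\sqrt{B_t}$. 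Since $A_t,\sqrt{B_t}=O(\log(T/\delta))$, this produces a $\tfrac{\eta^2\sigma^2}{\gamma^2}\log(T/\delta)$-type contribution that matches (up to absolute constants) the second term of $C_2$.

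For the predictable part $\sum_k\eta_{y,k-1}^2\sigma_k^2$, I would introduce the surrogate denominator $H_k^2\coloneqq\gamma^2+\tfrac12\sum_{j\leq k}\|\epsilon_{y,j}\|^2$, which satisfies $H_t^2\leq\gamma^2+\sigma^2T/2$. Applying \cref{lem:sum_eta_g_t} with the sequence $\|\epsilon_{y,k}\|^2/2$ and denominator $H_k$ yields $\sum_k\|\epsilon_{y,k}\|^2/H_k^2\leq 2\log(H_t^2/\gamma^2)\leq 2\log\bigl(1+\sigma^2T/(2\gamma^2)\bigr)$. To transfer this to the true denominator $G_{y,k-1}^2$, I would establish the high-probability comparison $G_{y,k-1}^2\geq H_k^2$ uniformly in $k\leq T$. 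In expectation this is automatic: $\E_{k-1}[\|g_{y,k}\|^2] = \|\nabla_{y}g(x_k,y_k)\|^2+\sigma_k^2\geq\sigma_k^2$, so $G_{y,T}^2$ stochastically dominates the cumulative squared noise. Upgrading this to a pointwise, high-probability bound via another empirical Bernstein concentration on the martingale $\|g_{y,k}\|^2-\E_{k-1}[\|g_{y,k}\|^2]$ (whose range is controlled by $\|\epsilon_{y,k}\|\leq\sigma$ and the smoothness of $g$) then gives $\sum_k\eta_{y,k-1}^2\sigma_k^2\leq\eta^2\log\bigl(1+\sigma^2T/(2\gamma^2)\bigr)$, which together with the factor $2$ from the Bernstein rearrangement supplies the $2\eta^2\log\bigl(1+\sigma^2T/(2\gamma^2)\bigr)$ term in $C_2$.

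The main obstacle is making the uniform comparison $G_{y,k-1}^2\geq H_k^2$ pointwise in $k\leq T$ with clean constants while keeping the probability budget at $1-2\delta$ for the combined concentration events; careful choice of the scale factor $\tfrac12$ in $H_k^2$ is what produces the specific denominator $2\gamma^2$ inside the logarithm. Once the two high-probability events (the martingale bound on $\sum\eta_{y,k-1}^2(\|\epsilon_{y,k}\|^2-\sigma_k^2)$ and the surrogate comparison $G_{y,k-1}^2\geq H_k^2$) are combined by a union bound, the claimed bound $C_2$ follows.
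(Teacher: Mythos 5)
Your decomposition into a predictable part plus a martingale part, and your handling of the martingale part (empirical Bernstein via \cref{lem:emp_bernstein} applied to $\eta_{y,k-1}^2\|\epsilon_{y,k}\|^2/M$ with $M=\eta^2\sigma^2/\gamma^2$, followed by the self-bounding rearrangement), are fine. The gap is in the predictable part, which is the heart of the lemma. First, the uniform comparison $G_{y,k-1}^2\geq H_k^2$ is simply false as stated: already at $k=1$ one has $G_{y,0}^2=\gamma^2<\gamma^2+\tfrac12\|\epsilon_{y,1}\|^2=H_1^2$ whenever the first noise draw is nonzero, and in general $\|g_{y,j}\|^2=\|\nabla_y g(x_j,y_j)+\epsilon_{y,j}\|^2$ can be much smaller than $\tfrac12\|\epsilon_{y,j}\|^2$ when the noise anti-correlates with the gradient. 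Your proposed repair, a Bernstein bound on the martingale $\|g_{y,k}\|^2-\E_{k-1}[\|g_{y,k}\|^2]$ with range ``controlled by $\|\epsilon_{y,k}\|\leq\sigma$ and the smoothness of $g$,'' does not go through: the increment contains the cross term $2\langle\nabla_y g(x_k,y_k),\epsilon_{y,k}\rangle$, whose magnitude scales with $\|\nabla_y g(x_k,y_k)\|$, i.e.\ with $\|y_k-y_k^*\|$ up to the smoothness constant, and this is not bounded under the paper's assumptions at this point of the analysis; bounding $\|y_k-y_k^*\|$ is exactly the content of \cref{lem:yt_bound}, whose proof uses the present lemma (through $C_2$), so invoking it here would be circular. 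Second, even granting some comparison of $G_{y,k-1}^2$ with $H_k^2$, your transfer step is a non sequitur: \cref{lem:sum_eta_g_t} bounds $\sum_k\|\epsilon_{y,k}\|^2/H_k^2$, whereas the predictable part has $\sigma_k^2=\E_{k-1}[\|\epsilon_{y,k}\|^2]$ in the numerator; since $\sigma_k^2$ can exceed $\|\epsilon_{y,k}\|^2$, passing between the two requires yet another concentration argument that you never supply (and which again faces the dependence of $H_k$ on $\epsilon_{y,k}$). Finally, even if all of this were patched, the constants would not come out as stated: your own accounting would give at least $4\eta^2\log(1+\sigma^2T/(2\gamma^2))$ on the logarithmic term (the deterministic lemma already carries a factor $2$ before the factor $2$ from the rearrangement) and a Bernstein contribution with constant well above $7$, so the lemma with the specific $C_2$ would not be recovered.

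For comparison, the paper does not prove this from scratch at all: its proof is a one-line application of \cite[Lemma 16]{attia2023sgd} with the substitutions $\eta_{s-1}=\eta_{y,k-1}$, $g_s=g_{y,k}$, $\nabla f(x_s)=\nabla_y g(x_k,y_k)$, and $\eta_0=\eta/\gamma$, which already yields the bound with exactly these constants. If you want a self-contained derivation, the missing idea is to avoid any pointwise comparison between the AdaGrad normalizer $G_{y,k-1}$ (which mixes gradient and noise) and a noise-only surrogate, and instead run the concentration entirely on quantities whose increments are bounded by $\eta_{y,0}^2\sigma^2$ alone, so that no control of $\|\nabla_y g(x_k,y_k)\|$ or $\|y_k-y_k^*\|$ is needed.
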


\begin{proof}[Proof of \cref{lem:sum_eta_noise}]
The proof concludes by applying \cite[Lemma 16]{attia2023sgd} with $\eta_{s-1} = \eta_{y,k-1}$, $\nabla f(x_s) = \nabla_{y}g(x_k,y_k)$, $g_s = g_{y,k}$, $\sigma = \sigma$ or $\sigma=\sigma_{g,1}$, and $\eta_0 = \eta_{y,0} = \eta/\gamma$.
\end{proof}

\subsection{Proof of \cref{lem:yt_sum_bound}}

Before proving \cref{lem:yt_sum_bound}, let us define
\begin{align} \label{eq:D}
    D^2
    &\coloneqq d_1^2 + \left(1+\frac{\mu\eta}{\gamma}\right)C_1 + \frac{\eta L^2}{\mu^3}(\mu\eta+\alpha+\gamma)(1+\log T) + \frac{\eta^2}{4} \\ \notag
    &\qquad+ \frac{4(1 + 2\mu\eta/\gamma)^2C_2^2}{\eta^2} + 16\left(1+\frac{\mu\eta}{\gamma}\right)^2\left(A_T(\delta) C_2 + \frac{\eta^2\sigma^2}{\gamma^2} B_T(\delta)\right),
\end{align}
where $A_t(\delta), B_t(\delta), C_1, C_2$ are defined in \cref{lem:emp_bernstein,lem:sum_eta_g_t_2,lem:sum_eta_noise}, respectively

% \begin{lemma}[Restatement of \cref{lem:yt_sum_bound}] \label{lem:yt_bound}
% With probability at least $1-4\delta$, for all $t\leq T+1$,
% \begin{align}
%     d_t^2 \label{eq:yt_bound_a}
%     &\leq D^2
%     \coloneqq d_1^2 + \left(1+\frac{\mu\eta}{\gamma}\right)C_1 + \frac{\eta L^2}{\mu^3}(\mu\eta+\alpha+\gamma)(1+\log T) + \frac{\eta^2}{4} \\ \notag
%     &\qquad\qquad+ \frac{4(1 + 2\mu\eta/\gamma)^2C_2^2}{\eta^2} + 16\left(1+\frac{\mu\eta}{\gamma}\right)^2\left(A_T(\delta) C_2 + \frac{\eta^2\sigma^2}{\gamma^2} B_T(\delta)\right), \\ \label{eq:yt_bound_b}
%     \sum_{k=1}^{t}\|y_k - y_k^*\|^2
%     &\leq \frac{1}{\mu^2\eta^2}\left(4D^2L^2 + 2D\gamma + 2\sqrt{2}D\sigma\sqrt{t}\right), \\ \label{eq:yt_bound_c}
%     \sum_{k=1}^{t}\|y_k - y_k^*\|
%     &\leq \frac{1}{\mu\eta}\left(\left(\sqrt{2}DL + \sqrt{D\gamma}\right)\sqrt{t} + \sqrt{2D\sigma}t^{3/4}\right).
% \end{align}
% where $A_t(\delta), B_t(\delta), C_1, C_2$ are defined in \cref{lem:emp_bernstein,lem:sum_eta_g_t_2,lem:sum_eta_noise}, respectively; and $\sigma=\sigma_y$ for \cref{alg:ada_minimax} and $\sigma=\sigma_{g,1}$ for \cref{alg:ada_bilevel}.
% \end{lemma}

\yt*

\begin{proof}[Proof of \cref{lem:yt_sum_bound}]
Rolling a single step of SGD, 
\begin{align*}
    \|y_{k+1} - y_k^*\|^2
    &= \|y_k - y_k^*\|^2 - 2\eta_{y,k}\langle g_{y,k}, y_k-y_k^* \rangle + \eta_{y,k}^2\|g_{y,k}\|^2.
    % &\leq \|y_k - y_k^*\|^2 - 2\eta_{y,k}\langle \nabla_{y}g(x_k,y_k), y_k-y_k^* \rangle + 2\eta_{y,k}\langle \nabla_{y}g(x_k,y_k)-g_{y,k}, y_k-y_k^* \rangle + \eta_{y,k}^2\|g_{y,k}\|^2
\end{align*}
Since $f(x_k, \cdot)$ is $\mu$-strongly convex, then
\begin{align*}
    - 2\eta_{y,k}\langle g_{y,k}, y_k-y_k^* \rangle
    &= - 2\eta_{y,k}\langle \nabla_{y}g(x_k,y_k), y_k-y_k^* \rangle + 2\eta_{y,k}\langle \nabla_{y}g(x_k,y_k)-g_{y,k}, y_k-y_k^* \rangle \\
    &\leq -2\mu\eta_{y,k}\|y_k-y_k^*\|^2 + 2\eta_{y,k}\langle \nabla_{y}g(x_k,y_k)-g_{y,k}, y_k-y_k^* \rangle.
\end{align*}
Hence,
\begin{align*}
    \|y_{k+1} - y_k^*\|^2
    \leq (1-2\mu\eta_{y,k})\|y_k-y_k^*\|^2 + 2\eta_{y,k}\langle \nabla_{y}g(x_k,y_k)-g_{y,k}, y_k-y_k^* \rangle + \eta_{y,k}^2\|g_{y,k}\|^2.
\end{align*}
By Young's inequality and \cref{lem:smooth_minimax},
\begin{align*}
    \|y_{k+1} - y_{k+1}^*\|^2 \notag
    &\leq (1+\mu\eta_{y,k})\|y_k-y_k^*\|^2 + \left(1+\frac{1}{\mu\eta_{y,k}}\right)\|y_{k+1}^*-y_k^*\|^2 \\ \notag
    &\leq (1-\mu\eta_{y,k})\|y_k-y_k^*\|^2 + 2(1+\mu\eta_{y,k})\eta_{y,k}\langle \nabla_{y}g(x_k,y_k)-g_{y,k}, y_k-y_k^* \rangle \\ 
    &\quad+ (1+\mu\eta_{y,k})\eta_{y,k}^2\|g_{y,k}\|^2 + \left(1+\frac{1}{\mu\eta_{y,k}}\right)\frac{L^2\eta_{x,k}^2}{\mu^2}.
\end{align*}
Summing from $k=1$ to $t$, applying \cref{lem:sum_eta_g_t_2}, and using $\eta_{x,k}\leq \eta/\sqrt{k}$,
\begin{align} 
    \|y_{t+1} - y_{t+1}^*\|^2 \notag
    &\leq \|y_1 - y_1^*\|^2 - \sum_{k=1}^{t}\mu\eta_{y,k}\|y_k-y_k^*\|^2 + \sum_{k=1}^{t}(1+\mu\eta_{y,k})\eta_{y,k}^2\|g_{y,k}\|^2 + \frac{L^2\eta_{x,k}^2}{\mu^2} \\ \notag
    &\quad+ 2\sum_{k=1}^{t}(\eta_{y,k} + \mu\eta_{y,k}^2)\langle \nabla_{y}g(x_k,y_k)-g_{y,k}, y_k-y_k^* \rangle + \frac{L^2}{\mu^2}\sum_{k=1}^{t}\frac{\eta_{x,k}^2}{\mu\eta_{y,k}} \\ \notag
    &\leq \|y_1 - y_1^*\|^2 - \sum_{k=1}^{t}\mu\eta_{y,k}\|y_k-y_k^*\|^2 + (1+\mu\eta_{y,0})C_1 + \frac{L^2\eta^2}{\mu^2}(1+\log T) \\ \label{eq:y_recursion}
    &\quad+ \underbrace{2\sum_{k=1}^{t}(\eta_{y,k} + \mu\eta_{y,k}^2)\langle \nabla_{y}g(x_k,y_k)-g_{y,k}, y_k-y_k^* \rangle}_{(A)} + \underbrace{\frac{L^2}{\mu^2}\sum_{k=1}^{t}\frac{\eta_{x,k}^2}{\mu\eta_{y,k}}}_{(B)}.
\end{align}
\textbf{Bounding $(A)$.} 
In order to create a martingale we replace $\eta_{y,k} = \eta/\sqrt{G_{y,k-1}^2 + \|g_{y,k}\|^2}$ with $\hat{\eta}_{y,k} = \eta/\sqrt{G_{y,k-1}^2 + \|\nabla_{y}g(x_k,y_k)\|^2}$, then
\begin{align}
    (A_1) = \sum_{k=1}^{t} \eta_{y,k}\langle \nabla_{y}g(x_k,y_k)-g_{y,k}, y_k-y_k^* \rangle \label{eq:A1}
    &= \sum_{k=1}^{t}\hat{\eta}_{y,k}\langle \nabla_{y}g(x_k,y_k)-g_{y,k}, y_k-y_k^* \rangle \\ \notag
    &\quad+ \sum_{k=1}^{t}(\eta_{y,k} - \hat{\eta}_{y,k})\langle \nabla_{y}g(x_k,y_k)-g_{y,k}, y_k-y_k^* \rangle.
\end{align}
Observe that
\begin{align}
    \notag
    &|\eta_{y,k} - \hat{\eta}_{y,k}|
    = \eta\frac{\left|\sqrt{G_{y,k-1}^2 + \|\nabla_{y}g(x_k,y_k)\|^2} - \sqrt{G_{y,k-1}^2 + \|g_{y,k}\|^2}\right|}{\sqrt{G_{y,k-1}^2 + \|g_{y,k}\|^2} \sqrt{G_{y,k-1}^2 + \|\nabla_{y}g(x_k,y_k)\|^2}} \\ \notag
    &\leq \eta\frac{\left|\|\nabla_{y}g(x_k,y_k)\|^2 - \|g_{y,k}\|^2\right|}{\sqrt{G_{y,k-1}^2 + \|g_{y,k}\|^2} \sqrt{G_{y,k-1}^2 + \|\nabla_{y}g(x_k,y_k)\|^2} \left(\sqrt{G_{y,k-1}^2 + \|g_{y,k}\|^2} + \sqrt{G_{y,k-1}^2 + \|\nabla_{y}g(x_k,y_k)\|^2}\right)} \\ \notag
    &\leq \eta\frac{\|\nabla_{y}g(x_k,y_k) - g_{y,k}\| (\|\nabla_{y}g(x_k,y_k)\| + \|g_{y,k}\|)}{\sqrt{G_{y,k-1}^2 + \|g_{y,k}\|^2} \sqrt{G_{y,k-1}^2 + \|\nabla_{y}g(x_k,y_k)\|^2} \left(\sqrt{G_{y,k-1}^2 + \|g_{y,k}\|^2} + \sqrt{G_{y,k-1}^2 + \|\nabla_{y}g(x_k,y_k)\|^2}\right)} \\ \label{eq:convex_decorrelated_diff}
    &\leq \eta\frac{\|\nabla_{y}g(x_k,y_k) - g_{y,k}\|}{\sqrt{G_{y,k-1}^2 + \|g_{y,k}\|^2} \sqrt{G_{y,k-1}^2 + \|\nabla_{y}g(x_k,y_k)\|^2}}.
\end{align}
Thus,
\begin{align}
    \sum_{k=1}^{t}(\eta_{y,k} - \hat{\eta}_{y,k})&\langle \nabla_{y}g(x_k,y_k)-g_{y,k}, y_k-y_k^* \rangle \notag
    \leq \sum_{k=1}^{t}|\eta_{y,k} - \hat{\eta}_{y,k}| \|\nabla_{y}g(x_k,y_k)-g_{y,k}\| d_k \\ \notag
    &\leq \bar{d}_t\sum_{k=1}^{t}|\eta_{y,k} - \hat{\eta}_{y,k}| \|\nabla_{y}g(x_k,y_k)-g_{y,k}\| \\ \notag
    &\leq \eta\bar{d}_t\sum_{k=1}^{t}\frac{\|\nabla_{y}g(x_k,y_k) - g_{y,k}\|^2}{\sqrt{G_{y,k-1}^2 + \|g_{y,k}\|^2} \sqrt{G_{y,k-1}^2 + \|\nabla_{y}g(x_k,y_k)\|^2}} \\ \label{eq:A1_1}
    &\leq \frac{\bar{d}_t}{\eta}\sum_{k=1}^{t} \eta_{y,k-1}^2\|\nabla_{y}g(x_k,y_k) - g_{y,k}\|^2.
\end{align}
Combing \cref{eq:A1,eq:A1_1} with \cref{lem:convex_martingale}, with probability at least $1-2\delta$,
\begin{align}
    (A_1) \label{eq:A1_bound}
    &\leq \frac{\bar{d}_t}{\eta}\sum_{k=1}^{t} \eta_{y,k-1}^2\|\nabla_{y}g(x_k,y_k) - g_{y,k}\|^2 \\ \notag
    &\quad + 2\bar{d}_t'\sqrt{A_t(\delta/\log(4T)) \sum_{k \leq t} \eta_{y,k-1}^2 \|\nabla_{y}g(x_k,y_k)-g_{y,k}\|^2 + \eta_{y,0}^2\sigma^2 B_t(\delta/\log(4T))}.
\end{align}
Similarly, 
\begin{align}
    (A_2) = \sum_{k=1}^{t} \eta_{y,k}^2\langle \nabla_{y}g(x_k,y_k)-g_{y,k}, y_k-y_k^* \rangle \label{eq:A2}
    &= \sum_{k=1}^{t}\hat{\eta}_{y,k}^2\langle \nabla_{y}g(x_k,y_k)-g_{y,k}, y_k-y_k^* \rangle \\ \notag
    &\quad+ \sum_{k=1}^{t}(\eta_{y,k}^2 - \hat{\eta}_{y,k}^2)\langle \nabla_{y}g(x_k,y_k)-g_{y,k}, y_k-y_k^* \rangle.
\end{align}
Using \cref{eq:convex_decorrelated_diff},
\begin{align}
    \sum_{k=1}^{t}(\eta_{y,k}^2 - \hat{\eta}_{y,k}^2)&\langle \nabla_{y}g(x_k,y_k)-g_{y,k}, y_k-y_k^* \rangle \notag
    \leq \sum_{k=1}^{t}|\eta_{y,k} + \hat{\eta}_{y,k}| |\eta_{y,k} - \hat{\eta}_{y,k}| \|\nabla_{y}g(x_k,y_k)-g_{y,k}\| d_k \\ \notag
    &\leq 2\eta_{y,0}\bar{d}_t\sum_{k=1}^{t}|\eta_{y,k} - \hat{\eta}_{y,k}| \|\nabla_{y}g(x_k,y_k)-g_{y,k}\| \\ \notag
    &\leq 2\eta_{y,0}\eta\bar{d}_t\sum_{k=1}^{t}\frac{\|\nabla_{y}g(x_k,y_k) - g_{y,k}\|^2}{\sqrt{G_{y,k-1}^2 + \|g_{y,k}\|^2} \sqrt{G_{y,k-1}^2 + \|\nabla_{y}g(x_k,y_k)\|^2}} \\ \label{eq:A2_1}
    &\leq \frac{2\eta_{y,0}\bar{d}_t}{\eta}\sum_{k=1}^{t} \eta_{y,k-1}^2\|\nabla_{y}g(x_k,y_k) - g_{y,k}\|^2.
\end{align}
Combing \cref{eq:A2,eq:A2_1} with \cref{lem:convex_martingale}, with probability at least $1-2\delta$,
\begin{align}
    (A_2) \label{eq:A2_bound}
    &\leq \frac{2\mu\eta_{y,0}\bar{d}_t}{\eta}\sum_{k=1}^{t} \eta_{y,k-1}^2\|\nabla_{y}g(x_k,y_k) - g_{y,k}\|^2 \\ \notag
    &\quad+ 2\bar{d}_t'\eta_{y,0}\sqrt{A_t(\delta/\log(4T)) \sum_{k \leq t} \eta_{y,k-1}^2\|\nabla_{y}g(x_k,y_k)-g_{y,k}\|^2 + \eta_{y,0}^2\sigma^2 B_t(\delta/\log(4T))}.
\end{align}
Hence, due to $(A)\leq 2(A_1)+2(A_2)$ and \cref{eq:A1_bound,eq:A2_bound}, with probability at least $1-2\delta$,
\begin{align*}
    (A)
    % &\leq \eqref{eq:A1} + \eqref{eq:A2} \\
    &\leq \frac{2(1 + 2\mu\eta_{y,0})\bar{d}_t}{\eta}\sum_{k=1}^{t} \eta_{y,k-1}^2\|\nabla_{y}g(x_k,y_k) - g_{y,k}\|^2 \\
    &\quad+ 4(1 + \mu\eta_{y,0})\bar{d}_t'\sqrt{A_t(\delta/\log(4T)) \sum_{k \leq t} \eta_{y,k-1}^2\|\nabla_{y}g(x_k,y_k)-g_{y,k}\|^2 + \eta_{y,0}^2\sigma^2 B_t(\delta/\log(4T))}.
\end{align*}
Using $ab\leq a^2/2 + b^2/2$,
\begin{align*}
    (A)
    &\leq \frac{\bar{d}_t^2}{4} + \frac{4(1 + 2\mu\eta_{y,0})^2}{\eta^2}\left(\sum_{k=1}^{t} \eta_{y,k-1}^2\|\nabla_{y}g(x_k,y_k) - g_{y,k}\|^2\right)^2 + \frac{\bar{d}_t'^2}{4} \\
    &\quad+ 16(1 + \mu\eta_{y,0})^2\left(A_t(\delta/\log(4T)) \sum_{k \leq t} \eta_{y,k-1}^2\|\nabla_{y}g(x_k,y_k)-g_{y,k}\|^2 + \eta_{y,0}^2\sigma^2 B_t(\delta/\log(4T))\right) \\
    &\leq \frac{\bar{d}_t^2}{2} + \frac{4(1 + 2\mu\eta_{y,0})^2}{\eta^2}\left(\sum_{k=1}^{t} \eta_{y,k-1}^2\|\nabla_{y}g(x_k,y_k) - g_{y,k}\|^2\right)^2 + \frac{\eta^2}{4} \\
    &\quad+ 16(1 + \mu\eta_{y,0})^2\left(A_t(\delta/\log(4T)) \sum_{k \leq t} \eta_{y,k-1}^2\|\nabla_{y}g(x_k,y_k)-g_{y,k}\|^2 + \eta_{y,0}^2\sigma^2 B_t(\delta/\log(4T))\right).
\end{align*}
Under a union bound with \cref{lem:sum_eta_noise}, with probability at least $1-4\delta$,
\begin{align*}
    (A)
    &\leq \frac{\bar{d}_t^2}{2} + \frac{4(1 + 2\mu\eta_{y,0})^2C_2^2}{\eta^2} + \frac{\eta^2}{4} + 16(1 + \mu\eta_{y,0})^2\left(A_t(\delta/\log(4T)) C_2 + \eta_{y,0}^2\sigma^2 B_t(\delta/\log(4T))\right) \\
    &\leq \frac{\bar{d}_t^2}{2} + \frac{4(1 + 2\mu\eta_{y,0})^2C_2^2}{\eta^2} + \frac{\eta^2}{4} + 16(1 + \mu\eta_{y,0})^2\left(A_T(\delta) C_2 + \eta_{y,0}^2\sigma^2 B_T(\delta)\right).
    % &\leq \frac{\bar{d}_t^2}{2} + \frac{4(1 + 2\mu\eta_{y,0})^2C_2^2}{\eta^2} + \frac{\eta^2}{4} + 16(1 + \mu\eta_{y,0})^2\left(A_T(\delta/\log(4T)) C_2 + \eta_{y,0}^2\sigma^2 B_T(\delta/\log(4T))\right).
\end{align*}
\textbf{Bounding $(B)$.}
By the definitions of $\eta_{x,t}$ and $\eta_{y,t}$, 
\begin{align} \label{eq:sum_drift_1}
    \sum_{k=1}^{t}\frac{\eta_{x,k}^2}{\mu\eta_{y,k}} 
    &= \frac{\eta\alpha}{\mu}\sum_{k=1}^{t} \frac{\sqrt{\gamma^2 + \sum_{s=1}^{k}\|g_{y,s}\|^2}}{k\sqrt{\alpha^2 + \sum_{s=1}^{k}\|g_{x,s} - \tilde{g}_{x,s}\|^2 + \|g_{y,s}\|^2}} 
    \leq \frac{\eta\alpha}{\mu}\left(1+\frac{\gamma}{\alpha}\right)\sum_{k=1}^{t} \frac{1}{k} \\
    &\leq \frac{\eta(\alpha+\gamma)}{\mu}(1+\log T).
\end{align}
Then 
\begin{align*}
    (B)
    = \frac{L^2}{\mu^2} \sum_{k=1}^{t}\frac{\eta_{x,k}^2}{\mu\eta_{y,k}} 
    \leq \frac{L^2\eta(\alpha+\gamma)}{\mu^3}(1+\log T).
\end{align*}
Thus, returning to \cref{eq:y_recursion} and using the definition of $D$, with probability at least $1-4\delta$,
\begin{align} 
    d_{t+1}^2 \notag
    &\leq \frac{\bar{d}_t^2}{2} + d_1^2 + (1+\mu\eta_{y,0})C_1 + \frac{L^2\eta^2}{\mu^2}(1+\log T) + \frac{L^2\eta(\alpha+\gamma)}{\mu^3}(1+\log T) \\ \notag
    &\quad+ \frac{4(1 + 2\mu\eta_{y,0})^2C_2^2}{\eta^2} + \frac{\eta^2}{4} + 16(1 + \mu\eta_{y,0})^2\left(A_T(\delta) C_2 + \eta_{y,0}^2\sigma^2 B_T(\delta)\right) \\ \label{eq:dt}
    % &= \frac{\bar{d}_t^2}{2} + d_1^2 + (1+\mu\eta_{y,0})C_1 + \frac{\eta L^2}{\mu^3}(\mu\eta+\alpha+\gamma)(1+\log T) \\
    % &\quad+ \frac{4(1 + 2\mu\eta_{y,0})^2C_2^2}{\eta^2} + \frac{\eta^2}{4} + 16(1 + \mu\eta_{y,0})^2\left(A_T(\delta) C_2 + \eta_{y,0}^2\sigma^2 B_T(\delta)\right) \\
    % &= \frac{\bar{d}_t^2}{2} + d_1^2 + (1+\mu\eta_{y,0})C_1 + \frac{\eta L^2}{\mu^3}(\mu\eta+\alpha+\gamma)(1+\log T) \\
    % &\quad+ \frac{4(1 + 2\mu\eta_{y,0})^2C_2^2}{\eta^2} + \frac{\eta^2}{4} + 16(1 + \mu\eta_{y,0})^2\left(A_T(\delta) C_2 + \frac{\eta^2\sigma^2}{\gamma^2} B_T(\delta)\right) \\
    &\leq \frac{\bar{d}_t^2}{2} + \frac{D^2}{2}.
\end{align}
We use induction to show that with probability at least $1-4\delta$, $\bar{d}_t^2\leq D^2$ for all $1\leq t\leq T+1$. Note that for $t=1$, $\bar{d}_1^2 = d_1^2 \leq D^2$. Assume $\bar{d}_k\leq D^2$ for all $k\leq t\leq T$; then for $k=t+1$, $d_{t+1}^2\leq \bar{d}_t^2/2 + D^2/2 \leq D^2$ due to \cref{eq:dt}. Thus, $\bar{d}_{t+1}^2=\max\{d_{t+1}^2, \bar{d}_t^2\}\leq D^2$. 

We proceed to prove \cref{eq:yt_bound_b,eq:yt_bound_c}. Rearranging \cref{eq:y_recursion}, using \cref{eq:dt} and $\bar{d}_t^2\leq D^2$, with probability at least $1-4\delta$,
\begin{align} \label{eq:sum_eta_y_2}
    \sum_{k=1}^{t}\mu\eta_{y,k}\|y_k - y_k^*\|^2
    \leq \frac{\bar{d}_t^2}{2} + \frac{D^2}{2}
    \leq D^2.
\end{align}
Using $\eta_{y,k}\leq \eta_{y,t}$ for $k\leq t$, 
\begin{align*}
    \sum_{k=1}^{t}\|y_k - y_k^*\|^2
    &\leq \frac{D^2}{\mu\eta_{y,t}}
    = \frac{D\sqrt{\gamma^2 + \sum_{k=1}^{t}\|g_{y,k}\|^2}}{\mu\eta}
    \leq \frac{D\sqrt{\gamma^2 + 2\sigma^2t + 2L^2\sum_{k=1}^{t}\|y_k - y_k^*\|^2}}{\mu\eta} \\
    &\leq \frac{D\sqrt{\gamma^2 + 2\sigma^2t}}{\mu\eta} + \frac{\sqrt{2}DL}{\mu\eta}\sqrt{\sum_{k=1}^{t}\|y_k - y_k^*\|^2},
\end{align*}
where the second inequality uses Young's inequality and \cref{ass:smoothness_minimax}, and the last inequality is due to $\sqrt{a+b}\leq \sqrt{a}+\sqrt{b}$ for $a,b\geq 0$.
Solving the inequality gives
\begin{align*}
    \sqrt{\sum_{k=1}^{t}\|y_k - y_k^*\|^2}
    % \leq \frac{1}{\mu\eta}\left(\sqrt{2}DL + \sqrt{D\sqrt{\gamma^2 + 2\sigma^2t}}\right)
    \leq \frac{1}{\mu\eta}\left(\sqrt{2}DL + \sqrt{D\left(\gamma + \sqrt{2}\sigma\sqrt{t}\right)}\right),
\end{align*}
which implies that 
\begin{align*}
    \sum_{k=1}^{t}\|y_k - y_k^*\|^2
    \leq \frac{1}{\mu^2\eta^2}\left(4D^2L^2 + 2D\gamma + 2\sqrt{2}D\sigma\sqrt{t}\right),
\end{align*}
and
\begin{align*}
    \sum_{k=1}^{t}\|y_k - y_k^*\|
    &\leq \sqrt{t\sum_{k=1}^{t}\|y_k - y_k^*\|^2}
    \leq \frac{\sqrt{t}}{\mu\eta}\left(\sqrt{2}DL + \sqrt{D\left(\gamma + \sqrt{2}\sigma\sqrt{t}\right)}\right) \\
    &\leq \frac{1}{\mu\eta}\left(\left(\sqrt{2}DL + \sqrt{D\gamma}\right)\sqrt{t} + \sqrt{2D\sigma}t^{3/4}\right).
\end{align*}
\end{proof}

\section{Analysis of \cref{alg:ada_minimax}}
\label{app:minimax}

\subsection{Technical Lemmas}

\begin{lemma}[{\cite[Lemma 4.3]{lin2020gradient}}] \label{lem:smooth_minimax}
% Under \cref{ass:smoothness_minimax}, $y^*(x)$ is $\kappa$-Lipschitz and $\Phi(x)$ is $(1+\kappa)L$-smooth.
Under \cref{ass:smoothness_minimax}, $y^*(x)$ is $L/\mu$-Lipschitz and $\Phi(x)$ is $(\mu+L)L/\mu$-smooth.
\end{lemma}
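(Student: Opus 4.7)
\textbf{Proof Plan for \cref{lem:smooth_minimax}.} The plan is to establish the Lipschitz continuity of $y^*(\cdot)$ first, and then bootstrap it via the chain rule (Danskin's theorem) into the smoothness of $\Phi$. The key analytic tool is the first-order optimality condition $\nabla_y f(x, y^*(x)) = 0$ combined with strong concavity of $f(x, \cdot)$; no new machinery is required.

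For the first claim, I would fix any $x_1, x_2 \in \mathbb{R}^{d_x}$ and use the optimality conditions $\nabla_y f(x_1, y^*(x_1)) = 0$ and $\nabla_y f(x_2, y^*(x_2)) = 0$. By $\mu$-strong concavity of $f(x_2, \cdot)$, evaluated at $y^*(x_1)$ and $y^*(x_2)$,
\begin{align*}
    \langle \nabla_y f(x_2, y^*(x_1)) - \nabla_y f(x_2, y^*(x_2)),\ y^*(x_1) - y^*(x_2) \rangle \leq -\mu \|y^*(x_1) - y^*(x_2)\|^2.
\end{align*}
Since $\nabla_y f(x_2, y^*(x_2)) = 0 = \nabla_y f(x_1, y^*(x_1))$, I can rewrite the left-hand side as $\langle \nabla_y f(x_2, y^*(x_1)) - \nabla_y f(x_1, y^*(x_1)),\ y^*(x_1) - y^*(x_2) \rangle$. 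Applying Cauchy--Schwarz and the $L$-smoothness of $f$ in $(x,y)$ to bound this inner product by $L \|x_1 - x_2\| \|y^*(x_1) - y^*(x_2)\|$ then yields $\|y^*(x_1) - y^*(x_2)\| \leq (L/\mu) \|x_1 - x_2\|$, as desired.

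For the second claim, I would invoke Danskin's theorem for the strongly concave inner problem, which gives $\nabla \Phi(x) = \nabla_x f(x, y^*(x))$. Then for any $x_1, x_2$, I would split via the triangle inequality:
\begin{align*}
    \|\nabla \Phi(x_1) - \nabla \Phi(x_2)\|
    &\leq \|\nabla_x f(x_1, y^*(x_1)) - \nabla_x f(x_2, y^*(x_1))\| + \|\nabla_x f(x_2, y^*(x_1)) - \nabla_x f(x_2, y^*(x_2))\| \\
    &\leq L\|x_1 - x_2\| + L\|y^*(x_1) - y^*(x_2)\|,
\end{align*}
and then plug in the Lipschitz bound on $y^*(\cdot)$ from the first part to obtain the claimed constant $L(1 + L/\mu) = (\mu + L)L/\mu$.

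There is no real obstacle here; this is a textbook manipulation. The only mildly subtle point is justifying differentiability of $\Phi$ and the Danskin formula for $\nabla \Phi$, which relies on uniqueness of $y^*(x)$ (guaranteed by strong concavity) together with joint smoothness of $f$; this justification is standard and could also be bypassed by directly showing $\nabla_x f(\cdot, y^*(\cdot))$ is the gradient via a first-order expansion, using the Lipschitzness of $y^*$ to absorb the cross term.
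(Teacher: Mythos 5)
Your proposal is correct and is essentially the standard argument: the paper itself gives no proof but cites \citep[Lemma 4.3]{lin2020gradient}, whose proof proceeds exactly as you describe — Lipschitzness of $y^*$ via the first-order optimality conditions and strong concavity, then Danskin's formula $\nabla\Phi(x)=\nabla_x f(x,y^*(x))$ plus a triangle-inequality split to get the constant $L(1+L/\mu)=(\mu+L)L/\mu$. Nothing further is needed.
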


\begin{lemma} \label{lem:momentum_recursion_minimax}
Define $\hat{\epsilon}_t = m_t-\gdphi(x_t)$, $\epsilon_t^{\texttt{B}} = \nabla_{x} f(x_t,y_t)-\gdphi(x_t)$, and $\epsilon_t = g_{x,t}-\nabla_{x} f(x_t,y_t)$. Further, let $S_t = \gdphi(x_{t-1})-\gdphi(x_t)$.
For all $t\geq 1$, it holds that 
\begin{align*}
    \hat{\epsilon}_t = \beta_{2:t}\hat{\epsilon}_1 + 
    \sum_{k=2}^{t}\beta_{(k+1):t}\alpha_k\epsilon_k + 
    \sum_{k=2}^{t}\beta_{(k+1):t}\alpha_k\epsilon_k^{\texttt{B}} +
    \sum_{k=2}^{t}\beta_{k:t}S_{k}.
\end{align*}
\end{lemma}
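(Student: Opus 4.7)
The plan is to mirror the derivation of Lemma~\ref{lem:momentum_recursion} in the single-level setting, with the only difference being that the stochastic gradient $g_{x,t}$ now estimates $\nabla_x f(x_t, y_t)$ rather than $\nabla\Phi(x_t)$, which introduces an additional bias term $\epsilon_t^{\texttt{B}}$ besides the zero-mean noise $\epsilon_t$.

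Concretely, I would first write $g_{x,t} = \nabla\Phi(x_t) + \epsilon_t^{\texttt{B}} + \epsilon_t$ by inserting and subtracting $\nabla_x f(x_t, y_t)$, which isolates the hypergradient bias and the stochastic noise. Then, using the momentum update $m_t = \beta_t m_{t-1} + (1-\beta_t) g_{x,t}$ and the definitions $\hat{\epsilon}_t = m_t - \nabla\Phi(x_t)$ and $\alpha_t = 1-\beta_t$, a direct calculation gives
\begin{align*}
\hat{\epsilon}_t
&= \beta_t(\hat{\epsilon}_{t-1} + \nabla\Phi(x_{t-1})) + \alpha_t\bigl(\epsilon_t + \epsilon_t^{\texttt{B}} + \nabla\Phi(x_t)\bigr) - \nabla\Phi(x_t) \\
&= \beta_t \hat{\epsilon}_{t-1} + \beta_t S_t + \alpha_t \epsilon_t + \alpha_t \epsilon_t^{\texttt{B}},
\end{align*}
where the last equality uses $\beta_t \nabla\Phi(x_{t-1}) + \alpha_t \nabla\Phi(x_t) - \nabla\Phi(x_t) = \beta_t(\nabla\Phi(x_{t-1}) - \nabla\Phi(x_t)) = \beta_t S_t$.

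Finally, I would unroll this one-step recursion from iteration $t$ back to $1$. Since each factor of $\beta_t$ contributes to the telescoping product and $\beta_{(k+1):t}\beta_k = \beta_{k:t}$, the unrolling yields the four announced terms: an initial error $\beta_{2:t}\hat{\epsilon}_1$, two noise/bias sums of the form $\sum_{k=2}^t \beta_{(k+1):t}\alpha_k(\epsilon_k + \epsilon_k^{\texttt{B}})$, and a drift sum $\sum_{k=2}^t \beta_{k:t} S_k$. There is no real obstacle here; the proof is essentially algebraic bookkeeping, and the only conceptual point worth highlighting is the explicit separation of the hypergradient bias $\epsilon_t^{\texttt{B}}$ from the stochastic noise $\epsilon_t$, which is what enables the subsequent analysis to invoke the lower-level estimation bounds from \cref{lem:yt_sum_bound} on the bias terms while applying the martingale concentration machinery of \cref{lem:var} only to $\epsilon_k$.
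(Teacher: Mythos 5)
Your proposal is correct and follows essentially the same route as the paper's proof: decompose $g_{x,t}=\nabla\Phi(x_t)+\epsilon_t^{\texttt{B}}+\epsilon_t$, derive the one-step recursion $\hat{\epsilon}_t=\beta_t\hat{\epsilon}_{t-1}+\alpha_t\epsilon_t+\alpha_t\epsilon_t^{\texttt{B}}+\beta_tS_t$, and unroll using $\alpha_t=1-\beta_t$. No gaps.
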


\begin{proof}[Proof of \cref{lem:momentum_recursion_minimax}]
The proof follows from a straightforward calculation:
\begin{align*}
    \hat{\epsilon}_t
    &= m_t - \gdphi(x_t) \\
    &= \beta_tm_{t-1} + (1-\beta_t)g_{x,t} - \gdphi(x_t) \\
    &= \beta_t(\hat{\epsilon}_{t-1} + \gdphi(x_{t-1})) + (1-\beta_t)(\epsilon_t+\epsilon_t^{\texttt{B}}+\gdphi(x_t)) - \gdphi(x_t) \\
    &= \beta_t\hat{\epsilon}_{t-1} + (1-\beta_t)\epsilon_t + (1-\beta_t)\epsilon_t^{\texttt{B}} + \beta_tS_t.
\end{align*}
Unrolling the recursion and using $\alpha_t=1-\beta_t$ yields the result.
\end{proof}

\begin{lemma}[Descent Lemma] \label{lem:descent_minimax}
Under \cref{ass:noise_minimax,ass:smoothness_minimax}, define $\hat{\epsilon}_t\coloneqq m_t-\gdphi(x_t)$, then
\begin{align*}
    \Phi(x_{t+1}) \leq \Phi(x_t) - \eta_{x,t}\|\gdphi(x_t)\| + 2\eta_{x,t}\|\hat{\epsilon}_t\| + \frac{(\mu+L)L\eta_{x,t}^2}{2\mu}.
\end{align*}
Further, define $\Delta_1\coloneqq \Phi(x_1)-\Phi^*$, taking summation and rearranging we have
\begin{align*}
    \sum_{t=1}^{T}\eta_{x,t}\|\gdphi(x_t)\|
    \leq \Delta_1 + 2\sum_{t=1}^{T}\eta_{x,t}\|\hat{\epsilon}_t\| + \frac{(\mu+L)L}{2\mu}\sum_{t=1}^{T}\eta_{x,t}^2.
\end{align*}
\end{lemma}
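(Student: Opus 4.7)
The plan is a direct application of the smoothness descent inequality applied to the normalized-momentum update, followed by a standard two-line trick to handle the normalization. By \cref{lem:smooth_minimax}, the primal function $\Phi$ is $(\mu+L)L/\mu$-smooth, hence
\begin{align*}
    \Phi(x_{t+1}) \le \Phi(x_t) + \langle \gdphi(x_t), x_{t+1}-x_t\rangle + \frac{(\mu+L)L}{2\mu}\|x_{t+1}-x_t\|^2.
\end{align*}
Since the update is $x_{t+1}=x_t-\eta_{x,t}\, m_t/\|m_t\|$, we have $\|x_{t+1}-x_t\|=\eta_{x,t}$, so the quadratic term contributes exactly $(\mu+L)L\eta_{x,t}^2/(2\mu)$, which already matches the last term in the claimed inequality.

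The only real step is to bound $-\langle \gdphi(x_t), m_t/\|m_t\|\rangle$ by $-\|\gdphi(x_t)\|+2\|\hat{\epsilon}_t\|$, where $\hat{\epsilon}_t=m_t-\gdphi(x_t)$. I would use two elementary inequalities. First, the triangle inequality gives $\|\gdphi(x_t)\|\le\|m_t\|+\|\hat{\epsilon}_t\|$. Second, writing $\langle \gdphi(x_t), m_t\rangle = \|m_t\|^2-\langle \hat{\epsilon}_t, m_t\rangle$ and applying Cauchy--Schwarz yields $\langle \gdphi(x_t), m_t\rangle/\|m_t\|\ge \|m_t\|-\|\hat{\epsilon}_t\|$. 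Subtracting these two bounds gives
\begin{align*}
    \|\gdphi(x_t)\| - \frac{\langle \gdphi(x_t), m_t\rangle}{\|m_t\|} \le 2\|\hat{\epsilon}_t\|,
\end{align*}
which, substituted into the smoothness inequality, immediately produces the per-step descent bound in the statement.

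For the second part I would simply telescope: summing the per-step inequality from $t=1$ to $T$ and using $\Phi(x_{T+1})\ge\Phi^*$ gives
\begin{align*}
    -\Delta_1 \le \Phi(x_{T+1})-\Phi(x_1) \le -\sum_{t=1}^T\eta_{x,t}\|\gdphi(x_t)\| + 2\sum_{t=1}^T\eta_{x,t}\|\hat{\epsilon}_t\| + \frac{(\mu+L)L}{2\mu}\sum_{t=1}^T\eta_{x,t}^2,
\end{align*}
and rearranging yields the stated summation form. I do not anticipate any real obstacle in this lemma: it is the minimax analogue of the standard normalized-momentum descent inequality (cf.~\citep{cutkosky2020momentum}), and the only subtlety is using the smoothness constant $(\mu+L)L/\mu$ of the primal function $\Phi$ obtained from \cref{lem:smooth_minimax} rather than the smoothness constant of $f$ itself.
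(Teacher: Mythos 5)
Your proposal is correct and follows exactly the standard argument the paper relies on (the paper states this lemma without further proof, as the minimax instance of the normalized-momentum descent inequality of \citep{cutkosky2020momentum} applied with the $(\mu+L)L/\mu$-smoothness of $\Phi$ from \cref{lem:smooth_minimax}): the smoothness expansion, the bound $-\langle\gdphi(x_t),m_t/\|m_t\|\rangle\le-\|\gdphi(x_t)\|+2\|\hat{\epsilon}_t\|$, and the telescoping with $\Phi(x_{T+1})\ge\Phi^*$ are all as intended. No gaps.
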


\subsection{Proof of \cref{thm:minimax}}

Before proving \cref{thm:minimax},
let us define (recall the definition of $\kappa_{\sigma}$ and $t_0$ in \cref{eq:t0}, here $\kappa_{\sigma}= \bar{\sigma}_x/\ubar{\sigma}_x$ in minimax optimization)
\begin{align}
    \Delta_1 
    &= \Phi(x_1)-\Phi^*, 
    \quad
    t_0 
    = \max\left\{2, 8\left(32\kappa_{\sigma}^4 - 30\kappa_{\sigma}^2 + 7\right)\log\left(\frac{60\log(6T)}{\delta}\right)\right\}, \\ \notag
    C_{m}
    &= \Delta_1 + 4\eta\bar{\sigma}_x\left(\sqrt{t_0-1} - 1 + e\sqrt{\kappa_{\sigma}}\left(1+2\sqrt{\bar{\sigma}_x/\alpha}\right)\right) + \frac{(\mu+L)L\eta^2}{2\mu}(1+\log T) \\ \notag
    &\quad+ 2\eta\sqrt{1+32\log(2T/\delta)}\left(\left(t_0-1 + 2e\sqrt{t_0-2}\sqrt{\kappa_{\sigma}}\left(1+2\sqrt{\bar{\sigma}_x/\alpha}\right)\right)\bar{\sigma}_x + 3\sqrt{e}\kappa_{\sigma}^2\alpha\log T\right) \\ \notag
    &\quad+ \frac{4(\mu+L)L\eta^2}{\mu}\left(t_0-1 + e\sqrt{\kappa_{\sigma}}\left(1+2\sqrt{\bar{\sigma}_x/\alpha}\right) + e(\sqrt{\kappa_{\sigma}}+2\kappa_{\sigma})\log T\right) \\ \label{eq:C_minimax}
    &\quad+ \left(LD\sqrt{\frac{\eta(\alpha+\gamma)}{\mu}(1+\log T)}\right)\mathbb{I}(\bar{\sigma}_x=0) \\ \notag
    &\quad+ \left(\frac{2\eta LD}{3}((t_0-1)^{3/2}-1) + 2(t_0-2)LD\eta e\sqrt{\kappa_{\sigma}}\left(1 + 2\sqrt{\bar{\sigma}_x/\alpha}\right) \right. \\ \notag
    &\quad\left.+ \frac{2L\alpha}{\mu\ubar{\sigma}_x}\left(1 + 2e\sqrt{\kappa_{\sigma}}\left(1 + 2\sqrt{\bar{\sigma}_x/\alpha}\right)\right)\left(2\left(\sqrt{2}DL + \sqrt{D\gamma}\right) +  \sqrt{2D\sigma_{y}}(1+\log T)\right)\right)\mathbb{I}(\ubar{\sigma}_x>0). 
    % \label{eq:C_minimax}
\end{align}

\minimax*

\begin{proof}[Proof of \cref{thm:minimax}]
Without loss of generality, we assume $t_0$ is an integer (see definition in \cref{eq:t0}). By \cref{lem:momentum_recursion_minimax,lem:descent_minimax,lem:var_coefficient,lem:cal_minimax}, with probability at least $1-7\delta$,
\begin{align*}
    &\sum_{t=1}^{T}\eta_{x,t}\|\gdphi(x_t)\|
    \leq \Delta_1 + 2\sum_{t=1}^{T}\eta_{x,t}\|\hat{\epsilon}_t\| + \frac{(\mu+L)L}{2\mu}\sum_{t=1}^{T}\eta_{x,t}^2 \\
    &\leq \Delta_1 + 2\sum_{t=1}^{T}\eta_{x,t}\left(\beta_{2:t}\|\hat{\epsilon}_1\| + \left\|\sum_{k=2}^{t}\beta_{(k+1):t}\alpha_k\epsilon_k\right\| + \left\|\sum_{k=2}^{t}\beta_{(k+1):t}\alpha_k\epsilon_k^{\texttt{B}}\right\| + \sum_{k=2}^{t}\beta_{k:t}\|S_{k}\|\right) + \frac{(\mu+L)L}{2\mu}\sum_{t=1}^{T}\eta_{x,t}^2 \\
    &\leq \Delta_1 + 2\sum_{t=1}^{T}\eta_{t}\left(\beta_{2:t}\|\hat{\epsilon}_1\| + \left\|\sum_{k=2}^{t}\beta_{(k+1):t}\alpha_{k}\epsilon_{k}\right\| + \sum_{k=2}^{t}\beta_{k:t}\|S_{k}\|\right) + 2\sum_{t=1}^{T}\eta_{x,t}\left\|\sum_{k=2}^{t}\beta_{(k+1):t}\alpha_k\epsilon_k^{\texttt{B}}\right\| + \frac{(\mu+L)L}{2\mu}\sum_{t=1}^{T}\eta_{x,t}^2 \\
    % &\leq \Delta_1 + 2\sum_{t=1}^{T}\eta_t\left(\beta_{2:t}\bar{\sigma}_x + \bar{\sigma}_x\sqrt{\left(1+32\log\frac{2T}{\delta}\right)\sum_{k=2}^{t}\bar{\beta}_{(k+1):t}^2\bar{\alpha}_{k}^2} + L\sum_{k=2}^{t}\beta_{k:t}\eta_{k-1}\right) + 2\sum_{t=1}^{T} \eta_{x,t}\left\|\sum_{k=2}^{t}\beta_{(k+1):t}\alpha_k\epsilon_k^{\texttt{B}}\right\| + \frac{L}{2}\sum_{t=1}^{T}\eta_t^2 \\
    &\leq \Delta_1 + 4\eta\bar{\sigma}_x\left(\sqrt{t_0-1} - 1 + e\sqrt{\kappa_{\sigma}}\left(1+2\sqrt{\bar{\sigma}_x/\alpha}\right)\right) + \frac{(\mu+L)L\eta^2}{2\mu}(1+\log T) \\
    &\quad+ 2\eta\sqrt{1+32\log(2T/\delta)}\left(\left(t_0-1 + 2e\sqrt{t_0-2}\sqrt{\kappa_{\sigma}}\left(1+2\sqrt{\bar{\sigma}_x/\alpha}\right)\right)\bar{\sigma}_x + 3\sqrt{e}\kappa_{\sigma}^2\alpha\log T\right) \\
    &\quad+ \frac{4(\mu+L)L\eta^2}{\mu}\left(t_0-1 + e\sqrt{\kappa_{\sigma}}\left(1+2\sqrt{\bar{\sigma}_x/\alpha}\right) + e(\sqrt{\kappa_{\sigma}}+2\kappa_{\sigma})\log T\right) \\
    &\quad+ \left(LD\sqrt{\frac{\eta(\alpha+\gamma)}{\mu}(1+\log T)}\right)\mathbb{I}(\bar{\sigma}_x=0) \\
    &\quad+ \left(\frac{2\eta LD}{3}((t_0-1)^{3/2}-1) + 2(t_0-2)LD\eta e\sqrt{\kappa_{\sigma}}\left(1 + 2\sqrt{\bar{\sigma}_x/\alpha}\right) \right. \\
    &\quad\left.+ \frac{2L\alpha}{\mu\ubar{\sigma}_x}\left(1 + 2e\sqrt{\kappa_{\sigma}}\left(1 + 2\sqrt{\bar{\sigma}_x/\alpha}\right)\right)\left(2\left(\sqrt{2}DL + \sqrt{D\gamma}\right) +  \sqrt{2D\sigma_{y}}(1+\log T)\right)\right)\mathbb{I}(\ubar{\sigma}_x>0) \\
    &= C_{m}.
\end{align*}
Then, using $\eta_{x,t}\geq \eta_{x,T}$ for $t\leq T$,
\begin{align*}
    \sum_{t=1}^{T}\eta_{x,T}\|\gdphi(x_t)\| 
    \leq \sum_{t=1}^{T}\eta_{x,t}\|\gdphi(x_t)\|
    \leq C_{m}.
\end{align*}
Therefore, by \cref{lem:yt_sum_bound}, with probability at least $1-7\delta$,
\begin{align*}
    &\frac{1}{T}\sum_{t=1}^{T}\|\gdphi(x_t)\|
    \leq \frac{C_{m}}{T\eta_{x,T}}
    = \frac{C_{m}\sqrt{T}}{\eta\sqrt{\alpha}T}\left(\alpha^2 + \sum_{t=1}^{t}\|g_{x,t}-\tilde{g}_{x,t}\|^2 + \|g_{y,t}\|^2\right)^{1/4} \\
    &\leq \frac{C_{m}}{\eta\sqrt{\alpha}\sqrt{T}}\left(\alpha^2 + 4\bar{\sigma}_x^2T + \sigma_{y}^2T + L^2\sum_{t=1}^{T}\|y_t-y_t^*\|^2\right)^{1/4} \\
    &\leq \frac{C_{m}}{\eta\sqrt{\alpha}\sqrt{T}}\left(\alpha^2 + 4\bar{\sigma}_x^2T + \sigma_{y}^2T + \frac{L^2}{\mu^2\eta^2}\left(4D^2L^2 + 2D\gamma + 2\sqrt{2}D\sigma_{y}\sqrt{T}\right)\right)^{1/4} \\
    % &\leq \frac{C_1}{\eta\sqrt{\alpha}\sqrt{T}}\left(\alpha^2 + \frac{L^2}{\mu^2\eta^2}\left(4D^2L^2 + 2D\gamma\right) + \frac{2\sqrt{2}L^2D\sigma_{y}}{\mu^2\eta^2}\sqrt{T} + (4\bar{\sigma}_x^2 + \sigma_{y}^2)T\right)^{\frac{1}{4}} \\
    &\leq \frac{C_{m}}{\eta\sqrt{\alpha}}\left(\frac{1}{\sqrt{T}}\left(\alpha^2 + \frac{L^2}{\mu^2\eta^2}\left(4D^2L^2 + 2D\gamma\right)\right)^{1/4} + \frac{1}{T^{3/8}}\left(\frac{2\sqrt{2}L^2D\sigma_{y}}{\mu^2\eta^2}\right)^{1/4} + \frac{1}{T^{1/4}}\left(4\bar{\sigma}_x^2 + \sigma_{y}^2\right)^{1/4}\right).
\end{align*}
Setting $\bar{\sigma}_x=\sigma_y$ completes the proof.
\end{proof}

% \section{Adaptive Bilevel Optimization}
\section{Analysis of \cref{alg:ada_bilevel}}
\label{app:bilevel}

\subsection{Neumann Series}
\label{app:neumann_series}

For bilevel optimization problems with lower-level strong convexity, we estimate the hypergradient
\begin{align*}
    \nabla\Phi(x)=\nabla_{x} f(x,y^*(x))-\nabla_{xy}^2 g(x,y^*(x))[\nabla_{yy}^{2} g(x,y^*(x))]^{-1}\nabla_{y} f(x,y^*(x))
\end{align*}
via the Neumann series approach \citep{ghadimi2018approximation,ji2021bilevel,hong2023two,khanduri2021near}: 
\begin{align}
\label{def:barf}
    % \barf(x,y;\Bar{\xi}) = \nabla_{x} F(x,y;\xi) - \nabla_{xy} G(x,y;\zeta^{(0)})\left[\frac{1}{l_{g,1}}\sum_{n=0}^{N-1}\prod_{j=1}^{q}\left(I - \frac{\nabla_{yy} G(x,y;\zeta^{(n,j)})}{l_{g,1}}\right)\right]\nabla_{y} F(x,y;\xi),
    \barf(x,y;\Bar{\xi}) = \nabla_{x} F(x,y;\xi) - \nabla_{xy}^2 G(x,y;\zeta^{(0)})H_{yy}\nabla_{y} F(x,y;\xi),
\end{align}
where the matrix $H_{yy}$ is defined by
\begin{align}
    H_{yy} = \frac{1}{l_{g,1}}\sum_{n=0}^{N-1}\prod_{j=1}^{q}\left(I - \frac{\nabla_{yy}^2 G(x,y;\zeta^{(n,j)})}{l_{g,1}}\right),
\end{align}
and the set of random variables $\bar{\xi}$ is defined as 
% $\bar{\xi}\coloneqq \{\xi, \zeta^{(0)}, \Bar{\zeta}^{(0)}, \dots, \Bar{\zeta}^{(N-1)}\}$ and $\bar{\zeta}^{(n)}\coloneqq \{\zeta^{(n,1)}, \dots, \zeta^{(n,n)}\}$ for $n\geq0$.
\begin{align*}
    \bar{\xi}\coloneqq \{\xi, \zeta^{(0)}, \Bar{\zeta}^{(0)}, \dots, \Bar{\zeta}^{(N-1)}\},
    \quad\text{with}\quad
    \bar{\zeta}^{(n)}\coloneqq \{\zeta^{(n,1)}, \dots, \zeta^{(n,n)}\}
    \quad\text{for}\ \
    n \geq 0.
\end{align*}
In addition, define the gradient approximation of $\Phi$ as
\begin{align} \label{eq:hyper_est}
    \barf(x,y) = \nabla_{x} f(x,y)-\nabla_{xy}^2 g(x,y)[\nabla_{yy}^{2} g(x,y)]^{-1}\nabla_{y} f(x,y).
\end{align}

\subsection{Technical Lemmas}

\begin{lemma}[{\cite[Lemma 2.2]{ghadimi2018approximation}}] \label{lem:smooth_bilevel}
Under \cref{ass:smoothness_bilevel,ass:noise_bilevel}, we have
\begin{align*}
    \|\barf(x,y) - \gdphi(x)\|
    &\leq L_f\|y-y^*(x)\|, 
    \quad
    \|y^*(x_1) - y^*(x_2)\|
    \leq L_y\|x_1-x_2\|, 
    % \|\gdphi(x_1) - \gdphi(x_2)\|
    % &\leq L_F\|x_1-x_2\|,
\end{align*}
\begin{align*}
    \|\gdphi(x_1) - \gdphi(x_2)\|
    &\leq L_F\|x_1-x_2\|,
\end{align*}
where the constants $L_f, L_y, L_F$ are defined as
\begin{align} \label{eq:L_f}
    L_f &\coloneqq l_{f,1} + \frac{l_{g,1}l_{f,1}}{\mu_g} + \frac{l_{f,0}}{\mu_g}\left(l_{g,2} + \frac{l_{g,1}l_{g,2}}{\mu_g}\right),
    \quad
    L_y = \frac{l_{g,1}}{\mu_g}, \\
    L_F &\coloneqq l_{f,1} + \frac{l_{g,1}(l_{f,1}+L_f)}{\mu_g} + \frac{l_{f,0}}{\mu_g}\left(l_{g,2} + \frac{l_{g,1}l_{g,2}}{\mu_g}\right). \label{eq:L_F}
\end{align}
\end{lemma}

\begin{lemma}[{\cite[Lemma 3.2]{ghadimi2018approximation}, \cite[Lemma 1]{hong2023two}}] \label{lem:neumann_series}
Under \cref{ass:smoothness_bilevel,ass:noise_bilevel}, we have
\begin{align*}
    \|\E[H_{yy}]\| \leq \|H_{yy}\| \leq \frac{1}{\mu_g},
    \quad
    \|[\gdyy g(x,y)]^{-1} - \E[H_{yy}]\| \leq \frac{1}{\mu_g}\left(1-\frac{\mu_g}{l_{g,1}}\right)^N,
\end{align*}
\begin{align*}
    \|\barf(x,y) - \E[\barf(x,y;\bar{\xi})]\|
    \leq \frac{l_{g,1}l_{f,0}}{\mu_g}\left(1 - \frac{\mu_g}{l_{g,1}}\right)^{N}.
\end{align*}
\end{lemma}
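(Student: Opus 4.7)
\textbf{Proof plan for \cref{lem:neumann_series}.} The plan is to handle the three bounds in sequence, with the common thread being the spectral properties $\mu_g I \preceq \nabla_{yy}^2 g(x,y) \preceq l_{g,1}I$ that follow from \cref{ass:smoothness_bilevel}(ii), which imply $\|I - \nabla_{yy}^2 g(x,y)/l_{g,1}\|\leq 1-\mu_g/l_{g,1}$ (and the analogous almost-sure bound for the stochastic Hessian $\nabla_{yy}^2 G(x,y;\zeta)$ under \cref{ass:noise_bilevel}).

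First I would bound the operator norm $\|H_{yy}\|$. By sub-multiplicativity of $\|\cdot\|$, each factor $I-\nabla_{yy}^2 G(x,y;\zeta^{(n,j)})/l_{g,1}$ has norm at most $1-\mu_g/l_{g,1}$, so $\|\prod_{j=1}^{n}(I-\nabla_{yy}^2 G/l_{g,1})\|\leq (1-\mu_g/l_{g,1})^n$. Summing the geometric series,
\begin{equation*}
\|H_{yy}\| \leq \frac{1}{l_{g,1}}\sum_{n=0}^{N-1}\left(1-\frac{\mu_g}{l_{g,1}}\right)^n \leq \frac{1}{l_{g,1}}\cdot\frac{1}{\mu_g/l_{g,1}} = \frac{1}{\mu_g},
\end{equation*}
and $\|\E[H_{yy}]\|\leq \E\|H_{yy}\|\leq 1/\mu_g$ by Jensen's inequality.

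Second, for the bias of $\E[H_{yy}]$, I would invoke the deterministic Neumann identity
\begin{equation*}
[\nabla_{yy}^2 g(x,y)]^{-1} = \frac{1}{l_{g,1}}\sum_{n=0}^{\infty}\left(I-\frac{\nabla_{yy}^2 g(x,y)}{l_{g,1}}\right)^n.
\end{equation*}
Since the samples $\zeta^{(n,j)}$ are independent across $j$, the expectation factors through the product:
\begin{equation*}
\E\!\left[\prod_{j=1}^{n}\left(I-\frac{\nabla_{yy}^2 G(x,y;\zeta^{(n,j)})}{l_{g,1}}\right)\right] = \left(I-\frac{\nabla_{yy}^2 g(x,y)}{l_{g,1}}\right)^n.
\end{equation*}
Hence $\E[H_{yy}]$ is precisely the truncation of the Neumann series at $N$ terms, and the tail bound $\|[\nabla_{yy}^2 g]^{-1}-\E[H_{yy}]\| \leq (1/l_{g,1})\sum_{n=N}^{\infty}(1-\mu_g/l_{g,1})^n = (1/\mu_g)(1-\mu_g/l_{g,1})^N$ follows from summing a geometric tail.

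Third, for the hypergradient bias, I would observe that only the Hessian-inverse term differs between $\barf(x,y)$ and $\E[\barf(x,y;\bar\xi)]$: by independence of $\xi,\zeta^{(0)}$ from the inner samples, $\E[\barf(x,y;\bar\xi)] = \nabla_x f(x,y) - \nabla_{xy} g(x,y)\E[H_{yy}]\nabla_y f(x,y)$. Subtracting from $\barf(x,y) = \nabla_x f(x,y) - \nabla_{xy} g(x,y)[\nabla_{yy}^2 g(x,y)]^{-1}\nabla_y f(x,y)$ and applying sub-multiplicativity together with $\|\nabla_{xy}g\|\leq l_{g,1}$ and $\|\nabla_y f\|\leq l_{f,0}$ from \cref{ass:smoothness_bilevel} yields the claimed bound $(l_{g,1}l_{f,0}/\mu_g)(1-\mu_g/l_{g,1})^N$. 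The only subtle step is verifying the independence decomposition for $\E[H_{yy}]$; this is where one must be careful that each $\zeta^{(n,j)}$ is drawn freshly, which is built into the construction of $\bar\xi$.
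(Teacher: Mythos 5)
Your proposal is correct and is essentially the argument the paper itself relies on: the paper offers no proof of this lemma but cites Ghadimi--Wang (Lemma 3.2) and Hong et al. (Lemma 1), whose proofs proceed exactly as yours does --- per-factor contraction plus geometric summation for $\|H_{yy}\|\leq 1/\mu_g$, factorization of the expectation over the independent inner samples $\zeta^{(n,j)}$ to identify $\E[H_{yy}]$ with the $N$-term truncation of the Neumann series of $[\nabla_{yy}^2 g(x,y)]^{-1}$, and sub-multiplicativity with $\|\nabla_{xy} g\|\leq l_{g,1}$, $\|\nabla_y f\|\leq l_{f,0}$ for the hypergradient bias. One small imprecision to fix: the almost-sure factor bound $\|I-\nabla_{yy}^2 G(x,y;\zeta)/l_{g,1}\|\leq 1-\mu_g/l_{g,1}$ does not follow from \cref{ass:noise_bilevel} (which only gives $\|\nabla_{yy}^2 G-\nabla_{yy}^2 g\|\leq \sigma_{g,2}$); it requires per-sample strong convexity and smoothness of $G(x,\cdot;\zeta)$, as assumed in the cited works --- whereas the bounds involving only $\E[H_{yy}]$ (and hence the bias bounds) need just the spectral bounds on the expected Hessian $\nabla_{yy}^2 g$, exactly as your factorization step shows.
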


\begin{lemma} \label{lem:var_bilevel}
Under \cref{ass:smoothness_bilevel,ass:noise_bilevel}, we have
\begin{align} \label{eq:bar_sigma_bilevel}
    \|\barf(x,y;\bar{\xi}) - \E[\barf(x,y;\bar{\xi})]\| 
    \leq \bar{\sigma}_{\phi} 
    \coloneqq \sigma_f + \frac{l_{f,0}\sigma_{g,2}}{\mu_g} + \frac{2l_{g,1}l_{f,0}}{\mu_g}\mathbb{I}(\sigma_{g,2}\neq0) + \frac{l_{g,1}\sigma_f}{\mu_g}.
\end{align}
\end{lemma}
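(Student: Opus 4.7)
The plan is to bound $\|\barf(x,y;\bar\xi) - \E[\barf(x,y;\bar\xi)]\|$ by exploiting the independence among the samples $\xi$, $\zeta^{(0)}$, and the Hessian samples $\{\zeta^{(n,j)}\}$ used to form the Neumann matrix $H_{yy}$. First I would compute $\E[\barf(x,y;\bar\xi)]$ explicitly: since $\xi$ is independent of $\zeta^{(0)}$ and the $\{\zeta^{(n,j)}\}$, and $\zeta^{(0)}$ is in turn independent of the Hessian samples, the unbiasedness parts of \cref{ass:noise_bilevel} factor the expectation as
\begin{equation*}
    \E[\barf(x,y;\bar\xi)] = \nabla_{x} f(x,y) - \nabla_{xy} g(x,y)\,\E[H_{yy}]\,\nabla_{y} f(x,y).
\end{equation*}

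Next I would apply a telescoping decomposition. The first contribution, $\|\nabla_{x} F(x,y;\xi) - \nabla_{x} f(x,y)\|$, is directly bounded by $\sigma_f$ via \cref{ass:noise_bilevel}(i). The remaining term
\begin{equation*}
    \nabla_{xy} G\,H_{yy}\,\nabla_{y} F - \nabla_{xy} g\,\E[H_{yy}]\,\nabla_{y} f
\end{equation*}
I would split by adding and subtracting intermediate quantities into three pieces:
\begin{align*}
    (\mathrm{I})\ & (\nabla_{xy} G - \nabla_{xy} g)\,H_{yy}\,\nabla_{y} F, \\
    (\mathrm{II})\ & \nabla_{xy} g\,(H_{yy} - \E[H_{yy}])\,\nabla_{y} F, \\
    (\mathrm{III})\ & \nabla_{xy} g\,\E[H_{yy}]\,(\nabla_{y} F - \nabla_{y} f).
\end{align*}

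Each piece is bounded by the submultiplicativity of the operator norm together with the deterministic bounds provided by \cref{ass:smoothness_bilevel} and \cref{lem:neumann_series}. Specifically, I would use $\|H_{yy}\|\leq 1/\mu_g$ and $\|\E[H_{yy}]\|\leq 1/\mu_g$ from \cref{lem:neumann_series}, which imply $\|H_{yy}-\E[H_{yy}]\|\leq 2/\mu_g$ by the triangle inequality; the smoothness bounds $\|\nabla_{xy} g\|\leq l_{g,1}$ and $\|\nabla_{y} f\|\leq l_{f,0}$ from \cref{ass:smoothness_bilevel}; and $\|\nabla_{y} F - \nabla_{y} f\|\leq \sigma_f$ from \cref{ass:noise_bilevel}(ii). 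For piece (I), if $\sigma_{g,2}=0$ then $\nabla_{xy} G = \nabla_{xy} g$ and the term vanishes; if $\sigma_{g,2}\neq 0$, I would use the trivial bound $\|\nabla_{xy} G - \nabla_{xy} g\|\leq 2 l_{g,1}$, yielding $2 l_{g,1}l_{f,0}/\mu_g$, which accounts for the indicator in $\bar\sigma_\phi$. Piece (II) contributes $2 l_{g,1}l_{f,0}/\mu_g$ (using the $2/\mu_g$ bound on $\|H_{yy}-\E[H_{yy}]\|$), and piece (III) contributes $l_{g,1}\sigma_f/\mu_g$. Summing the four contributions gives exactly the definition of $\bar\sigma_\phi$ in \cref{eq:bar_sigma_bilevel}.

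The only real subtlety is ensuring that all norm bounds apply to the \emph{stochastic} realizations and not merely to the population quantities—in particular $\|H_{yy}\|\leq 1/\mu_g$ pathwise, which is delivered by \cref{lem:neumann_series}, and $\|\nabla_{y} F\|\leq l_{f,0}$, which follows from combining the bound on $\|\nabla_{y} f\|$ with \cref{ass:noise_bilevel}(ii) (or by treating it as an implicit extension standard in the bilevel literature). Beyond this bookkeeping, the argument is a direct triangle-inequality computation and does not require any additional concentration machinery.
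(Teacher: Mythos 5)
Your decomposition is the same one the paper uses: subtract $\E[\barf(x,y;\bar{\xi})]=\nabla_x f - \gdxy g\,\E[H_{yy}]\,\nabla_y f$, split by the triangle inequality into the $\nabla_x$-noise term plus the three pieces (I)–(III), and bound each by submultiplicativity together with $\|H_{yy}\|\leq 1/\mu_g$, $\|\E[H_{yy}]\|\leq 1/\mu_g$ from \cref{lem:neumann_series}, $\|\gdxy g\|\leq l_{g,1}$, $\|\nabla_y f\|\leq l_{f,0}$, and the noise bounds of \cref{ass:noise_bilevel}. Pieces (III) and the first term are handled identically to the paper.

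The one place you diverge is piece (I), and it is the step that is not supported by the stated assumptions: you bound $\|\gdxy G(x,y;\zeta^{(0)})-\gdxy g(x,y)\|\leq 2l_{g,1}$, which requires a pathwise bound $\|\gdxy G\|\leq l_{g,1}$ on the \emph{stochastic} Jacobian; \cref{ass:smoothness_bilevel} only gives this for $g$, and the implicit pathwise smoothness needed for the Neumann series concerns $\gdyy G$, not $\gdxy G$. The paper instead invokes \cref{ass:noise_bilevel}(iv) directly, bounding piece (I) by $l_{f,0}\sigma_{g,2}/\mu_g$, and places the indicator $\mathbb{I}(\sigma_{g,2}\neq 0)$ on piece (II) (since exact Hessian samples make $H_{yy}$ deterministic), bounding (II) by $2l_{g,1}l_{f,0}/\mu_g\,\mathbb{I}(\sigma_{g,2}\neq0)$ rather than unconditionally as you do. Curiously, your bookkeeping reproduces the displayed $\bar{\sigma}_{\phi}$ in \cref{eq:bar_sigma_bilevel} term-by-term, whereas the paper's own computation yields $\sigma_f + l_{f,0}\sigma_{g,2}/\mu_g + (2l_{g,1}l_{f,0}/\mu_g)\mathbb{I}(\sigma_{g,2}\neq0) + l_{g,1}\sigma_f/\mu_g$, which matches the stated constant only up to replacing $\sigma_{g,2}$ by $2l_{g,1}$ — so your route buys an exact match to the statement at the price of an unstated boundedness assumption, while the paper's route stays within its assumptions but is loose with its own constant. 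One further small point: both you and the paper use $\|\nabla_y F(x,y;\xi)\|\leq l_{f,0}$; you flag it, but your proposed justification (combining $\|\nabla_y f\|\leq l_{f,0}$ with \cref{ass:noise_bilevel}(ii)) actually yields $l_{f,0}+\sigma_f$, not $l_{f,0}$, so as written that step would inflate the constants slightly.
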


\begin{proof}[Proof of \cref{lem:var_bilevel}]
% We will use a short hand $y^*=y^*(x)$. 
By triangle inequality,
\begin{align*}
    &\|\barf(x,y;\bar{\xi}) - \E[\barf(x,y;\bar{\xi})]\| \\
    &= \|(\gdx F(x,y;\xi) - \gdxy G(x,y;\zeta^{(0)})H_{yy}\gdy F(x,y;\xi)) - (\gdx f(x,y) - \gdxy g(x,y)\E[H_{yy}]\gdy f(x,y))\| \\
    &\leq \|\gdx F(x,y;\xi) - \gdx f(x,y)\| + \|(\gdxy G(x,y;\zeta^{(0)})-\gdxy g(x,y))H_{yy}\gdy F(x,y;\xi)\| \\
    &\quad+ \|\gdxy g(x,y)(H_{yy}-\E[H_{yy}])\gdy F(x,y;\xi)\| + \|\gdxy g(x,y)\E[H_{yy}](\gdy F(x,y;\xi)-\gdy f(x,y))\|.
\end{align*}
By \cref{ass:smoothness_bilevel,ass:noise_bilevel,lem:neumann_series}, we have
\begin{align*}
    \|\gdx F(x,y;\xi) - \gdx f(x,y)\| \leq \sigma_f,
\end{align*}
and
\begin{align*}
    &\|(\gdxy G(x,y;\zeta^{(0)})-\gdxy g(x,y))H_{yy}\gdy F(x,y;\xi)\| \\
    &\qquad\leq \|\gdxy G(x,y;\zeta^{(0)})-\gdxy g(x,y)\|\|H_{yy}\|\|\gdy F(x,y;\xi)\| 
    \leq \frac{l_{f,0}\sigma_{g,2}}{\mu_g},
    % &\leq \frac{\sigma_{g,2}}{\mu_g}(\|\gdy F(x,y;\xi) - \gdy f(x,y)\| + \|\gdy f(x,y) - \gdy f(x,y^*)\| + \|\gdy f(x,y^*)\|) \\
    % &\leq \frac{\sigma_{g,2}}{\mu_g}(\sigma_f + (L_{y,0}+L_{y,1}l_{f,0})\|y-y^*\| + l_{f,0}) \\
    % &= \frac{\sigma_{g,2}}{\mu_g}(\sigma_f+l_{f,0}) + \frac{\sigma_{g,2}}{\mu_g}(L_{y,0}+L_{y,1}l_{f,0})\|y-y^*\|.
\end{align*}
and
\begin{align*}
    &\|\gdxy g(x,y)(H_{yy}-\E[H_{yy}])\gdy F(x,y;\xi)\| \\
    &\qquad\leq \|\gdxy g(x,y)\|\|(H_{yy}-\E[H_{yy}])\|\|\gdy F(x,y;\xi)\| 
    \leq \frac{2l_{g,1}l_{f,0}}{\mu_g}\mathbb{I}(\sigma_{g,2}\neq0),
    % &\leq \frac{2l_{g,1}}{\mu_g}(\sigma_f + (L_{y,0}+L_{y,1}l_{f,0})\|y-y^*\| + l_{f,0}) \\
    % &= \frac{2l_{g,1}}{\mu_g}(\sigma_f+l_{f,0}) + \frac{2l_{g,1}}{\mu_g}(L_{y,0}+L_{y,1}l_{f,0})\|y-y^*\|,
\end{align*}
and
\begin{align*}
    \|\gdxy g(x,y)\E[H_{yy}](\gdy F(x,y;\xi)-\gdy f(x,y))\|
    \leq \frac{l_{g,1}\sigma_f}{\mu_g}.
\end{align*}
Hence, using the definition of $\bar{\sigma}_{\phi}$ as in \cref{eq:bar_sigma_bilevel} we obtain
\begin{align*}
    \|\barf(x,y;\bar{\xi}) - \E[\barf(x,y;\bar{\xi})]\| 
    \leq \sigma_f + \frac{l_{f,0}\sigma_{g,2}}{\mu_g} + \frac{2l_{g,1}l_{f,0}}{\mu_g}\mathbb{I}(\sigma_{g,2}\neq0) + \frac{l_{g,1}\sigma_f}{\mu_g}
    = \bar{\sigma}_{\phi}.
\end{align*}
\end{proof}

\begin{lemma} \label{lem:momentum_recursion_bilevel}
Define $\hat{\epsilon}_t = m_t-\gdphi(x_t)$, $\epsilon_t^{\texttt{B}} = \barf(x_t,y_t)-\gdphi(x_t)$, $\epsilon_t^{\texttt{N}} = \E_{t-1}[g_{x,t}]-\barf(x_t,y_t)$, and $\epsilon_t = g_{x,t}-\E_{t-1}[g_{x,t}]$. Further, let $S_t = \gdphi(x_{t-1})-\gdphi(x_t)$.
For all $t\geq 1$, it holds that 
\begin{align*}
    \hat{\epsilon}_t = \beta_{2:t}\hat{\epsilon}_1 + 
    \sum_{k=2}^{t}\beta_{(k+1):t}\alpha_k\epsilon_k + 
    \sum_{k=2}^{t}\beta_{(k+1):t}\alpha_k\epsilon_k^{\texttt{B}} +
    \sum_{k=2}^{t}\beta_{(k+1):t}\alpha_k\epsilon_k^{\texttt{N}} +
    \sum_{k=2}^{t}\beta_{k:t}S_{k}.
\end{align*}
\end{lemma}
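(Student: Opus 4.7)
The plan is to mimic the derivation of \cref{lem:momentum_recursion,lem:momentum_recursion_minimax}, inserting one extra term to account for the Neumann-series bias. Starting from the update rule $m_t = \beta_t m_{t-1} + (1-\beta_t) g_{x,t}$ and the definition $\hat{\epsilon}_t = m_t - \gdphi(x_t)$, I would first write $m_{t-1} = \hat{\epsilon}_{t-1} + \gdphi(x_{t-1})$ so that the momentum part contributes $\beta_t \hat{\epsilon}_{t-1} + \beta_t \gdphi(x_{t-1})$, and then rewrite the $\beta_t \gdphi(x_{t-1}) - \gdphi(x_t)$ leftover as $\beta_t S_t - (1-\beta_t)\gdphi(x_t)$ using $S_t = \gdphi(x_{t-1}) - \gdphi(x_t)$.

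The key algebraic step is to decompose $g_{x,t}$ along the chain
\[
g_{x,t} = \epsilon_t + \E_{t-1}[g_{x,t}] = \epsilon_t + \epsilon_t^{\texttt{N}} + \barf(x_t,y_t) = \epsilon_t + \epsilon_t^{\texttt{N}} + \epsilon_t^{\texttt{B}} + \gdphi(x_t),
\]
which directly matches the three error definitions in the statement. Substituting this into $(1-\beta_t) g_{x,t}$ and combining with the previous display, the $\gdphi(x_t)$ contributions cancel and, using $\alpha_t = 1-\beta_t$, one obtains the clean one-step recursion
\[
\hat{\epsilon}_t = \beta_t \hat{\epsilon}_{t-1} + \alpha_t\bigl(\epsilon_t + \epsilon_t^{\texttt{B}} + \epsilon_t^{\texttt{N}}\bigr) + \beta_t S_t.
\]

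The final step is to unroll this linear recursion from iteration $t$ back to iteration $1$. At each unrolling, the coefficient in front of $\hat{\epsilon}_1$ accumulates to $\beta_{2:t} = \prod_{j=2}^{t}\beta_j$; each noise/bias triple $\epsilon_k + \epsilon_k^{\texttt{B}} + \epsilon_k^{\texttt{N}}$ introduced at step $k\geq 2$ enters with weight $\alpha_k$ and then gets multiplied by the product of subsequent $\beta_j$'s, giving $\beta_{(k+1):t}\alpha_k$; while the drift term $S_k$ is attached from the outset to a factor $\beta_k$, so it accumulates to $\beta_{k:t}$. Summing these contributions over $k=2,\dots,t$ reproduces the claimed identity.

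I do not anticipate any real obstacle: the whole argument is a routine telescoping of a linear scalar-coefficient recursion, essentially identical to the one already carried out in \cref{lem:momentum_recursion_minimax}. The only bookkeeping subtlety is to keep straight that the noise-type contributions pick up $\beta_{(k+1):t}$ (since they enter via the $\alpha_k$ coefficient and are not further multiplied by $\beta_k$), whereas the drift contributions $S_k$ pick up $\beta_{k:t}$ (since the extra $\beta_k$ factor originates from the $\beta_k m_{k-1}$ term at step $k$). Once this distinction is handled carefully, the identity follows.
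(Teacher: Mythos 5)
Your proposal is correct and follows essentially the same argument as the paper: the same decomposition $g_{x,t} = \epsilon_t + \epsilon_t^{\texttt{N}} + \epsilon_t^{\texttt{B}} + \gdphi(x_t)$, the same one-step recursion $\hat{\epsilon}_t = \beta_t\hat{\epsilon}_{t-1} + \alpha_t(\epsilon_t + \epsilon_t^{\texttt{B}} + \epsilon_t^{\texttt{N}}) + \beta_t S_t$, and the same unrolling with the coefficients $\beta_{2:t}$, $\beta_{(k+1):t}\alpha_k$, and $\beta_{k:t}$. No gaps.
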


\begin{proof}[Proof of \cref{lem:momentum_recursion_bilevel}]
The proof follows from a straightforward calculation:
\begin{align*}
    \hat{\epsilon}_t
    &= m_t - \gdphi(x_t) \\
    &= \beta_tm_{t-1} + (1-\beta_t)g_{x,t} - \gdphi(x_t) \\
    &= \beta_t(\hat{\epsilon}_{t-1} + \gdphi(x_{t-1})) + (1-\beta_t)(\epsilon_t+\epsilon_t^{\texttt{B}}+\epsilon_t^{\texttt{N}}+\gdphi(x_t)) - \gdphi(x_t) \\
    &= \beta_t\hat{\epsilon}_{t-1} + (1-\beta_t)\epsilon_t + (1-\beta_t)\epsilon_t^{\texttt{B}} + (1-\beta_t)\epsilon_t^{\texttt{N}} + \beta_tS_t.
\end{align*}
Unrolling the recursion and using $\alpha_t=1-\beta_t$ yields the result.
\end{proof}

\begin{lemma}[Descent Lemma] \label{lem:descent_bilevel}
Under \cref{ass:smoothness_bilevel,ass:noise_bilevel}, define $\hat{\epsilon}_t\coloneqq m_t-\gdphi(x_t)$, then
\begin{align*}
    \Phi(x_{t+1}) \leq \Phi(x_t) - \eta_{x,t}\|\gdphi(x_t)\| + 2\eta_{x,t}\|\hat{\epsilon}_t\| + \frac{L_F\eta_{x,t}^2}{2}.
\end{align*}
Further, define $\Delta_1\coloneqq \Phi(x_1)-\Phi^*$, taking summation and rearranging we have
\begin{align*}
    \sum_{t=1}^{T}\eta_{x,t}\|\gdphi(x_t)\|
    \leq \Delta_1 + 2\sum_{t=1}^{T}\eta_{x,t}\|\hat{\epsilon}_t\| + \frac{L_F}{2}\sum_{t=1}^{T}\eta_{x,t}^2.
\end{align*}
\end{lemma}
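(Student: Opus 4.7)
The plan is to establish the per-step descent inequality via the $L_F$-smoothness of $\Phi$ (guaranteed by \cref{lem:smooth_bilevel}) combined with the standard ``bias decomposition'' trick for normalized momentum updates originating in \citep{cutkosky2020momentum}, then sum and telescope to obtain the second claim. Since the update rule $x_{t+1} = x_t - \eta_{x,t} m_t/\|m_t\|$ has step of fixed magnitude $\eta_{x,t}$, the curvature term falls out immediately; the only real work is to bound the inner product $\langle \gdphi(x_t), m_t/\|m_t\|\rangle$ from below using $\|\hat{\epsilon}_t\|$.

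First I would apply smoothness of $\Phi$ (with constant $L_F$) to the pair $(x_t, x_{t+1})$, giving
\begin{equation*}
\Phi(x_{t+1}) \leq \Phi(x_t) + \langle \gdphi(x_t), x_{t+1}-x_t\rangle + \frac{L_F}{2}\|x_{t+1}-x_t\|^2.
\end{equation*}
Substituting $x_{t+1}-x_t = -\eta_{x,t} m_t/\|m_t\|$ yields the curvature contribution $L_F\eta_{x,t}^2/2$ exactly and a linear term $-\eta_{x,t}\langle \gdphi(x_t), m_t/\|m_t\|\rangle$. Next, I would write $\gdphi(x_t) = m_t - \hat{\epsilon}_t$ and split:
\begin{equation*}
-\langle \gdphi(x_t), m_t/\|m_t\|\rangle = -\|m_t\| + \langle \hat{\epsilon}_t, m_t/\|m_t\|\rangle \leq -\|m_t\| + \|\hat{\epsilon}_t\|,
\end{equation*}
by Cauchy--Schwarz. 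Finally, the reverse triangle inequality $\|m_t\| \geq \|\gdphi(x_t)\| - \|\hat{\epsilon}_t\|$ gives $-\|m_t\| + \|\hat{\epsilon}_t\| \leq -\|\gdphi(x_t)\| + 2\|\hat{\epsilon}_t\|$, producing the stated per-step bound.

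For the telescoping statement, I would rearrange to isolate $\eta_{x,t}\|\gdphi(x_t)\|$ on the left, sum from $t=1$ to $T$, and bound $\sum_{t=1}^T (\Phi(x_t)-\Phi(x_{t+1})) = \Phi(x_1)-\Phi(x_{T+1}) \leq \Phi(x_1)-\Phi^* = \Delta_1$, which uses the lower-boundedness assumption on $\Phi$ stated in \cref{sec:preliminaries}. Assembling the pieces yields
\begin{equation*}
\sum_{t=1}^{T}\eta_{x,t}\|\gdphi(x_t)\| \leq \Delta_1 + 2\sum_{t=1}^{T}\eta_{x,t}\|\hat{\epsilon}_t\| + \frac{L_F}{2}\sum_{t=1}^{T}\eta_{x,t}^2.
\end{equation*}

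There is no major obstacle: this is a textbook descent lemma for normalized SGD with momentum and is essentially identical to \cref{lem:descent_minimax} with $L_F$ in place of $(\mu+L)L/\mu$. The only subtlety worth flagging is that the adaptive stepsize $\eta_{x,t}$ is $\gF_t$-measurable (it depends on $g_{y,k}$ through $\alpha_t'$), but this does not affect the deterministic inequality above since smoothness and Cauchy--Schwarz hold pathwise. The probabilistic content enters only later, when $\|\hat{\epsilon}_t\|$ is controlled via \cref{lem:momentum_recursion_bilevel} and \cref{lem:var}.
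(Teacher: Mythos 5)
Your proof is correct and follows exactly the standard normalized-SGDM descent argument that the paper relies on (it states this lemma without an explicit proof, as it is the same argument as \cref{lem:descent,lem:descent_minimax} with $L_F$ from \cref{lem:smooth_bilevel}): smoothness of $\Phi$, the exact step length $\eta_{x,t}$, the decomposition $\gdphi(x_t)=m_t-\hat{\epsilon}_t$ with Cauchy--Schwarz and the reverse triangle inequality, and telescoping with $\Phi^*>-\infty$. No gaps.
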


\subsection{Proof of \cref{thm:bilevel}}

Before proving \cref{thm:bilevel},
let us define (recall the definition of $\kappa_{\sigma}$, $t_0$, and $\bar{\sigma}_{\phi}$ in \cref{eq:t0,eq:bar_sigma_bilevel}, here $\kappa_{\sigma}= \bar{\sigma}_{\phi}/\ubar{\sigma}_{\phi}$ in bilevel optimization)
\begin{align}
    \Delta_1 
    &= \Phi(x_1)-\Phi^*, 
    \quad
    t_0 
    = \max\left\{2, 8\left(32\kappa_{\sigma}^4 - 30\kappa_{\sigma}^2 + 7\right)\log\left(\frac{60\log(6T)}{\delta}\right)\right\}, \\ \notag
    C_{b}
    &= \Delta_1 + 4\eta\bar{\sigma}_{\phi}\left(\sqrt{t_0-1} - 1 + e\sqrt{\kappa_{\sigma}}\left(1+2\sqrt{\bar{\sigma}_{\phi}/\alpha}\right)\right) + \frac{L_F\eta^2}{2}(1+\log T) \\ \notag
    &\quad+ 2\eta\sqrt{1+32\log(2T/\delta)}\left(\left(t_0-1 + 2e\sqrt{t_0-2}\sqrt{\kappa_{\sigma}}\left(1+2\sqrt{\bar{\sigma}_{\phi}/\alpha}\right)\right)\bar{\sigma}_{\phi} + 3\sqrt{e}\kappa_{\sigma}^2\alpha\log T\right) \\ \notag
    &\quad+ 4L_F\eta^2\left(t_0-1 + e\sqrt{\kappa_{\sigma}}\left(1+2\sqrt{\bar{\sigma}_{\phi}/\alpha}\right) + e(\sqrt{\kappa_{\sigma}}+2\kappa_{\sigma})\log T\right) \\ \label{eq:C_bilevel}
    &\quad+ \frac{2\eta l_{g,1}l_{f,0}}{3\mu_g} + \left(L_{f}D\sqrt{\frac{\eta(\alpha+\gamma)}{\mu}(1+\log T)}\right)\mathbb{I}(\bar{\sigma}_{\phi}=0) \\ \notag
    &\quad+ \left(\frac{2\eta L_{f}D}{3}((t_0-1)^{3/2}-1) + 2(t_0-2)L_{f}D\eta e\sqrt{\kappa_{\sigma}}\left(1 + 2\sqrt{\bar{\sigma}_{\phi}/\alpha}\right) \right. \\ \notag
    &\quad\left.+ \frac{2L_{f}\alpha}{\mu\ubar{\sigma}_{\phi}}\left(1 + 2e\sqrt{\kappa_{\sigma}}\left(1 + 2\sqrt{\bar{\sigma}_{\phi}/\alpha}\right)\right)\left(2\left(\sqrt{2}Dl_{g,1} + \sqrt{D\gamma}\right) +  \sqrt{2D\sigma_{g,1}}(1+\log T)\right)\right)\mathbb{I}(\ubar{\sigma}_{\phi}>0). 
    % \label{eq:C_bilevel}
\end{align}

\bilevel*

\begin{proof}[Proof of \cref{thm:bilevel}]
Without loss of generality, we assume $t_0$ is an integer (see definition in \cref{eq:t0}). By \cref{lem:momentum_recursion_bilevel,lem:descent_bilevel,lem:var_coefficient,lem:cal_minimax}, with probability at least $1-7\delta$,
\begin{align*}
    &\sum_{t=1}^{T}\eta_{x,t}\|\gdphi(x_t)\|
    \leq \Delta_1 + 2\sum_{t=1}^{T}\eta_{x,t}\|\hat{\epsilon}_t\| + \frac{L_F}{2}\sum_{t=1}^{T}\eta_{x,t}^2 \\
    &\leq \Delta_1 + 2\sum_{t=1}^{T}\eta_{x,t}\left(\beta_{2:t}\|\hat{\epsilon}_1\| + \left\|\sum_{k=2}^{t}\beta_{(k+1):t}\alpha_k\epsilon_k\right\| + \left\|\sum_{k=2}^{t}\beta_{(k+1):t}\alpha_k\epsilon_k^{\texttt{B}}\right\| \right.\\
    &\qquad\qquad\qquad\qquad\left.+ \left\|\sum_{k=2}^{t}\beta_{(k+1):t}\alpha_k\epsilon_k^{\texttt{N}}\right\| + \sum_{k=2}^{t}\beta_{k:t}\|S_{k}\|\right) + \frac{L_F}{2}\sum_{t=1}^{T}\eta_{x,t}^2 \\
    % &\leq \Delta_1 + 2\sum_{t=1}^{T}\eta_{t}\left(\beta_{2:t}\|\hat{\epsilon}_1\| + \left\|\sum_{k=2}^{t}\beta_{(k+1):t}\alpha_{k}\epsilon_{k}\right\| + \sum_{k=2}^{t}\beta_{k:t}\|S_{k}\|\right) + 2\sum_{t=1}^{T}\eta_{x,t}\left\|\sum_{k=2}^{t}\beta_{(k+1):t}\alpha_k\epsilon_k^{\texttt{B}}\right\| + \frac{L_F}{2}\sum_{t=1}^{T}\eta_{x,t}^2 \\
    &\leq \Delta_1 + 4\eta\bar{\sigma}_{\phi}\left(\sqrt{t_0-1} - 1 + e\sqrt{\kappa_{\sigma}}\left(1+2\sqrt{\bar{\sigma}_{\phi}/\alpha}\right)\right) + \frac{L_F\eta^2}{2}(1+\log T) \\
    &\quad+ 2\eta\sqrt{1+32\log(2T/\delta)}\left(\left(t_0-1 + 2e\sqrt{t_0-2}\sqrt{\kappa_{\sigma}}\left(1+2\sqrt{\bar{\sigma}_{\phi}/\alpha}\right)\right)\bar{\sigma}_{\phi} + 3\sqrt{e}\kappa_{\sigma}^2\alpha\log T\right) \\
    &\quad+ 4L_F\eta^2\left(t_0-1 + e\sqrt{\kappa_{\sigma}}\left(1+2\sqrt{\bar{\sigma}_{\phi}/\alpha}\right) + e(\sqrt{\kappa_{\sigma}}+2\kappa_{\sigma})\log T\right) \\
    &\quad+ \frac{2\eta T^{3/2}l_{g,1}l_{f,0}}{3\mu_g}\left(1 - \frac{\mu_g}{l_{g,1}}\right)^{N} +  \left(L_{f}D\sqrt{\frac{\eta(\alpha+\gamma)}{\mu}(1+\log T)}\right)\mathbb{I}(\bar{\sigma}_{\phi}=0) \\
    &\quad+ \left(\frac{2\eta L_{f}D}{3}((t_0-1)^{3/2}-1) + 2(t_0-2)L_{f}D\eta e\sqrt{\kappa_{\sigma}}\left(1 + 2\sqrt{\bar{\sigma}_{\phi}/\alpha}\right) \right. \\
    &\quad\left.+ \frac{2L_{f}\alpha}{\mu\ubar{\sigma}_{\phi}}\left(1 + 2e\sqrt{\kappa_{\sigma}}\left(1 + 2\sqrt{\bar{\sigma}_{\phi}/\alpha}\right)\right)\left(2\left(\sqrt{2}Dl_{g,1} + \sqrt{D\gamma}\right) +  \sqrt{2D\sigma_{g,1}}(1+\log T)\right)\right)\mathbb{I}(\ubar{\sigma}_{\phi}>0) \\
    &\leq C_{b}.
\end{align*}
Then, using $\eta_{x,t}\geq \eta_{x,T}$ for $t\leq T$,
\begin{align*}
    \sum_{t=1}^{T}\eta_{x,T}\|\gdphi(x_t)\| 
    \leq \sum_{t=1}^{T}\eta_{x,t}\|\gdphi(x_t)\|
    \leq C_{b}.
\end{align*}
Therefore, by \cref{lem:yt_sum_bound}, with probability at least $1-7\delta$,
\begin{align*}
    &\frac{1}{T}\sum_{t=1}^{T}\|\gdphi(x_t)\|
    \leq \frac{C_{b}}{T\eta_{x,T}}
    = \frac{C_{b}\sqrt{T}}{\eta\sqrt{\alpha}T}\left(\alpha^2 + \sum_{t=1}^{t}\|g_{x,t}-\tilde{g}_{x,t}\|^2 + \|g_{y,t}\|^2\right)^{1/4} \\
    &\leq \frac{C_{b}}{\eta\sqrt{\alpha}\sqrt{T}}\left(\alpha^2 + 4\bar{\sigma}_{\phi}^2T + \sigma_{g,1}^2T + l_{g,1}^2\sum_{t=1}^{T}\|y_t-y_t^*\|^2\right)^{1/4} \\
    &\leq \frac{C_{b}}{\eta\sqrt{\alpha}\sqrt{T}}\left(\alpha^2 + 4\bar{\sigma}_{\phi}^2T + \sigma_{g,1}^2T + \frac{l_{g,1}^2}{\mu^2\eta^2}\left(4D^2l_{g,1}^2 + 2D\gamma + 2\sqrt{2}D\sigma_{g,1}\sqrt{T}\right)\right)^{1/4} \\
    % &\leq \frac{C_1}{\eta\sqrt{\alpha}\sqrt{T}}\left(\alpha^2 + \frac{L^2}{\mu^2\eta^2}\left(4D^2L^2 + 2D\gamma\right) + \frac{2\sqrt{2}L^2D\sigma_{g,1}}{\mu^2\eta^2}\sqrt{T} + (4\bar{\sigma}_{\phi}^2 + \sigma_{g,1}^2)T\right)^{\frac{1}{4}} \\
    &\leq \frac{C_{b}}{\eta\sqrt{\alpha}}\left(\frac{1}{\sqrt{T}}\left(\alpha^2 + \frac{l_{g,1}^2}{\mu^2\eta^2}\left(4D^2l_{g,1}^2 + 2D\gamma\right)\right)^{1/4} + \frac{1}{T^{3/8}}\left(\frac{2\sqrt{2}l_{g,1}^2D\sigma_{g,1}}{\mu^2\eta^2}\right)^{1/4} + \frac{1}{T^{1/4}}\left(4\bar{\sigma}_{\phi}^2 + \sigma_{g,1}^2\right)^{1/4}\right).
\end{align*}
\end{proof}

\section{Linear Programming Basics}
\label{app:lp}
\begin{definition}[General Form of Linear Programming {\cite[Section 1.1]{bertsimas1997introduction}}] \label{def:lp}
The linear programming problem can be written as 
\begin{equation}
    \begin{aligned}
        \min_{x\in\R^{n}} \ & c^{\top}x \\
        \text{s.t.}, \quad & Ax \geq b.
    \end{aligned}
\end{equation}
\end{definition}

\begin{definition}[{\cite[Definition 2.1]{bertsimas1997introduction}}]
A \textbf{polyhedron} is a set that can be described in the form $\{x\in\R^{n} \mid Ax\geq b\}$, where $A\in\R^{m\times n}$ is a matrix and $b\in\R^{n}$ is a vector.
\end{definition}

\begin{definition}[{\cite[Definition 2.6]{bertsimas1997introduction}}] \label{def:extreme_point}
Let $P$ be a polyhedron. A vector $x\in P$ is an \textbf{extreme point} of $P$ if we cannot find two vectors $y,z\in P$, both different from $x$, a scalar $\lambda\in[0,1]$, such that $x=\lambda y+(1-\lambda)z$.
\end{definition}

\begin{theorem}[{\cite[Theorem 2.8]{bertsimas1997introduction}}] \label{thm:lp}
Consider the linear programming problem of minimizing $c^{\top}x$ over a polyhedron $P$. Suppose that $P$ has at least one extreme point. Then, either the optimal cost is equal to $-\infty$, or there exists an extreme point which is optimal.
\end{theorem}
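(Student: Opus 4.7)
The plan is to reduce to the classical characterization that a polyhedron contains an extreme point if and only if it does not contain any line (a bi-infinite affine subspace), together with the observation that the set of optimizers, when nonempty, is a face of $P$ and hence itself a polyhedron that inherits this no-line property. Concretely, assume the optimal cost is finite (otherwise we are done by the first alternative), fix this optimal value $z^* \in \mathbb{R}$, and define $Q \coloneqq \{x \in P : c^\top x = z^*\}$. By weak duality / finiteness of the infimum, $Q$ is nonempty, and since $Q$ is the intersection of $P$ with the hyperplane $\{c^\top x = z^*\}$, it is itself a polyhedron (in fact an exposed face of $P$).

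Next, I would invoke the standard structural lemma: a nonempty polyhedron $R = \{x : Ax \geq b\}$ has an extreme point if and only if $R$ does not contain a line, i.e., there is no $x \in R$ and nonzero $d$ with $x + \lambda d \in R$ for all $\lambda \in \mathbb{R}$. The hypothesis that $P$ has at least one extreme point therefore says $P$ contains no line. Since $Q \subseteq P$, $Q$ also contains no line, so $Q$ itself has an extreme point, call it $x^*$. By construction $x^* \in P$ and $c^\top x^* = z^*$, so $x^*$ is an optimal solution; it remains only to verify that $x^*$ is an extreme point of $P$, not merely of the face $Q$.

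For this last step I would argue directly from \cref{def:extreme_point}: suppose $x^* = \lambda y + (1-\lambda)z$ with $y,z \in P$, $\lambda \in [0,1]$. Then $z^* = c^\top x^* = \lambda c^\top y + (1-\lambda) c^\top z$. Because $z^*$ is the minimum of $c^\top x$ over $P$, both $c^\top y \geq z^*$ and $c^\top z \geq z^*$, so equality is forced in both, giving $y,z \in Q$. Extremality of $x^*$ in $Q$ then forces $y = z = x^*$, proving $x^*$ is extreme in $P$.

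The main obstacle is the structural lemma used in the second paragraph (no-line characterization of the existence of extreme points); the other pieces are short definition-chasing arguments. I would either cite it as a prerequisite from the same chapter of Bertsimas--Tsitsiklis (it is the content of their Theorem~2.6), or sketch it by taking a point minimizing $\|x - x_0\|_\infty$ ties-broken lexicographically within $P$ and showing that at such a point the active constraints have full column rank, which is equivalent to being an extreme point. Once this lemma is in hand, the proof of the stated theorem is a clean three-line deduction along the lines above.
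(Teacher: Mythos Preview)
The paper does not supply a proof of this statement; it quotes Theorem~2.8 of Bertsimas--Tsitsiklis as a black box and invokes it inside the proof of \cref{lem:cross_bound}. So there is no paper-side argument to compare against, and the relevant benchmark is the textbook proof.

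Your outline has a genuine gap at the second sentence: you assert that $Q=\{x\in P:c^\top x=z^*\}$ is nonempty ``by weak duality / finiteness of the infimum.'' Neither justification works. Weak duality only bounds the primal infimum by the dual value; it does not give attainment. And finiteness of an infimum by itself never implies attainment (consider $\inf_{x>0}1/x$). For linear objectives over polyhedra, the infimum \emph{is} attained when finite, but establishing this is exactly a substantial part of what Theorem~2.8 asserts---you are assuming what you need to prove. The standard textbook argument avoids this circularity by a constructive descent: from any $x\in P$, pick a nonzero $d$ in the null space of the active constraints (possible whenever $x$ is not extreme), move along $\pm d$ in the direction that does not increase the cost, and use the no-line property of $P$ to guarantee you hit a lower-dimensional face in finitely many steps; iterating produces an extreme point with cost at most $c^\top x$. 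Since there are finitely many extreme points, the minimum over them equals $z^*$, which simultaneously proves attainment and identifies an optimal extreme point.

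Everything after your attainment claim is correct and clean: the no-line characterization transfers from $P$ to $Q\subseteq P$, $Q$ then has an extreme point, and your verification that an extreme point of the optimal face is extreme in $P$ is a valid definition-chase. If you want to preserve the face-based structure, you need an independent proof that $Q\neq\emptyset$---e.g.\ via the Minkowski--Weyl resolution theorem or via the descent argument above---but once you have done the latter you have already proved the theorem, making the detour through $Q$ redundant.
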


\begin{lemma} \label{lem:cross_bound}
% Let $\epsilon_i\in\R^d$ and $\gamma_{k,t} \coloneqq \beta_{(k+1):t}\alpha_{k}$ and assume $0 < \ubar{\gamma}_{i,t} \leq \gamma_{i,t} \leq \bar{\gamma}_{i,t}$ for all $i\leq t$. There exists a set $\{b_{ij,t}^*\}$ with each $b_{ij,t}^*$ satisfying either $b_{ij,t}^*=\ubar{\gamma}_{i,t}\ubar{\gamma}_{j,t}$ or $b_{ij,t}^*=\bar{\gamma}_{i,t}\bar{\gamma}_{j,t}$ for every pair $(i,j)$, such that
Assume $0 \leq \ubar{\alpha}_t \leq \alpha_t \leq \bar{\alpha}_t$ and $0 \leq \ubar{\beta}_t \leq \beta_t \leq \bar{\beta}_t$. Further, let $\epsilon_i\in\R^d$ and denote $\gamma_{k,t} \coloneqq \beta_{(k+1):t}\alpha_{k}$, $\ubar{\gamma}_{k,t}\coloneqq\ubar{\beta}_{(k+1):t}\ubar{\alpha}_{k}$, and $\bar{\gamma}_{k,t}\coloneqq\bar{\beta}_{(k+1):t}\bar{\alpha}_{k}$. There exists a set $\{b_{ij,t}^*\}$ with each $b_{ij,t}^*$ satisfying either $b_{ij,t}^*=\ubar{\gamma}_{i,t}\ubar{\gamma}_{j,t}$ or $b_{ij,t}^*=\bar{\gamma}_{i,t}\bar{\gamma}_{j,t}$ for every pair $(i,j)$, such that
\begin{equation*}
    \sum_{i=3}^{t}\sum_{j=2}^{i-1}\gamma_{i,t}\gamma_{j,t}\langle \epsilon_i, \epsilon_j \rangle
    \leq \sum_{i=3}^{t}\sum_{j=2}^{i-1}b_{ij,t}^*\langle \epsilon_i, \epsilon_j \rangle.
\end{equation*}
\end{lemma}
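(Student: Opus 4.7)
The plan is to recast the bilinear sum $\sum_{i,j}\gamma_{i,t}\gamma_{j,t}\langle \epsilon_i,\epsilon_j\rangle$ as a \emph{linear} function of product-variables $b_{ij} \coloneqq \gamma_{i,t}\gamma_{j,t}$, derive box constraints on each $b_{ij}$ from the non-negativity assumption, and then appeal to the LP extreme-point theorem (\cref{thm:lp}) to conclude that the maximum over the relaxed feasible set is attained at a vertex of a hyperrectangle, which is exactly the combinatorial structure the lemma claims.

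First I would treat the inner products as fixed scalars $c_{ij}\coloneqq\langle\epsilon_i,\epsilon_j\rangle$, so that the quantity to bound is $\sum_{(i,j)\in\gI_t} c_{ij} b_{ij}$ with $b_{ij}=\gamma_{i,t}\gamma_{j,t}$. Since $0\leq \ubar{\alpha}_k\leq \alpha_k\leq \bar{\alpha}_k$ and $0\leq \ubar{\beta}_k\leq \beta_k\leq \bar{\beta}_k$, taking products of nonnegative factors yields $0\leq \ubar{\gamma}_{k,t}\leq \gamma_{k,t}\leq \bar{\gamma}_{k,t}$ for every $k$, and consequently $\ubar{\gamma}_{i,t}\ubar{\gamma}_{j,t}\leq b_{ij}\leq \bar{\gamma}_{i,t}\bar{\gamma}_{j,t}$ for every pair $(i,j)\in\gI_t$. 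I would then consider the \emph{relaxed} linear program
\begin{equation*}
    \max\; \sum_{(i,j)\in\gI_t} c_{ij} b_{ij}
    \quad\text{s.t.}\quad
    \ubar{\gamma}_{i,t}\ubar{\gamma}_{j,t}\leq b_{ij}\leq \bar{\gamma}_{i,t}\bar{\gamma}_{j,t},
    \quad (i,j)\in\gI_t,
\end{equation*}
where the variables $b_{ij}$ are treated as \emph{independent}, decoupled from the underlying $\gamma$'s. The original tuple $(\gamma_{i,t}\gamma_{j,t})_{(i,j)\in\gI_t}$ is feasible, so its objective value provides only a lower bound on the LP optimum.

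Finally, the feasible region is a bounded hyperrectangle in $\sR^{|\gI_t|}$, which is a polyhedron whose extreme points (per \cref{def:extreme_point}) are precisely the $2^{|\gI_t|}$ corners where each coordinate $b_{ij}$ equals either its lower endpoint $\ubar{\gamma}_{i,t}\ubar{\gamma}_{j,t}$ or its upper endpoint $\bar{\gamma}_{i,t}\bar{\gamma}_{j,t}$. By \cref{thm:lp}, an optimal solution $\{b_{ij,t}^*\}$ attained at such a vertex exists, and by construction
\begin{equation*}
    \sum_{i=3}^{t}\sum_{j=2}^{i-1}\gamma_{i,t}\gamma_{j,t}\langle\epsilon_i,\epsilon_j\rangle
    \leq \sum_{i=3}^{t}\sum_{j=2}^{i-1} b_{ij,t}^*\langle\epsilon_i,\epsilon_j\rangle,
\end{equation*}
which is the desired inequality. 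The main subtlety is the relaxation step: the feasible set of the LP is strictly larger than the image of the map $(\gamma_{k,t})_k\mapsto (\gamma_{i,t}\gamma_{j,t})_{ij}$, because independently varying the $b_{ij}$ ignores the coupling induced by sharing $\gamma_{k,t}$ across pairs. This relaxation is exactly what turns the bilinear optimization into an LP and makes the corner-point argument possible; keeping track of this (and hence of why the inequality can be strict) is the only real conceptual step — the rest is a direct invocation of the LP facts recalled in \cref{app:lp}.
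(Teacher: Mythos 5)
Your proposal is correct and follows essentially the same route as the paper: both relax the coupled bilinear expression to a linear program in the independent product-variables $b_{ij}$ with box constraints $\ubar{\gamma}_{i,t}\ubar{\gamma}_{j,t}\leq b_{ij}\leq \bar{\gamma}_{i,t}\bar{\gamma}_{j,t}$, identify the feasible hyperrectangle's extreme points as its corners, and invoke \cref{thm:lp} (boundedness ruling out an infinite optimum) to place the maximizer at a vertex. The paper merely spells out the verification that the corners are exactly the extreme points and that the cost is finite, which you state more briefly but correctly.
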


\begin{proof}[Proof of \cref{lem:cross_bound}]
% Define $\gamma_{k,t} \coloneqq \beta_{(k+1):t}\alpha_{k}$, for all $k\leq t$,
% \begin{align} 
%     \ubar{\beta}_{(k+1):t}\ubar{\alpha}_{k} \eqqcolon \ubar{\gamma}_{k,t} \leq \gamma_{k,t} \leq \bar{\gamma}_{k,t} \coloneqq \bar{\beta}_{(k+1):t}\bar{\alpha}_{k}.
% \end{align}
Consider the following constrained optimization problem:
\begin{equation} \label{eq:lp_original}
    \begin{aligned}
        \max_{\gamma_t}\ & \sum_{i=3}^{t}\sum_{j=2}^{i-1}\gamma_{i,t}\gamma_{j,t}\langle \epsilon_i, \epsilon_j \rangle \\
        \text{s.t.}, \quad & \ubar{\alpha}_{i} \leq \alpha_{i} \leq \bar{\alpha}_{i}, \quad \ubar{\beta}_{i} \leq \beta_{i} \leq \bar{\beta}_{i}, \quad \forall i\leq t.
    \end{aligned}
\end{equation}
A relaxed version of problem \eqref{eq:lp_original} is:
\begin{equation} \label{eq:lp_relaxed}
    \begin{aligned}
        \max_{\gamma_t}\ & \sum_{i=3}^{t}\sum_{j=2}^{i-1}\gamma_{i,t}\gamma_{j,t}\langle \epsilon_i, \epsilon_j \rangle \\
        \text{s.t.}, \quad & \ubar{\gamma}_{i,t} \leq \gamma_{i,t} \leq \bar{\gamma}_{i,t}, \quad \forall i\leq t.
    \end{aligned}
\end{equation}
Moreover, problem \eqref{eq:lp_relaxed} is equivalent to:
\begin{equation} \label{eq:cross_lp}
    \begin{aligned}
        \min_{\gamma_t}\ & \sum_{i=3}^{t}\sum_{j=2}^{i-1}\gamma_{i,t}\gamma_{j,t}(-\langle \epsilon_i, \epsilon_j \rangle) \\
        \text{s.t.}, \quad & \ubar{\gamma}_{i,t} \leq \gamma_{i,t} \leq \bar{\gamma}_{i,t}, \quad \forall i\leq t.
    \end{aligned}
\end{equation}
% According to \cref{def:lp}, we know that \cref{eq:cross_lp} is a linear programming problem. Before we continue, we first define a few notations. 
% Now we proceed to verify that $(i)$ \cref{eq:cross_lp} is a linear programming problem of minimizing $c^{\top}x$ over a polyhedron $P$ for some $c,x, P$; $(ii)$ 
Now we proceed to verify that
\begin{enumerate}[(a)]
    \item A relaxed version of \cref{eq:cross_lp}, namely \cref{eq:lp_formal}, is a linear programming problem of minimizing $c_t^{\top}x_t$ over a polyhedron $P_t$ for some $c_t,x_t, P_t$;\label{eq:verify_lp}
    \item $P_t$ has at least one extreme point; \label{eq:verify_extreme}
    \item The optimal cost of \cref{eq:cross_lp} is not equal to $-\infty$. \label{eq:verify_infty}
\end{enumerate}
\factdefpart{}{eq:verify_lp}. 
We first define a few notations. Define $c_{ij}, x_{ij,t}, \ubar{b}_{ij,t}, \bar{b}_{ij,t}$, and the index set $\gI_t$ as
\begin{equation*}
    c_{ij} = -\langle \epsilon_i, \epsilon_j \rangle, 
    \quad
    x_{ij,t} = \gamma_{i,t}\gamma_{j,t},
    \quad
    \ubar{b}_{ij,t} = \ubar{\gamma}_{i,t}\ubar{\gamma}_{j,t},
    \quad
    \bar{b}_{ij,t} = \bar{\gamma}_{i,t}\bar{\gamma}_{j,t},
    \quad
    \gI_t = \{(i,j) \mid 3\leq i\leq t,\ 2\leq j< i\}.
\end{equation*}
Let $c_t, x, P_t$ be defined as 
\begin{equation}
    % c_{ij} = -\langle \epsilon_i, \epsilon_j \rangle, 
    % \quad
    % x_{ij,t} = \gamma_{i,t}\gamma_{j,t},
    % \quad
    c_t = (c_{ij})_{(i,j)\in\gI_t} = 
    \begin{bmatrix}
        c_{32} \\
        % c_{42} \\
        % c_{43} \\
        % \vdots \\
        % c_{t2} \\
        \vdots \\
        c_{t(t-1)}
    \end{bmatrix},
    % c_t = 
    % \begin{bmatrix}
    %     c_{32} \\
    %     \vdots \\
    %     c_{t2} \\
    %     \vdots \\
    %     c_{t(t-1)} \\
    %     c_{32} \\
    %     \vdots \\
    %     c_{t2} \\
    %     \vdots \\
    %     c_{t(t-1)} 
    % \end{bmatrix},
    \quad
    x_t = (x_{ij,t})_{(i,j)\in\gI_t} = 
    \begin{bmatrix}
        x_{32,t} \\
        % x_{42,t} \\
        % x_{43,t} \\
        % \vdots \\
        % x_{t2,t} \\
        \vdots \\
        x_{t(t-1),t}
    \end{bmatrix},
    % x_t = 
    % \begin{bmatrix}
    %     x_{32,t} \\
    %     \vdots \\
    %     x_{t2,t} \\
    %     \vdots \\
    %     x_{t(t-1),t} \\
    %     x_{32,t} \\
    %     \vdots \\
    %     x_{t2,t} \\
    %     \vdots \\
    %     x_{t(t-1),t} 
    % \end{bmatrix},
    \quad
    P_t = \{x_t \mid A_tx_t \geq b_t\},
\end{equation}
where $A_t, b_t$ are defined as
\begin{equation}
    % c_{ij} = -\langle \epsilon_i, \epsilon_j \rangle, 
    % \quad
    % x_{ij,t} = \gamma_{i,t}\gamma_{j,t},
    % \quad
    % \ubar{b}_{ij,t} = \ubar{\gamma}_{i,t}\ubar{\gamma}_{j,t},
    % \quad
    % \bar{b}_{ij,t} = \bar{\gamma}_{i,t}\bar{\gamma}_{j,t},
    % \quad
    % A_t = 
    % \begin{bmatrix}
    %     1 & 0 & \cdots & 0 \\
    %     -1 & 0 & \cdots & 0 \\
    %     0 & 1 & \cdots & 0 \\
    %     0 & -1 & \cdots & 0 \\
    %     \vdots & \vdots & \ddots & \vdots \\
    %     0 & 0 & \cdots & 1 \\
    %     0 & 0 & \cdots & -1 \\
    % \end{bmatrix},
    \ubar{b}_{t} = (\ubar{b}_{ij,t})_{(i,j)\in\gI_t},
    \quad
    \bar{b}_{t} = (\bar{b}_{ij,t})_{(i,j)\in\gI_t},
    \quad
    A_t = 
    \begin{bmatrix}
        1 & \cdots & 0 \\
        \vdots & \ddots & \vdots \\
        0 & \cdots & 1 \\
        -1 & \cdots & 0 \\
        \vdots & \ddots & \vdots \\
        0 & \cdots & -1 \\
    \end{bmatrix},
    \quad
    % b_t =  
    % \begin{bmatrix}
    %     \ubar{b}_{32,t} \\
    %     -\bar{b}_{32,t} \\
    %     \ubar{b}_{42,t} \\
    %     -\bar{b}_{42,t} \\
    %     \vdots \\
    %     \ubar{b}_{t(t-1),t} \\
    %     -\bar{b}_{t(t-1),t} \\
    % \end{bmatrix}.
    b_t = 
    \begin{bmatrix}
        \ubar{b}_t \\
        -\bar{b}_t
    \end{bmatrix}
    =
    \begin{bmatrix}
        \ubar{b}_{32,t} \\
        \vdots \\
        \ubar{b}_{t(t-1),t} \\
        -\bar{b}_{32,t} \\
        \vdots \\
        -\bar{b}_{t(t-1),t}
    \end{bmatrix}.
\end{equation}
According to \cref{def:lp}, the optimization problem in \cref{eq:cross_lp} can be relaxed into the following linear programming formulation (with a potentially higher objective value):
\begin{equation} \label{eq:lp_formal}
    \begin{aligned}
        \min_{x_t} \ & c_t^{\top}x_t \\
        \text{s.t.}, \quad & A_tx_t \geq b_t.
    \end{aligned}
\end{equation}
\factdefpart{}{eq:verify_extreme}. 
We will show that the set of extreme points of $P_t$ is
% \begin{equation*}
%     \gS = [b_{32,t}^* \ \dots \ b_{t(t-1),t}^* \ b_{32,t}^* \ \dots \ b_{t(t-1),t}^*]^{\top}, 
%     \quad\text{where}\ \
%     b_{ij,t} = \bar{b}_{ij,t}
%     \ \ \text{or} \ \ 
%     b_{ij,t} = \bar{b}_{ij,t}.
% \end{equation*}
% \begin{equation*}
%     \gS_t = \left\{[b_{32,t}^* \ \dots \ b_{t(t-1),t}^* \ b_{32,t}^* \ \dots \ b_{t(t-1),t}^*]^{\top} 
%     \mid
%     b_{ij,t}^* = \ubar{b}_{ij,t}
%     \ \ \text{or} \ \
%     b_{ij,t}^* = \bar{b}_{ij,t}
%     \ \ \text{for} \ \
%     2\leq j < i\leq t
%     \right\}.
% \end{equation*}
\begin{equation*}
    \gS_t = \left\{[b_{32,t}^* \ \dots \ b_{t(t-1),t}^*]^{\top} 
    \mid
    b_{ij,t}^* = \ubar{b}_{ij,t}
    \ \ \text{or} \ \
    b_{ij,t}^* = \bar{b}_{ij,t}
    \ \ \text{for} \ \
    2\leq j < i\leq t
    \right\}.
\end{equation*}
$(\implies)$ 
% We will show that $x_t = [\ubar{b}_{32,t} \ \bar{b}_{32,t} \ \dots \ \ubar{b}_{t(t-1),t} \ \bar{b}_{t(t-1),t}]^{\top}$ is an extreme point of $P_t$. First, $A_tx_t \geq b_t$ implies $x_t\in P_t$.
% Let $x_t = [\ubar{b}_{32,t} \ \bar{b}_{32,t} \ \dots \ \ubar{b}_{t(t-1),t} \ \bar{b}_{t(t-1),t}]^{\top}$, then $A_tx_t = b_t$ implies that $x_t\in P_t$. 
% Let $x_t = [b_{32,t}^* \ \dots \ b_{t(t-1),t}^* \ b_{32,t}^* \ \dots \ b_{t(t-1),t}^*]^{\top} \in \gS$. 
Let $x_t = [b_{32,t}^* \ \dots \ b_{t(t-1),t}^*]^{\top} \in \gS$. 
Check that $A_tx_t\geq b_t$, thus $x_t\in P_t$.
Assume there exists $y,z\in P_t$ (both different from $x_t$) and a scalar $\lambda\in(0,1)$, such that
$x_t=\lambda y + (1-\lambda)z$. Note that at least one element of $y$ differs from the corresponding element in $x_t$, denote this element by $y_{ij}$, where $(i,j)\in\gI_t$.
% Also, denote the corresponding element of $x_t$ as $x_{s_is_j,t}$. 
We consider the following two cases:
\begin{itemize}
    \item If $y_{ij} > x_{ij,t} = \ubar{b}_{ij,t}$, then
    \begin{equation*}
        z_{ij} = \frac{x_{ij,t} - \lambda y_{ij}}{1-\lambda} < x_{ij,t} = \ubar{b}_{ij,t}.
    \end{equation*}
    This implies that $z\notin P_t$ since $A_tz\ngeq b_t$.
    \item If $y_{ij} < x_{ij,t} = \bar{b}_{ij,t}$, then
    \begin{equation*}
        z_{ij} = \frac{x_{ij,t} - \lambda y_{ij}}{1-\lambda} > x_{ij,t} = \bar{b}_{ij,t}.
    \end{equation*}
    This implies that $z\notin P_t$ since $A_tz\ngeq b_t$.
\end{itemize}
Therefore, $z\notin P_t$ in both cases. By \cref{def:extreme_point}, $x_t$ is an extreme point of $P_t$.

\noindent
$(\impliedby)$ 
Assume there exists some $x_t\in P_t$ such that $x_t\notin\gS_t$. Then there must be at least one element of $x_t$, denoted by $x_{ij,t}$, satisfying $x_{ij,t}\neq \ubar{b}_{ij,t}$ and $x_{ij,t}\neq \bar{b}_{ij,t}$. Let $y, z$, and $x_t$ differ only in the $ij$-th element, and define $y_{ij},z_{ij}$ as
\begin{equation*}
    y_{ij} = x_{ij,t} - \min\left\{x_{ij,t}-\ubar{b}_{ij,t}, \bar{b}_{ij,t}-x_{ij,t}\right\}
    \quad\text{and}\quad
    y_{ij} = x_{ij,t} + \min\left\{x_{ij,t}-\ubar{b}_{ij,t}, \bar{b}_{ij,t}-x_{ij,t}\right\}.
\end{equation*}
Then $y,z\in P_t$ since $A_ty\geq b_t$ and $A_tz\geq b_t$. Note that $x_t = (y+z)/2$, hence by \cref{def:extreme_point}, $x_t$ is not an extreme point of $P_t$.

\noindent
\factdefpart{}{eq:verify_infty}. 
If $t$ is finite, then
\begin{equation*}
    \left|-\sum_{i=3}^{t}\sum_{j=2}^{i-1}\gamma_{i,t}\gamma_{j,t}\langle \epsilon_i, \epsilon_j \rangle\right|
    \leq \sum_{i=3}^{t}\sum_{j=2}^{i-1}\gamma_{i,t}\gamma_{j,t}|\langle \epsilon_i, \epsilon_j \rangle|
    \leq \sum_{i=3}^{t}\sum_{j=2}^{i-1}\bar{\gamma}_{i,t}\bar{\gamma}_{j,t}|\langle \epsilon_i, \epsilon_j \rangle| < \infty.
\end{equation*}
Hence, 
% the optimal cost of \cref{eq:cross_lp} is not equal to $-\infty$:
\begin{equation*}
    -\sum_{i=3}^{t}\sum_{j=2}^{i-1}\gamma_{i,t}\gamma_{j,t}\langle \epsilon_i, \epsilon_j \rangle
    > -\infty.
\end{equation*}
Combining \factdefpart{}{eq:verify_lp}, \factdefpart{}{eq:verify_extreme}, \factdefpart{}{eq:verify_infty}, and using \cref{thm:lp}, we know that there exists an extreme point $x_t^*\in\gS$ such that
\begin{equation*}
    \begin{aligned}
        \sum_{i=3}^{t}\sum_{j=2}^{i-1}b_{ij,t}^*(-\langle \epsilon_i, \epsilon_j \rangle)
        = c_t^{\top}x_t^*
        \leq c_t^{\top}x_t
        = \sum_{i=3}^{t}\sum_{j=2}^{i-1}\gamma_{i,t}\gamma_{j,t}(-\langle \epsilon_i, \epsilon_j \rangle).
    \end{aligned}
\end{equation*}
% Further, we denote
% \begin{equation*}
%     \gamma_{i,t}^* = \ubar{\gamma}_{i,t}\mathbb{I}(b_{ij,t}^*=\ubar{\gamma}_{i,t}\ubar{\gamma}_{j,t}) + \bar{\gamma}_{i,t}\mathbb{I}(b_{ij,t}^*=\bar{\gamma}_{i,t}\bar{\gamma}_{j,t}),
%     \quad
%     \gamma_{j,t}^* = \ubar{\gamma}_{i,t}\mathbb{I}(b_{ij,t}^*=\ubar{\gamma}_{i,t}\ubar{\gamma}_{j,t}) + \bar{\gamma}_{i,t}\mathbb{I}(b_{ij,t}^*=\bar{\gamma}_{i,t}\bar{\gamma}_{j,t}).
% \end{equation*}
Therefore, for problem \eqref{eq:lp_formal},
\begin{equation*}
    \sum_{i=3}^{t}\sum_{j=2}^{i-1}\gamma_{i,t}\gamma_{j,t}\langle \epsilon_i, \epsilon_j \rangle
    \leq \sum_{i=3}^{t}\sum_{j=2}^{i-1}b_{ij,t}^*\langle \epsilon_i, \epsilon_j \rangle.
\end{equation*}
We conclude the proof by noting that problem \eqref{eq:lp_formal} is a relaxed version of problem \eqref{eq:lp_original}.
\end{proof}

\section{Useful Algebraic Facts}
\label{app:algebra}
\begin{lemma} \label{lem:algebra}
Let $p\in(0,1]$ and $q\in(0,1)$. Further, let $a,b\in\mathbb{N}_{\geq2}$ with $a\leq b$, and $c, c_1, c_2> 0$. 
\begin{enumerate}[(a)]
    \item We have
    \begin{align*}
        \prod_{t=a}^{b}(1-(1+ct)^{-q}) 
        \leq \exp\left(\frac{(1+ac)^{1-q}-(1+bc)^{1-q}}{c(1-q)}\right).
    \end{align*} \label{lem:algebra_a}
    \item If $p\geq q$ and $c_1\leq c_2$, then
    \begin{align*}
        \sum_{t=a}^{b}&(1+c_1t)^{-q/2}t^{-p}\prod_{k=a}^{t}(1-(1+c_2k)^{-q}) \\
        &\leq \left(\frac{c_2}{c_1}\right)^{q/2}(a-1)^{-p}(1+(a-1)c_2)^{q/2}\exp\left(\frac{(1+ac_2)^{1-q}-(1+(a-1)c_2)^{1-q}}{c_2(1-q)}\right).
    \end{align*} \label{lem:algebra_b}
    \item 
    If $c_1\leq c_2$ and $(p-q)(1+(a-1)c_2)^{q-1} \leq 1/2$, then
    \begin{align*}
        \sum_{t=a}^{b}(1+c_1t)^{-p}\prod_{k=t+1}^{b}(1-(1+c_2k)^{-q})
        \leq 2\left(\frac{c_2}{c_1}\right)^{p}(1+(b+1)c_2)^{q-p}\exp\left(\frac{(1+c_2)^{1-q}-1}{c_2(1-q)}\right).
    \end{align*} \label{lem:algebra_c}
\end{enumerate}
\end{lemma}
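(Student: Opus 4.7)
The plan is to prove parts (a), (b), (c) in sequence, using (a) as the main tool for the other two.

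For part (a), I would apply $1-x \leq e^{-x}$ termwise to convert the product to $\exp\bigl(-\sum_{t=a}^b(1+ct)^{-q}\bigr)$. Since $s\mapsto(1+cs)^{-q}$ is decreasing on $[a,b]$, the left Riemann sum dominates the integral: $\sum_{t=a}^b(1+ct)^{-q}\geq\int_a^b(1+cs)^{-q}\,ds=\frac{(1+bc)^{1-q}-(1+ac)^{1-q}}{c(1-q)}$. Substituting and negating yields the claim.

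For part (b), let $\phi(t)\coloneqq\prod_{k=a}^t(1-(1+c_2 k)^{-q})$. I would first invoke (a) with $[a,t]$ to get $\phi(t)\leq\exp\!\bigl(\tfrac{(1+ac_2)^{1-q}-(1+tc_2)^{1-q}}{c_2(1-q)}\bigr)$, then split this exponent into a $t$-independent factor $\exp\!\bigl(\tfrac{(1+ac_2)^{1-q}-(1+(a-1)c_2)^{1-q}}{c_2(1-q)}\bigr)$ (which matches the RHS) times a decaying factor $\exp\!\bigl(-\tfrac{(1+tc_2)^{1-q}-(1+(a-1)c_2)^{1-q}}{c_2(1-q)}\bigr)$. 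Using $c_1\leq c_2$ to estimate $(1+c_1t)^{-q/2}\leq(c_2/c_1)^{q/2}(1+c_2t)^{-q/2}$ and $t\geq a>a-1$ to estimate $t^{-p}\leq(a-1)^{-p}$ (valid since $p>0$), the remaining sum becomes $\sum_{t=a}^b(1+c_2t)^{-q/2}\exp\bigl(-\int_{a-1}^{t}(1+c_2s)^{-q}\,ds\bigr)$. A change of variables $u=(1+c_2s)^{1-q}/(c_2(1-q))$ converts this into a gamma-tail integral that is uniformly bounded by $(1+(a-1)c_2)^{q/2}$; here $p\geq q$ enters to tame the residual polynomial factor.

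For part (c), let $P(t)\coloneqq\prod_{k=t+1}^b(1-(1+c_2 k)^{-q})$. Apply (a) on $[t+1,b]$ to get $P(t)\leq\exp\!\bigl(\tfrac{(1+(t+1)c_2)^{1-q}-(1+(b+1)c_2)^{1-q}}{c_2(1-q)}\bigr)$ and pull out the $t$-independent endpoint factor $\exp\!\bigl(-(1+(b+1)c_2)^{1-q}/(c_2(1-q))\bigr)$. After the substitution $(1+c_1t)^{-p}\leq(c_2/c_1)^p(1+c_2t)^{-p}$, the key step is to analyze $H(t)\coloneqq(1+c_2t)^{-p}\exp\!\bigl((1+(t+1)c_2)^{1-q}/(c_2(1-q))\bigr)$. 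Computing $H(t+1)/H(t)$ via MVT and using the smallness assumption $(p-q)(1+(a-1)c_2)^{q-1}\leq 1/2$ shows that the ratio stays at least a fixed bound above $1$ (of the form $1+\tfrac12(1+c_2t)^{-q}$), so $H$ grows geometrically on $[a,b]$. A geometric-series tail estimate then bounds $\sum_{t=a}^b H(t)$ by twice its last term $H(b)$, which produces the factor $2$ and the $(1+(b+1)c_2)^{q-p}$ decay in the RHS; the $\exp\bigl(\frac{(1+c_2)^{1-q}-1}{c_2(1-q)}\bigr)$ term absorbs the residual boundary contribution at $t=a$.

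The main obstacle is the sharp constant tracking in parts (b) and (c). In (b), the interplay between the growing factor $(1+c_2t)^{q/2}$ and the super-polynomial decay of $\exp(-(1+c_2t)^{1-q}/\ldots)$ must be reconciled so the bound collapses to the constant $(1+(a-1)c_2)^{q/2}$ rather than growing with $t$. In (c), showing that the smallness condition $(p-q)(1+(a-1)c_2)^{q-1}\leq 1/2$ precisely yields the factor $2$ requires carefully combining a Bernoulli-type inequality for $((1+c_2t)/(1+c_2(t+1)))^{p-q}$ with the lower bound $\exp(x)\geq 1+x$ applied to the exponential ratio, so that the resulting geometric ratio for $H$ is bounded away from $1$ uniformly in $t\in[a,b]$.
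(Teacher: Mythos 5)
Your part (a) is fine and matches the paper's argument (termwise $1-x\le e^{-x}$ plus an integral comparison). The problems are in (b) and (c), and in both cases the step that fails is the one you flag as "the main obstacle." In (b), after you pull $t^{-p}\le (a-1)^{-p}$ out of the sum, the inequality you still need is $\sum_{t=a}^{b}(1+c_2t)^{-q/2}\exp\bigl(-\tfrac{(1+c_2t)^{1-q}-(1+(a-1)c_2)^{1-q}}{c_2(1-q)}\bigr)\le (1+(a-1)c_2)^{q/2}$, and this is false in general: the exponential damping only becomes effective once $(1+c_2t)^{1-q}\gtrsim c_2(1-q)$, so for large $c_2$ the left side behaves like an undamped sum of $(1+c_2t)^{-q/2}$ over many terms. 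Concretely, with $q=1/2$, $a=2$, $c_2=100$ the left side is already about $6$--$7$ while the right side is $(101)^{1/4}\approx 3.2$, and the discrepancy grows with $c_2$ (the gamma-tail quantity $e^{u_*}\Gamma(\gamma+1,u_*)$ is comparable to $u_*^{\gamma}$ only when $u_*\gtrsim 1$, and here $u_*=(1+(a-1)c_2)^{1-q}/(c_2(1-q))$ can be arbitrarily small). The factor $t^{-p}$ cannot be discarded: it is exactly the decay that, kept inside, lets the paper bound the sum by $\int_{a-1}^{b}(1+c_2t)^{-q/2}t^{-p}\exp(-\cdot)\,dt$ and then integrate by parts, with $p\ge q$ guaranteeing the sign of the derivative term so that only the boundary value at $a-1$ survives -- that is where $(a-1)^{-p}(1+(a-1)c_2)^{q/2}$ actually comes from.

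In (c), your mechanism "$H$ grows geometrically, so $\sum_{t=a}^{b}H(t)\le 2H(b)$" is not correct. The continuous analogue of $H(t)=(1+c_2t)^{-p}\exp\bigl((1+c_2(t+1))^{1-q}/(c_2(1-q))\bigr)$ has derivative with the sign of $(1+c_2t)^{1-q}-pc_2$, so $H$ \emph{decreases} until $t_1=\bigl((pc_2)^{1/(1-q)}-1\bigr)/c_2$, and $t_1$ can exceed $b$ while the hypothesis $(p-q)(1+(a-1)c_2)^{q-1}\le 1/2$ holds (e.g. $p=1$, $q=1/2$, $c_2=100$, $a=2$, $b=50$ gives $(p-q)(1+c_2)^{-1/2}\approx 0.05$ but $t_1\approx 100$); then $\sum_{t=a}^{b}H(t)\ge (b-a+1)H(b)\gg 2H(b)$. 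Moreover, even on the increasing range the ratio $1+\tfrac12(1+c_2t)^{-q}$ tends to $1$, so a geometric-tail bound by a constant multiple of the last term is unavailable -- the hypothesis cannot bound the ratio away from $1$ uniformly. The conclusion your argument would output, namely $2(c_2/c_1)^{p}(1+c_2b)^{-p}$, is strictly stronger than the stated bound (it drops the $(1+(b+1)c_2)^{q}$ slack) and is itself false: in the example above the left side of (c) is about $2\times 10^{-2}$ while $2(1+c_2b)^{-p}=2/5001\approx 4\times 10^{-4}$. The paper's proof instead splits the sum at the critical point $t_1$ to compare it with $\int_{a-1}^{b+1}h(t)\,dt$, and then integrates by parts; the hypothesis $(p-q)(1+(a-1)c_2)^{q-1}\le 1/2$ is used precisely to absorb the integral remainder into half of itself, which is where the factor $2$ and the $(1+(b+1)c_2)^{q-p}$ boundary term arise. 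So both (b) and (c) need the integral-comparison-plus-integration-by-parts structure rather than sup-bounds and geometric sums.
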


\begin{proof}[Proof of \cref{lem:algebra}]
We prove the results individually.

\textbf{\crefdefpart{lem:algebra}{lem:algebra_a}.}
Using $1-x\leq \exp(-x)$ and the monotonicity of $(1+ct)^{-q}$,
\begin{align*}
    \prod_{t=a}^{b}(1-(1+ct)^{-q}) 
    &\leq \exp\left(-\sum_{t=a}^{b}(1+ct)^{-q}\right)
    \leq \exp\left(-\int_{a}^{b+1}(1+ct)^{-q}\d t\right) \\
    &= \exp\left(\frac{1}{c(1-q)}\left((1+ac)^{1-q}-(1+(b+1)c)^{1-q}\right)\right) \\
    &\leq \exp\left(\frac{1}{c(1-q)}\left((1+ac)^{1-q}-(1+bc)^{1-q}\right)\right).
\end{align*}

\textbf{\crefdefpart{lem:algebra}{lem:algebra_b}.}
By \crefdefpart{lem:algebra}{lem:algebra_a},
\begin{align*}
    \sum_{t=a}^{b}(1+c_1t)^{-q/2}t^{-p}&\prod_{k=a}^{t}(1-(1+c_2k)^{-q}) \\
    &\leq \exp\left(\frac{(1+ac_2)^{1-q}}{c_2(1-q)}\right)\sum_{t=a}^{b}(1+c_1t)^{-q/2}t^{-p}\exp\left(-\frac{(1+c_2t)^{1-q}}{c_2(1-q)}\right).
\end{align*}
Using the monotonicity of $(1+c_1t)^{-q/2}t^{-p}\exp\left(-\frac{(1+c_2t)^{1-q}}{c_2(1-q)}\right)$ and $c_1\leq c_2$, 
\begin{align*}
    \sum_{t=a}^{b}&(1+c_1t)^{-q/2}t^{-p}\exp\left(-\frac{(1+c_2t)^{1-q}}{c_2(1-q)}\right)
    \leq \int_{a-1}^{b}(1+c_1t)^{-q/2}t^{-p}\exp\left(-\frac{(1+c_2t)^{1-q}}{c_2(1-q)}\right)\d t \\
    &\leq \int_{a-1}^{b}\frac{(1+c_2t)^{q/2}}{(1+c_1t)^{q/2}}(1+c_2t)^{-q/2}t^{-p}\exp\left(-\frac{(1+c_2t)^{1-q}}{c_2(1-q)}\right)\d t \\
    &\leq \frac{(1+bc_2)^{q/2}}{(1+bc_1)^{q/2}}\int_{a-1}^{b}(1+c_2t)^{-q/2}t^{-p}\exp\left(-\frac{(1+c_2t)^{1-q}}{c_2(1-q)}\right)\d t \\
    &\leq \left(\frac{c_2}{c_1}\right)^{q/2}\underbrace{\int_{a-1}^{b}(1+c_2t)^{-q/2}t^{-p}\exp\left(-\frac{(1+c_2t)^{1-q}}{c_2(1-q)}\right)\d t}_{(I)}.
    % &\leq \left(\frac{c_2}{c_1}\right)^{q/2}\int_{a-1}^{b}(1+c_2t)^{-q/2}t^{-p}\exp\left(-\frac{(1+c_2t)^{1-q}}{c_2(1-q)}\right)\d t
    % \eqqcolon \left(\frac{c_2}{c_1}\right)^{q/2} I.
\end{align*}
We continue to bound term $(I)$. By partial integration and $p\geq q$,
\begin{align*}
    I
    % &= (a-1)^{-p}(1+(a-1)c_2)^{q/2}\exp\left(-\frac{(1+(a-1)c_2)^{1-q}}{c_2(1-q)}\right) - b^{-p}(1+bc_2)^{q/2}\exp\left(-\frac{(1+bc_2)^{1-q}}{c_2(1-q)}\right) \\
    % &\quad+ \int_{a-1}^{b}\left(-pt^{-p-1}(1+c_2t)^{q/2} + \frac{qc_2}{2}t^{-p}(1+c_2t)^{q/2-1}\right)\exp\left(-\frac{(1+c_2t)^{1-q}}{c_2(1-q)}\right)\d t \\
    % &= (a-1)^{-p}(1+(a-1)c_2)^{q/2}\exp\left(-\frac{(1+(a-1)c_2)^{1-q}}{c_2(1-q)}\right) - b^{-p}(1+bc_2)^{q/2}\exp\left(-\frac{(1+bc_2)^{1-q}}{c_2(1-q)}\right) \\
    % &\quad+ \int_{a-1}^{b}\left(-\frac{p}{t}(1+c_2t)^{q}+\frac{qc_2}{2}(1+c_2t)^{q-1}\right)(1+c_2t)^{-q/2}t^{-p}\exp\left(-\frac{(1+c_2t)^{1-q}}{c_2(1-q)}\right)\d t \\
    &= (a-1)^{-p}(1+(a-1)c_2)^{q/2}\exp\left(-\frac{(1+(a-1)c_2)^{1-q}}{c_2(1-q)}\right) - b^{-p}(1+bc_2)^{q/2}\exp\left(-\frac{(1+bc_2)^{1-q}}{c_2(1-q)}\right) \\
    &\quad+ \int_{a-1}^{b}\left(\left(-\frac{p}{t}-pc_2+\frac{qc_2}{2}\right)(1+c_2t)^{q-1}\right)(1+c_2t)^{-q/2}t^{-p}\exp\left(-\frac{(1+c_2t)^{1-q}}{c_2(1-q)}\right)\d t \\
    &\leq (a-1)^{-p}(1+(a-1)c_2)^{q/2}\exp\left(-\frac{(1+(a-1)c_2)^{1-q}}{c_2(1-q)}\right) - b^{-p}(1+bc_2)^{q/2}\exp\left(-\frac{(1+bc_2)^{1-q}}{c_2(1-q)}\right) \\
    &\quad- \left(\frac{p}{b}+pc_2-\frac{qc_2}{2}\right)(1+bc_2)^{q-1}\int_{a-1}^{b}(1+c_2t)^{-q/2}t^{-p}\exp\left(-\frac{(1+c_2t)^{1-q}}{c_2(1-q)}\right)\d t.
\end{align*}
Rearranging it yields
\begin{align*}
    I
    &\leq \frac{(a-1)^{-p}(1+(a-1)c_2)^{q/2}\exp\left(-\frac{(1+(a-1)c_2)^{1-q}}{c_2(1-q)}\right) - b^{-p}(1+bc_2)^{q/2}\exp\left(-\frac{(1+bc_2)^{1-q}}{c_2(1-q)}\right)}{1+\left(\frac{p}{b}+pc_2-\frac{qc_2}{2}\right)(1+bc_2)^{q-1}} \\
    &\leq (a-1)^{-p}(1+(a-1)c)^{q}\exp\left(-\frac{(1+(a-1)c)^{1-q}}{c(1-q)}\right).
\end{align*}
Thus, we obtain
\begin{align*}
    \sum_{t=a}^{b}&(1+c_1t)^{-q/2}t^{-p}\prod_{k=a}^{t}(1-(1+c_2k)^{-q}) 
    \leq \exp\left(\frac{(1+ac_2)^{1-q}}{c_2(1-q)}\right)\left(\frac{c_2}{c_1}\right)^{q/2}I \\
    &\leq \left(\frac{c_2}{c_1}\right)^{q/2}(a-1)^{-p}(1+(a-1)c_2)^{q/2}\exp\left(\frac{(1+ac_2)^{1-q}-(1+(a-1)c_2)^{1-q}}{c_2(1-q)}\right).
\end{align*}

\textbf{\crefdefpart{lem:algebra}{lem:algebra_c}.}
Using $1-x\leq\exp(-x)$, 
\begin{align*}
    \sum_{t=a}^{b}(1+c_1t)^{-p}\prod_{k=t+1}^{b}(1-(1+c_2k)^{-q})
    % \leq \sum_{t=a}^{b}t^{-p}\exp\left(-\sum_{k=t+1}^{b}(1+ck)^{-q}\right)
    \leq \exp\left(-\sum_{k=1}^{b}(1+c_2k)^{-q}\right)\sum_{t=a}^{b}(1+c_1t)^{-p}\exp\left(\sum_{k=1}^{t}(1+c_2k)^{-q}\right).
\end{align*}
Using the monotonicity of $(1+c_2k)^{-q}$, we have
\begin{align*}
    \exp\left(-\sum_{k=1}^{b}(1+c_2k)^{-q}\right)
    \leq \exp\left(-\int_{1}^{b+1}(1+c_2k)^{-q}\d k\right)
    = \exp\left(\frac{(1+c_2)^{1-q}-(1+(b+1)c_2)^{1-q}}{c_2(1-q)}\right)
\end{align*}
and
\begin{align*}
    \exp\left(\sum_{k=1}^{t}(1+c_2k)^{-q}\right)
    \leq \exp\left(\int_{0}^{t}(1+c_2k)^{-q}\d k\right)
    \leq \exp\left(\frac{(1+c_2t)^{1-q}-1}{c_2(1-q)}\right).
    % \leq \exp\left(\frac{(1+ct)^{1-q}}{c(1-q)}\right)
\end{align*}
Due to $c_1\leq c_2$ and the monotonicity of $\left(\frac{1+c_2t}{1+c_1t}\right)^{p}$, we continue to bound
\begin{align*}
    \sum_{t=a}^{b}(1+c_1t)^{-p}\exp\left(\sum_{k=1}^{t}(1+c_2k)^{-q}\right) 
    &\leq \sum_{t=a}^{b}(1+c_1t)^{-p}\exp\left(\frac{(1+c_2t)^{1-q}-1}{c_2(1-q)}\right) \\
    &= \sum_{t=a}^{b}\frac{(1+c_2t)^{p}}{(1+c_1t)^{p}}(1+c_2t)^{-p}\exp\left(\frac{(1+c_2t)^{1-q}-1}{c_2(1-q)}\right) \\
    &\leq \frac{(1+bc_2)^{p}}{(1+bc_1)^{p}}\sum_{t=a}^{b}(1+c_2t)^{-p}\exp\left(\frac{(1+c_2t)^{1-q}-1}{c_2(1-q)}\right) \\
    &\leq \left(\frac{c_2}{c_1}\right)^{p}\sum_{t=a}^{b}(1+c_2t)^{-p}\exp\left(\frac{(1+c_2t)^{1-q}-1}{c_2(1-q)}\right).
\end{align*}
Denote $h(t)$ as 
\begin{align*}
    h(t) \coloneqq (1+c_2t)^{-p}\exp\left(\frac{(1+c_2t)^{1-q}-1}{c_2(1-q)}\right).
\end{align*}
By simple calculation, 
\begin{align*}
    h'(t)
    = \left(-pc_2 + (1+c_2t)^{1-q}\right)(1+c_2t)^{-p-1}\exp\left(\frac{(1+c_2t)^{1-q}-1}{c_2(1-q)}\right).
\end{align*}
Define $t_1$ as
\begin{align*}
    t_1 \coloneqq \frac{(p c_2)^{\frac{1}{1-q}} - 1}{c_2}.
\end{align*}
Note that $h(t)$ is monotonically decreasing for $t\leq t_1$ and monotonically increasing for $t\geq t_1$. 

If $t_1 \leq a$, then 
\begin{align*}
    \sum_{t=a}^{b}(1+c_2t)^{-p}\exp\left(\frac{(1+c_2t)^{1-q}-1}{c_2(1-q)}\right)
    \leq \int_{a}^{b+1}(1+c_2t)^{-p}\exp\left(\frac{(1+c_2t)^{1-q}-1}{c_2(1-q)}\right)\d t.
\end{align*}
If $a\leq t_1 \leq b$, then 
\begin{align*}
    \sum_{t=a}^{b}(1+c_2t)^{-p}\exp\left(\frac{(1+c_2t)^{1-q}-1}{c_2(1-q)}\right)
    &\leq \left(\sum_{t=a}^{\floor{t_1}}+\sum_{t=\ceil{t_1}}^{b}\right)(1+c_2t)^{-p}\exp\left(\frac{(1+c_2t)^{1-q}-1}{c_2(1-q)}\right) \\
    &\leq \left(\int_{a-1}^{\floor{t_1}}+\int_{\ceil{t_1}}^{b+1}\right)(1+c_2t)^{-p}\exp\left(\frac{(1+c_2t)^{1-q}-1}{c_2(1-q)}\right)\d t \\
    &\leq \int_{a-1}^{b+1}(1+c_2t)^{-p}\exp\left(\frac{(1+c_2t)^{1-q}-1}{c_2(1-q)}\right)\d t.
    % &\leq \int_{a-1}^{b+1}\frac{(1+c_2t)^{p}}{(1+c_1t)^{p}}(1+c_2t)^{-p}\exp\left(\frac{(1+c_2t)^{1-q}-1}{c_2(1-q)}\right)\d t \\
    % &\leq \frac{(1+(b+1)c_2)^{p}}{(1+(b+1)c_1)^{p}}\underbrace{\int_{a-1}^{b+1}(1+c_2t)^{-p}\exp\left(\frac{(1+c_2t)^{1-q}-1}{c_2(1-q)}\right)\d t}_{\eqqcolon I'},
\end{align*}
If $t_1 \geq b$, then 
\begin{align*}
    \sum_{t=a}^{b}(1+c_2t)^{-p}\exp\left(\frac{(1+c_2t)^{1-q}-1}{c_2(1-q)}\right)
    \leq \int_{a-1}^{b}(1+c_2t)^{-p}\exp\left(\frac{(1+c_2t)^{1-q}-1}{c_2(1-q)}\right)\d t.
\end{align*}
Therefore, based on the three cases above,
\begin{align*}
    \sum_{t=a}^{b}(1+c_2t)^{-p}\exp\left(\frac{(1+c_2t)^{1-q}-1}{c_2(1-q)}\right)
    \leq \int_{a-1}^{b+1}(1+c_2t)^{-p}\exp\left(\frac{(1+c_2t)^{1-q}-1}{c_2(1-q)}\right)\d t
    \eqqcolon I'.
\end{align*}
We proceed to upper bound the integral $I'$.
By partial integration and $(p-q)(1+(a-1)c_2)^{q-1} \leq 1/2$,
\begin{align*}
    I'
    % &= \int_{a-1}^{b+1}(1+c_2t)^{-p}\exp\left(\frac{(1+c_2t)^{1-q}-1}{c_2(1-q)}\right)\d t \\
    &= \int_{a-1}^{b+1}(1+c_2t)^{q-p}(1+c_2t)^{-q}\exp\left(\frac{(1+c_2t)^{1-q}-1}{c_2(1-q)}\right)\d t \\
    &= (1+(b+1)c_2)^{q-p}\exp\left(\frac{(1+(b+1)c_2)^{1-q}-1}{c_2(1-q)}\right) - (1+(a-1)c_2)^{q-p}\exp\left(\frac{(1+(a-1)c_2)^{1-q}-1}{c_2(1-q)}\right) \\
    &\quad+ (p-q)\int_{a-1}^{b+1}(1+c_2t)^{q-p-1}\exp\left(\frac{(1+c_2t)^{1-q}-1}{c_2(1-q)}\right)\d t \\
    &\leq (1+(b+1)c_2)^{q-p}\exp\left(\frac{(1+(b+1)c_2)^{1-q}-1}{c_2(1-q)}\right) - (1+(a-1)c_2)^{q-p}\exp\left(\frac{(1+(a-1)c_2)^{1-q}-1}{c_2(1-q)}\right) \\
    &\quad+ (p-q)(1+(a-1)c_2)^{q-1}\int_{a-1}^{b+1}(1+c_2t)^{-p}\exp\left(\frac{(1+c_2t)^{1-q}-1}{c_2(1-q)}\right)\d t \\
    &\leq (1+(b+1)c_2)^{q-p}\exp\left(\frac{(1+(b+1)c_2)^{1-q}-1}{c_2(1-q)}\right) - (1+(a-1)c_2)^{q-p}\exp\left(\frac{(1+(a-1)c_2)^{1-q}-1}{c_2(1-q)}\right) \\
    &\quad+ \frac{1}{2}\int_{a-1}^{b+1}(1+c_2t)^{-p}\exp\left(\frac{(1+c_2t)^{1-q}-1}{c_2(1-q)}\right)\d t.
\end{align*}
Rearranging it yields
\begin{align*}
    I'
    \leq 2(1+(b+1)c_2)^{q-p}\exp\left(\frac{(1+(b+1)c_2)^{1-q}-1}{c_2(1-q)}\right).
    % - 2(1+(a-1)c_2)^{q-p}\exp\left(\frac{(1+(a-1)c_2)^{1-q}-1}{c_2(1-q)}\right).
\end{align*}
Therefore, 
\begin{align*}
    \sum_{t=a}^{b}&(1+c_1t)^{-p}\prod_{k=t+1}^{b}(1-(1+c_2k)^{-q}) \\
    % \leq \exp\left(-\sum_{k=1}^{b}(1+c_2k)^{-q}\right)\sum_{t=a}^{b}(1+c_1t)^{-p}\exp\left(\sum_{k=1}^{t}(1+c_2k)^{-q}\right) \\
    &\leq \exp\left(\frac{(1+c_2)^{1-q}-(1+(b+1)c_2)^{1-q}}{c_2(1-q)}\right)\left(\frac{c_2}{c_1}\right)^{p}\sum_{t=a}^{b}(1+c_2t)^{-p}\exp\left(\frac{(1+c_2t)^{1-q}-1}{c_2(1-q)}\right) \\
    &\leq \exp\left(\frac{(1+c_2)^{1-q}-(1+(b+1)c_2)^{1-q}}{c_2(1-q)}\right)\left(\frac{c_2}{c_1}\right)^{p}I' \\
    &\leq 2\left(\frac{c_2}{c_1}\right)^{p}(1+(b+1)c_2)^{q-p}\exp\left(\frac{(1+c_2)^{1-q}-1}{c_2(1-q)}\right).
\end{align*}
\end{proof}

\begin{lemma} \label{lem:var_coefficient}
For all $t\geq 1$, let $\alpha_t$, $\beta_t$, $\eta_t$, and $\kappa_{\sigma}$ be defined as in \cref{eq:eta_ada_nsgdm,eq:t0}:
\begin{align*}
    \alpha_t = \frac{\alpha}{\sqrt{\alpha^2 + \sum_{k=1}^{t}\|g_k-\tilde{g}_k\|^2}},
    \quad
    \beta_t = 1 - \alpha_t, 
    \quad
    \eta_t = \frac{\eta\sqrt{\alpha_t}}{\sqrt{t}},
    \quad\text{and}\quad
    \kappa_{\sigma} = 
    \begin{cases}
        % \frac{\bar{\sigma}}{\ubar{\sigma}} & \ubar{\sigma}>0 \\
        \bar{\sigma} / \ubar{\sigma} & \ubar{\sigma}>0 \\
        1 & \bar{\sigma}=0
    \end{cases}.
\end{align*}
Then with probability at least $1-\delta$, we have
\begin{enumerate}[(a)]
    \item For $a\geq t_0$, 
    $\sum_{t=a}^{T}\eta_t\beta_{a:t}
    \leq
    2\eta e\sqrt{\kappa_{\sigma}}\left((a-1)^{-1/2} + 2\sqrt{\bar{\sigma}/\alpha}(a-1)^{-1/4}\right)$. \label{lem:var_coefficient_e}
    \item $\sum_{t=1}^{T}\eta_t\beta_{2:t} 
    \leq 
    2\eta\left(\sqrt{t_0-1} - 1 + e\sqrt{\kappa_{\sigma}}\left(1+2\sqrt{\bar{\sigma}/\alpha}\right)\right)$. \label{lem:var_coefficient_a}
    \item $\sum_{t=1}^{T}\eta_t\sqrt{\sum_{k=2}^{t}\bar{\beta}_{(k+1):t}^2\bar{\alpha}_{k}^2} \\
    \leq 
    \eta\left(t_0-1 + 2e\sqrt{t_0-2}\sqrt{\kappa_{\sigma}}\left(1+2\sqrt{\bar{\sigma}/\alpha}\right) + 3\sqrt{e}\kappa_{\sigma}\alpha\ubar{\sigma}^{-1}\log T\right) \mathbb{I}(\ubar{\sigma}>0)$. \label{lem:var_coefficient_b}
    \item $\sum_{t=1}^{T}\eta_t\sum_{k=2}^{t}\eta_{k-1}\beta_{k:t}
    \leq 
    2\eta^2\left((t_0-1)\left(1+ e\sqrt{\kappa_{\sigma}}\left(1+2\sqrt{\bar{\sigma}/\alpha}\right)\right) + e(\sqrt{\kappa_{\sigma}}+2\kappa_{\sigma})\log T\right)$. \label{lem:var_coefficient_c}
    \item $\sum_{t=1}^{T}\eta_t^2 \leq \eta^2(1+\log T)$. \label{lem:var_coefficient_d}
\end{enumerate}
\end{lemma}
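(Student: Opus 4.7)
The plan is to first condition on the high-probability event of \cref{lem:momentum_bound}, on which $\ubar{\alpha}_t \leq \alpha_t \leq \bar{\alpha}_t$ and $\ubar{\beta}_t \leq \beta_t \leq \bar{\beta}_t$ hold simultaneously for all $t \leq T$ with probability at least $1-\delta$. Once this is in hand, every quantity in parts (a)--(e) becomes a deterministic expression in the envelopes, and the lemma reduces to a sequence of careful bookkeeping calculations driven by \cref{lem:algebra}. I would tackle the five parts roughly in order of difficulty, starting with part (e) since it follows immediately from $\alpha_t\leq 1$ (so $\eta_t\leq \eta/\sqrt{t}$) and the harmonic-sum bound $\sum_{t=1}^{T}1/t\leq 1+\log T$.

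For part (a), substitute $\eta_t\leq \eta\sqrt{\bar{\alpha}_t}/\sqrt{t}$ and $\beta_{a:t}\leq \prod_{k=a}^{t}\bar{\beta}_k$; with $\bar{\alpha}_t=\alpha/\sqrt{\alpha^2+\ubar{\sigma}^2 t}$ and $\bar{\beta}_t=1-\alpha/\sqrt{\alpha^2+4\bar{\sigma}^2 t}$ in the regime $t\geq t_0$, the summand matches the template of \crefdefpart{lem:algebra}{lem:algebra_b} with $p=q=1/2$, $c_1=\ubar{\sigma}^2/\alpha^2$, $c_2=4\bar{\sigma}^2/\alpha^2$. The prefactor $(c_2/c_1)^{q/2}$ generates the $\sqrt{\kappa_{\sigma}}$ dependence, while the exponential collapses to a constant $e$ after simplification, giving the stated bound. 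Part (b) then follows by splitting at $t_0$: for $t<t_0$ use $\beta_{2:t}\leq 1$ together with the integral comparison $\sum_{t=1}^{t_0-1}\eta/\sqrt{t}\leq 2\eta(\sqrt{t_0-1}-1)$, while for $t\geq t_0$ use $\beta_{2:t}\leq \beta_{t_0:t}$ and invoke part (a) with $a=t_0$. Part (d) follows the same template after exchanging the order of summation: write $\sum_{t}\eta_t\sum_{k\leq t}\eta_{k-1}\beta_{k:t}=\sum_{k}\eta_{k-1}\sum_{t\geq k}\eta_t\beta_{k:t}$, recognize the inner sum as exactly the object bounded by part (a) with $a=k$, then use $\eta_{k-1}\leq \eta/\sqrt{k-1}$ and resum.

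Part (c) is where the main friction lies. I would first reduce the inner quantity $\sum_{k=2}^{t}\bar{\beta}_{(k+1):t}^{2}\bar{\alpha}_k^{2}$ via the crude but sufficient bound $\bar{\beta}_j^{2}\leq \bar{\beta}_j$, which fits the sum into the form of \crefdefpart{lem:algebra}{lem:algebra_c} with $p=1$, $q=1/2$, yielding a bound of order $\alpha/(\bar{\sigma}\sqrt{t})$. Taking the square root contributes a factor of order $\sqrt{\alpha/\bar{\sigma}}\cdot t^{-1/4}$, and multiplying by $\eta_t\sim \eta\alpha^{1/2}/(\ubar{\sigma}^{1/2}t^{3/4})$ leaves a summand of order $1/t$, which integrates to the $\kappa_{\sigma}\alpha\ubar{\sigma}^{-1}\log T$ term; the pre-$t_0$ portion supplies the $t_0-1$ contribution and the boundary constants $e\sqrt{t_0-2}\sqrt{\kappa_{\sigma}}(1+2\sqrt{\bar{\sigma}/\alpha})$ through the same integral-comparison argument used in (b). The indicator $\mathbb{I}(\ubar{\sigma}>0)$ enters because in the noiseless case $\bar{\alpha}_k$ degenerates to $1$ and the shrinkage driving \crefdefpart{lem:algebra}{lem:algebra_c} is lost. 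The main obstacle throughout is tracking the constants cleanly across the two regimes $t<t_0$ and $t\geq t_0$, and verifying at each application that the hypotheses $p\geq q$, $c_1\leq c_2$, and the smallness condition $(p-q)(1+(a-1)c_2)^{q-1}\leq 1/2$ of \cref{lem:algebra} actually hold for our chosen exponents.
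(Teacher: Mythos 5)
Your proposal matches the paper's proof essentially step for step: condition on the envelope event of \cref{lem:momentum_bound}, apply \crefdefpart{lem:algebra}{lem:algebra_b} with $p=q=1/2$, $c_1=\ubar{\sigma}^2/\alpha^2$, $c_2=4\bar{\sigma}^2/\alpha^2$ for (a), split at $t_0$ and reuse (a) for (b), apply \crefdefpart{lem:algebra}{lem:algebra_c} with $p=1$, $q=1/2$ (via $\bar{\beta}_j^2\leq\bar{\beta}_j$ and $\bar{\alpha}_k^2=(1+c_1k)^{-1}$) for (c), swap the order of summation and invoke (a) with $a=k$ for (d), and use the harmonic bound for (e), exactly as the paper does. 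The only point you compress is that in (c) and (d) the indices below $t_0$ (where part (a) is unavailable) must be handled separately by the crude bounds $\bar{\alpha}_k\leq 1$, $\beta_{k:t}\leq\beta_{t_0:t}$, $\eta_t\leq\eta/\sqrt{t}$ — which is precisely where the $t_0$-dependent constants in the stated bounds come from, as in the paper's proof.
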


\begin{proof}[Proof of \cref{lem:var_coefficient}]
We prove the results individually. Without loss of generality, we assume $t_0$ is an integer. By \cref{lem:momentum}, with probability at least $1-\delta$, we have $\ubar{\alpha}_t\leq\alpha_t\leq\bar{\alpha}_t$ and $\ubar{\beta}_t\leq\beta_t\leq\bar{\beta}_t$.

\textbf{\crefdefpart{lem:var_coefficient}{lem:var_coefficient_e}.}
Consider the case where $0<\ubar{\sigma}\leq \bar{\sigma}$. 
Apply \crefdefpart{lem:algebra}{lem:algebra_b} with $a=a \geq t_0$, $b=T$, $p=q=1/2$, $c_1=\ubar{\sigma}^2/\alpha^2$, and $c_2=4\bar{\sigma}^2/\alpha^2$,
\begin{align*}
    \sum_{t=a}^{T}\eta_t\beta_{a:t}
    &\leq 2\eta\sqrt{\kappa_{\sigma}}(a-1)^{-1/2}\left(1+\frac{4\bar{\sigma}^2}{\alpha^2}(a-1)\right)^{1/4}\exp\left(\frac{\sqrt{1+4a\bar{\sigma}^2/\alpha^2}-\sqrt{1+4(a-1)\bar{\sigma}^2/\alpha^2}}{2\bar{\sigma}^2/\alpha^2}\right) \\
    &\leq 2\eta\sqrt{\kappa_{\sigma}}(a-1)^{-1/2}\left(1+2\sqrt{\bar{\sigma}/\alpha}(a-1)^{1/4}\right) \exp(1) \\
    &= 2\eta e\sqrt{\kappa_{\sigma}}\left((a-1)^{-1/2} + 2\sqrt{\bar{\sigma}/\alpha}(a-1)^{-1/4}\right) 
    \leq 2\eta e\sqrt{\kappa_{\sigma}}\left(1+2\sqrt{\bar{\sigma}/\alpha}\right),
\end{align*}
where the second inequality uses $(1+x)^{1/4}\leq 1+x^{1/4}$, and the last inequality is due to $a\geq t_0\geq 2$. The bound also holds for the case where $\ubar{\sigma}=\bar{\sigma}=0$.

\textbf{\crefdefpart{lem:var_coefficient}{lem:var_coefficient_a}.}
Apply \crefdefpart{lem:var_coefficient}{lem:var_coefficient_e} with $a=t_0$,
\begin{align*}
    \sum_{t=t_0}^{T}\eta_t\beta_{t_0:t}
    \leq 2\eta e\sqrt{\kappa_{\sigma}}\left(1+2\sqrt{\bar{\sigma}/\alpha}\right)(t_0-1)^{-1/4}
    \leq 2\eta e\sqrt{\kappa_{\sigma}}\left(1+2\sqrt{\bar{\sigma}/\alpha}\right).
\end{align*}
Hence, using $\eta_t\leq \eta/\sqrt{t}$ and $\beta_t\leq 1$,
\begin{align*}
    \sum_{t=1}^{T}\eta_t\beta_{2:t} 
    &= \sum_{t=1}^{t_0-1}\eta_t\beta_{2:t} + \sum_{t=t_0}^{T}\eta_t\beta_{2:t} 
    \leq 2\eta(\sqrt{t_0-1} - 1) + \sum_{t=t_0}^{T}\eta_t\beta_{t_0:t} \\
    % &\leq \eta\left(t_0 - 1 + 2\sqrt{\kappa_{\sigma}}\left(1+\frac{\bar{\sigma}^2}{\alpha^2}\right)^{\frac{1}{4}}\exp(1)\right),
    &\leq 2\eta\left(\sqrt{t_0-1} - 1 + e\sqrt{\kappa_{\sigma}}\left(1+2\sqrt{\bar{\sigma}/\alpha}\right)\right).
\end{align*}

\textbf{\crefdefpart{lem:var_coefficient}{lem:var_coefficient_b}.}
Consider the case where $0<\ubar{\sigma}\leq \bar{\sigma}$. 
Apply \crefdefpart{lem:algebra}{lem:algebra_c} with $a=t_0$, $b=t$, $p=1$, $q=1/2$, $c_1=\ubar{\sigma}^2/\alpha^2$, and $c_2=4\bar{\sigma}^2/\alpha^2$,
\begin{align*}
    \sqrt{\sum_{k=t_0}^{t}\bar{\alpha}_{k}^2\bar{\beta}_{(k+1):t}^2}
    &\leq \sqrt{\frac{12\bar{\sigma}^2}{\ubar{\sigma}^2}\left(1+\frac{4\bar{\sigma}^2}{\alpha^2}(t+1)\right)^{-1/2}\exp\left(\frac{\sqrt{1+4\bar{\sigma}^2/\alpha^2}-1}{2\bar{\sigma}^2/\alpha^2}\right)} \\
    &\leq \sqrt{12e\kappa_{\sigma}^2\left(1+\frac{4\bar{\sigma}^2}{\alpha^2}(t+1)\right)^{-1/2}}
    = \sqrt{12e}\kappa_{\sigma}\left(1+\frac{4\bar{\sigma}^2}{\alpha^2}(t+1)\right)^{-1/4}.
\end{align*}
Using the definition of $\eta_t$,
\begin{align*}
    \sum_{t=t_0}^{T}\eta_t\sqrt{\sum_{k=t_0}^{t}\bar{\beta}_{(k+1):t}^2\bar{\alpha}_{k}^2}
    &\leq \eta\sqrt{12e}\kappa_{\sigma}\sum_{t=t_0}^{T}\left(1+\frac{\ubar{\sigma}^2}{\alpha^2}t\right)^{-1/4}t^{-1/2}\left(1+\frac{4\bar{\sigma}^2}{\alpha^2}(t+1)\right)^{-1/4} \\
    % &\leq \eta\sqrt{12e}\kappa_{\sigma}\int_{1}^{T}(1+\ubar{\sigma}^2t)^{-1/4}t^{-1/2}\left(1+\frac{4\bar{\sigma}^2}{\alpha^2}t\right)^{-1/4} \d t \\
    &\leq 3\eta\sqrt{e}\kappa_{\sigma}\sum_{t=t_0}^{T} \frac{\alpha}{\sqrt{\ubar{\sigma}\bar{\sigma}}}t^{-1}
    \leq 3\eta\sqrt{e}\kappa_{\sigma}\alpha\ubar{\sigma}^{-1}\log T.
\end{align*}
Thus,
\begin{align*}
    \sum_{t=1}^{T}\eta_t\sqrt{\sum_{k=2}^{t}\bar{\beta}_{(k+1):t}^2\bar{\alpha}_{k}^2}
    &= \sum_{t=1}^{t_0-1}\eta_t\sqrt{\sum_{k=2}^{t}\bar{\beta}_{(k+1):t}^2\bar{\alpha}_{k}^2} + \sum_{t=t_0}^{T}\eta_t\sqrt{\sum_{k=2}^{t}\bar{\beta}_{(k+1):t}^2\bar{\alpha}_{k}^2} \\
    &\leq \sum_{t=1}^{t_0-1}\frac{\eta\sqrt{t-1}}{\sqrt{t}} + \sum_{t=t_0}^{T}\eta_t\sqrt{\sum_{k=2}^{t_0-1}\bar{\beta}_{(k+1):t}^2\bar{\alpha}_{k}^2 + \sum_{k=t_0}^{t}\bar{\beta}_{(k+1):t}^2\bar{\alpha}_{k}^2} \\
    &\leq \eta(t_0-1) + \sqrt{t_0-2}\sum_{t=t_0}^{T}\eta_t\beta_{t_0:t} + \sum_{t=t_0}^{T}\eta_t\sqrt{\sum_{k=t_0}^{t}\bar{\beta}_{(k+1):t}^2\bar{\alpha}_{k}^2} \\
    &\leq \eta(t_0-1) + 2\eta e\sqrt{t_0-2}\sqrt{\kappa_{\sigma}}\left(1+2\sqrt{\bar{\sigma}/\alpha}\right) + 3\eta\sqrt{e}\kappa_{\sigma}\alpha\ubar{\sigma}^{-1}\log T \\
    &= \eta\left(t_0-1 + 2e\sqrt{t_0-2}\sqrt{\kappa_{\sigma}}\left(1+2\sqrt{\bar{\sigma}/\alpha}\right) + 3\sqrt{e}\kappa_{\sigma}\alpha\ubar{\sigma}^{-1}\log T\right),
\end{align*}
where the first inequality uses $\eta_t\leq \eta/\sqrt{t}$, the second inequality is due to $\sqrt{a+b}\leq \sqrt{a}+\sqrt{b}$ for $a,b\geq 0$, and the third inequality uses \crefdefpart{lem:var_coefficient}{lem:var_coefficient_e}.

For the case $\ubar{\sigma}=\bar{\sigma}=0$, we have $\alpha_t=1$ and $\beta_t=0$, hence $\sum_{t=1}^{T}\eta_t\sqrt{\sum_{k=2}^{t}\bar{\beta}_{(k+1):t}^2\bar{\alpha}_{k}^2} = 0$.

\textbf{\crefdefpart{lem:var_coefficient}{lem:var_coefficient_c}.}
We have
\begin{align*}
    \sum_{t=1}^{T}\eta_t\sum_{k=2}^{t}\eta_{k-1}\beta_{k:t}
    &= \sum_{t=1}^{t_0-1}\eta_t\sum_{k=2}^{t}\eta_{k-1}\beta_{k:t} + \sum_{t=t_0}^{T}\eta_t\sum_{k=2}^{t}\eta_{k-1}\beta_{k:t} \\
    &\leq \sum_{t=1}^{t_0-1}\frac{\eta}{\sqrt{t}}\sum_{k=1}^{t-1}\frac{\eta}{\sqrt{k}} + \sum_{t=t_0}^{T}\eta_t\sum_{k=2}^{t_0}\eta_{k-1}\beta_{k:t} + \sum_{t=t_0}^{T}\eta_t\sum_{k=t_0+1}^{t}\eta_{k-1}\beta_{k:t} \\
    &\leq 2\eta^2(t_0-1) + \eta(t_0-1)\sum_{t=t_0}^{T}\eta_t\beta_{t_0:t} + \sum_{t=t_0}^{T}\eta_t\sum_{k=t_0+1}^{t}\eta_{k-1}\beta_{k:t} \\
    &\leq 2\eta^2(t_0-1) + 2\eta^2e(t_0-1)\sqrt{\kappa_{\sigma}}\left(1+2\sqrt{\bar{\sigma}/\alpha}\right) + \sum_{t=t_0}^{T}\eta_t\sum_{k=t_0+1}^{t}\eta_{k-1}\beta_{k:t},
\end{align*}
where the first inequality uses $\eta_t\leq \eta/\sqrt{t}$, the second inequality is due to $\eta_t\leq \eta$, and the last inequality uses \crefdefpart{lem:var_coefficient}{lem:var_coefficient_e}.
We continue to bound the last term above:
\begin{align*}
    \sum_{t=t_0}^{T}\eta_t\sum_{k=t_0+1}^{t}\eta_{k-1}\beta_{k:t}
    &= \sum_{t=t_0}^{T}\sum_{k=t_0+1}^{t}\eta_t\eta_{k-1}\beta_{k:t}
    = \sum_{k=t_0+1}^{T}\eta_{k-1}\sum_{t=k}^{T}\eta_t\beta_{k:t} \\
    &\leq 2\eta e\sqrt{\kappa_{\sigma}}\sum_{k=t_0+1}^{T}\eta_{k-1}\left((k-1)^{-1/2} + 2\sqrt{\bar{\sigma}/\alpha}(k-1)^{-1/4}\right) \\
    &\leq 2\eta e\sqrt{\kappa_{\sigma}}\sum_{k=t_0}^{T-1}\eta k^{-1} + 2\eta\sqrt{\kappa_{\sigma}}k^{-1} \\
    &\leq 2\eta^2e\sqrt{\kappa_{\sigma}}(1+2\sqrt{\kappa_{\sigma}})\log T,
\end{align*}
where the first inequality uses \crefdefpart{lem:var_coefficient}{lem:var_coefficient_e}, and the second inequality is due to $\eta_t=\eta\sqrt{\alpha_t}/\sqrt{t} \leq \eta/\sqrt{t}$.
Therefore,
\begin{align*}
    \sum_{t=1}^{T}\eta_t\sum_{k=2}^{t}\eta_{k-1}\beta_{k:t} 
    \leq 2\eta^2\left(t_0-1 + e(t_0-1)\sqrt{\kappa_{\sigma}}\left(1+2\sqrt{\bar{\sigma}/\alpha}\right) + e(\sqrt{\kappa_{\sigma}}+2\kappa_{\sigma})\log T\right).
    % \leq 2\eta^2\left((t_0-1)\left(1+ e\sqrt{\kappa_{\sigma}}\left(1+2\sqrt{\bar{\sigma}/\alpha}\right)\right) + e(\sqrt{\kappa_{\sigma}}+2\kappa_{\sigma})\log T\right).
\end{align*}

\textbf{\crefdefpart{lem:var_coefficient}{lem:var_coefficient_d}.}
By the definition of $\eta_t$ and the fact that $\alpha_t\leq 1$,
\begin{align*}
    \sum_{t=1}^{T}\eta_t^2 
    = \sum_{t=1}^{T}\frac{\eta^2\alpha_t}{t}
    \leq \sum_{t=1}^{T}\frac{\eta^2}{t}
    \leq \eta^2(1+\log T).
\end{align*}
\end{proof}

\begin{lemma} \label{lem:cal_minimax}
For all $t\geq 1$, let $\alpha_t$, $\beta_t$, and $\eta_{x,t}$ be defined as in \cref{eq:alpha_minimax,eq:eta_minimax} (see \cref{sec:adaptive_minimax,sec:adaptive_bilevel}), and let $\epsilon_t^{\texttt{B}}$ be defined as in \cref{lem:momentum_recursion_minimax} for minimax optimization and in \cref{lem:momentum_recursion_bilevel} for bilevel optimization:
\begin{align*}
    \alpha_t = \frac{\alpha}{\sqrt{\alpha^2 + \sum_{k=1}^{t}\|g_{x,k}-\tilde{g}_{x,k}\|^2}},
    \quad
    \beta_t = 1 - \alpha_t, 
    \quad
    \epsilon_t^{\texttt{B}} = 
    \begin{cases}
        \nabla_{x} f(x_t,y_t) - \gdphi(x_t) & (\text{minimax}) \\
        \barf(x_t,y_t) - \gdphi(x_t) & (\text{bilevel})
    \end{cases}
    % \alpha_t' = \frac{\alpha}{\sqrt{\alpha^2 + \sum_{k=1}^{t}\|g_{x,k}-\tilde{g}_{x,k}\|^2 + \|g_{y,k}\|^2}},
    % \quad
    % \eta_t = \frac{\eta\sqrt{\alpha_t'}}{\sqrt{t}},
    % \quad\text{and}\quad
    % \kappa_{\sigma} = 
    % \begin{cases}
    %     % \frac{\bar{\sigma}_u}{\ubar{\sigma}_u} & \ubar{\sigma}_u>0 \\
    %     \bar{\sigma}_u / \ubar{\sigma}_u & \ubar{\sigma}_u>0 \\
    %     1 & \bar{\sigma}_u=0
    % \end{cases}.
\end{align*}
\begin{align*}
    \alpha_t' = \frac{\alpha}{\sqrt{\alpha^2 + \sum_{k=1}^{t}\|g_{x,k}-\tilde{g}_{x,k}\|^2 + \|g_{y,k}\|^2}},
    \quad\text{and}\quad
    \eta_{x,t} = \frac{\eta\sqrt{\alpha_t'}}{\sqrt{t}}.
    % \quad\text{and}\quad
    % \kappa_{\sigma} = 
    % \begin{cases}
    %     % \frac{\bar{\sigma}_u}{\ubar{\sigma}_u} & \ubar{\sigma}_u>0 \\
    %     \bar{\sigma}_u / \ubar{\sigma}_u & \ubar{\sigma}_u>0 \\
    %     1 & \bar{\sigma}_u=0
    % \end{cases}.
\end{align*}
Then with probability at least $1-4\delta$, we have
\begin{align*}
    &\sum_{t=1}^{T} \eta_{x,t}\left\|\sum_{k=2}^{t}\beta_{(k+1):t}\alpha_{k}\epsilon_k^{\texttt{B}}\right\|
    \leq \left(LD\sqrt{\frac{\eta(\alpha+\gamma)}{\mu}(1+\log T)}\right)\mathbb{I}(\bar{\sigma}_u=0) \\
    &+ \left(\frac{2\eta LD}{3}((t_0-1)^{3/2}-1) + 2(t_0-2)LD\eta e\sqrt{\kappa_{\sigma}}\left(1 + 2\sqrt{\bar{\sigma}_u/\alpha}\right) \right. \\
    &\left.+ \frac{2L\alpha}{\mu\ubar{\sigma}_u}\left(1 + 2e\sqrt{\kappa_{\sigma}}\left(1 + 2\sqrt{\bar{\sigma}_u/\alpha}\right)\right)\left(2\left(\sqrt{2}DL + \sqrt{D\gamma}\right) +  \sqrt{2D\sigma_l}(1+\log T)\right)\right)\mathbb{I}(\ubar{\sigma}_u>0),
\end{align*}
where (with a slight abuse of notation for $L$) $L=L, \bar{\sigma}_u=\bar{\sigma}_x, \ubar{\sigma}_u=\ubar{\sigma}_x, \sigma_l=\sigma_y$ for \cref{alg:ada_minimax}, and $L=l_{g,1}, \bar{\sigma}_u=\bar{\sigma}_{\phi}, \ubar{\sigma}_u=\ubar{\sigma}_{\phi}, \sigma_l=\sigma_{g,1}$ for \cref{alg:ada_bilevel}.
\end{lemma}

\begin{proof}[Proof of \cref{lem:cal_minimax}]
We consider the cases $\ubar{\sigma}_u = \bar{\sigma}_u = 0$ and $0 < \ubar{\sigma}_u \leq \bar{\sigma}_u$ separately.

\noindent
\textbf{Case $\ubar{\sigma}_u = \bar{\sigma}_u = 0$.} 
In this case, 
\begin{align*}
    \alpha_t = 1,
    \quad
    \beta_t = 0,
    \quad
    \alpha_t' = \frac{\alpha}{\sqrt{\alpha^2 + \sum_{k=1}^{t}\|g_{y,k}\|^2}}
    \quad\text{and}\quad
    \eta_t = \frac{\eta\sqrt{\alpha_t'}}{\sqrt{t}}.
\end{align*}
By \cref{ass:smoothness_minimax}, $\|\epsilon_t^{\texttt{B}}\| = \|\nabla_{x} f(x_t,y_t) - \gdphi(x_t)\| \leq L\|y_t-y_t^*\|$.
% \begin{align*}
%     \|\epsilon_t^{\texttt{B}}\|
%     = \|\nabla_{x} f(x_t,y_t) - \gdphi(x_t)\|
%     \leq L\|y_t-y_t^*\|
% \end{align*}
Thus,
\begin{align*}
    \sum_{t=1}^{T} \eta_{x,t}\left\|\sum_{k=2}^{t}\beta_{(k+1):t}\alpha_k\epsilon_k^{\texttt{B}}\right\|
    = \sum_{t=1}^{T} \eta_{x,t}\|\alpha_t\epsilon_t^{\texttt{B}}\|
    \leq L\sum_{t=1}^{T} \eta_{x,t}\|y_t-y_t^*\|.
\end{align*}
Using Cauchy–Schwarz inequality and \cref{eq:sum_drift_1,eq:sum_eta_y_2}, with probability at least $1-4\delta$,
\begin{align*}
    \sum_{t=1}^{T} \eta_{x,t}\|y_t-y_t^*\|
    \leq \sqrt{\sum_{t=1}^{T}\frac{\eta_{x,t}^2}{\mu\eta_{y,t}}} \sqrt{\sum_{t=1}^{T}\mu\eta_{y,t}\|y_t-y_t^*\|^2}
    \leq D\sqrt{\frac{\eta(\alpha+\gamma)}{\mu}(1+\log T)}.
\end{align*}
Hence,
\begin{align} \label{eq:bias_case_0}
    \sum_{t=1}^{T} \eta_{x,t}\left\|\sum_{k=2}^{t}\beta_{(k+1):t}\alpha_{k}\epsilon_k^{\texttt{B}}\right\|
    \leq LD\sqrt{\frac{\eta(\alpha+\gamma)}{\mu}(1+\log T)}.
\end{align}

\noindent
\textbf{Case $0 < \ubar{\sigma}_u \leq \bar{\sigma}_u$.} 
By triangle inequality,
\begin{align*}
    \sum_{t=1}^{T} \eta_{x,t}\left\|\sum_{k=2}^{t}\beta_{(k+1):t}\alpha_{k}\epsilon_k^{\texttt{B}}\right\|
    \leq \sum_{t=1}^{T} \eta_{x,t}\sum_{k=2}^{t}\beta_{(k+1):t}\alpha_{k}\|\epsilon_k^{\texttt{B}}\|
    \leq L\sum_{t=1}^{T} \eta_{x,t}\sum_{k=2}^{t}\beta_{(k+1):t}\alpha_{k}\|y_{k}-y_{k}^*\|.
    % = \sum_{k=2}^{T} \alpha_{k}\|\epsilon_k^{\texttt{B}}\| \sum_{t=k}^{T}\eta_{x,t}\beta_{(k+1):t}.
\end{align*}
% Swapping the order of summation gives
Then with probability at least $1-4\delta$,
\begin{align*}
    &\sum_{t=1}^{T} \eta_{x,t}\sum_{k=2}^{t}\beta_{(k+1):t}\alpha_{k}\|y_{k}-y_{k}^*\|
    % &= \sum_{k=2}^{T} \alpha_{k}\|y_{k}-y_{k}^*\|\sum_{t=k}^{T}\eta_{x,t}\beta_{(k+1):t} \\
    = \sum_{t=1}^{t_0-1} \eta_{x,t}\sum_{k=2}^{t}\beta_{(k+1):t}\alpha_{k}\|y_{k}-y_{k}^*\| + \sum_{t=t_0}^{T} \eta_{x,t}\sum_{k=2}^{t}\beta_{(k+1):t}\alpha_{k}\|y_{k}-y_{k}^*\| \\
    &\leq D\sum_{t=1}^{t_0-1} t\eta_{x,t} + \sum_{t=t_0}^{T} \eta_{x,t}\sum_{k=2}^{t_0-1}\beta_{(k+1):t}\alpha_{k}\|y_{k}-y_{k}^*\| + \sum_{t=t_0}^{T} \eta_{x,t}\sum_{k=t_0}^{t}\beta_{(k+1):t}\alpha_{k}\|y_{k}-y_{k}^*\| \\
    &\leq \frac{2\eta D}{3}((t_0-1)^{3/2}-1) + (t_0-2)D\sum_{t=t_0}^{T} \eta_{x,t}\beta_{t_0:t} + \sum_{t=t_0}^{T} \eta_{x,t}\sum_{k=t_0}^{t}\beta_{(k+1):t}\alpha_{k}\|y_{k}-y_{k}^*\| \\
    &\leq \frac{2\eta D}{3}((t_0-1)^{3/2}-1) + 2(t_0-2)D\eta e\sqrt{\kappa_{\sigma}}\left(1 + 2\sqrt{\bar{\sigma}_u/\alpha}\right) + \sum_{t=t_0}^{T} \eta_{x,t}\sum_{k=t_0}^{t}\beta_{(k+1):t}\alpha_{k}\|y_{k}-y_{k}^*\|.
\end{align*}
Swapping the order of summation for the last term, and applying \crefdefpart{lem:var_coefficient}{lem:var_coefficient_e},
\begin{align*}    
    \sum_{t=t_0}^{T} \eta_{x,t}\sum_{k=t_0}^{t}\beta_{(k+1):t}\alpha_{k}\|y_{k}-y_{k}^*\|
    &= \sum_{k=t_0}^{T} \alpha_{k}\|y_{k}-y_{k}^*\|\sum_{t=k}^{T}\eta_{x,t}\beta_{(k+1):t} \\
    &= \sum_{k=t_0}^{T} \alpha_{k}\|y_{k}-y_{k}^*\|\left(\eta_{x,k} + \sum_{t=k+1}^{T}\eta_{x,t}\beta_{(k+1):t}\right) \\
    % &\leq \sum_{k=t_0}^{T} \alpha_{k}\|y_{k}-y_{k}^*\|\left(\frac{\eta\sqrt{\alpha_{k}}}{\sqrt{k}} + 2\eta e\sqrt{\kappa_{\sigma}}k^{-\frac{1}{2}}\left(1 + 2k^{\frac{1}{4}}\sqrt{\bar{\sigma}_u/\alpha}\right)\right) \\
    &\leq \sum_{k=t_0}^{T} \alpha_{k}\|y_{k}-y_{k}^*\|\left(\frac{\eta\sqrt{\alpha_{k}}}{\sqrt{k}} + 2\eta e\sqrt{\kappa_{\sigma}}\left(1 + 2\sqrt{\bar{\sigma}_u/\alpha}\right)k^{-1/4}\right) \\
    % &\leq \sum_{k=t_0}^{T} \frac{\alpha}{\sqrt{\alpha^2+\ubar{\sigma}_u^2k}}\|y_{k}-y_{k}^*\|\left(\frac{\eta\sqrt{\alpha_{k}}}{\sqrt{k}} + 2\eta e\sqrt{\kappa_{\sigma}}\left(1 + 2\sqrt{\bar{\sigma}_u/\alpha}\right)k^{-1/4}\right) \\
    &\leq \frac{\eta\alpha}{\ubar{\sigma}_u}\left(1 + 2e\sqrt{\kappa_{\sigma}}\left(1 + 2\sqrt{\bar{\sigma}_u/\alpha}\right)\right)\sum_{k=t_0}^{T}k^{-3/4}\|y_{k}-y_{k}^*\|.
\end{align*}
Using summation by parts $\sum_{t=1}^{T}a_t(b_t-b_{t-1}) = a_Tb_T - a_1b_0 - \sum_{t=1}^{T-1}(a_{t+1}-a_t)b_t$ with $a_t = t^{-3/4}$ and $b_t = \sum_{k=1}^{t}\|y_k-y_k^*\|$,
\begin{align*}
    &\sum_{k=t_0}^{T}k^{-3/4}\|y_{k}-y_{k}^*\|
    % &\leq \sum_{t=1}^{T}t^{-3/4}\|y_t-y_t^*\| - \sum_{t=1}^{T-1}((t+1)^{-3/4} - t^{-3/4}) \sum_{k=1}^{t}\|y_k-y_k^*\| \\
    \leq T^{-3/4}\sum_{k=1}^{T}\|y_k-y_k^*\| + \sum_{t=1}^{T-1}(t^{-3/4} - (t+1)^{-3/4}) \sum_{k=1}^{t}\|y_k-y_k^*\| \\
    &\leq T^{-3/4}\sum_{k=1}^{T}\|y_k-y_k^*\| + \frac{3}{4}\sum_{t=1}^{T-1}t^{-7/4} \sum_{k=1}^{t}\|y_k-y_k^*\| \\
    &\leq \frac{1}{\mu\eta}T^{-3/4}\left(\left(\sqrt{2}DL + \sqrt{D\gamma}\right)\sqrt{T}+ \sqrt{2D\sigma_l}T^{3/4}\right) + \frac{3}{4\mu\eta}\sum_{t=1}^{T-1}t^{-7/4} \left(\left(\sqrt{2}DL + \sqrt{D\gamma}\right)\sqrt{t} + \sqrt{2D\sigma_l}t^{3/4}\right) \\
    &\leq \frac{1}{\mu\eta}\left(\left(\sqrt{2}DL + \sqrt{D\gamma}\right)T^{-1/4}+ \sqrt{2D\sigma_l}\right) + \frac{3}{4\mu\eta}\left(4\left(\sqrt{2}DL + \sqrt{D\gamma}\right) + \sqrt{2D\sigma_l}(1+\log T)\right) \\
    &\leq \frac{2}{\mu\eta}\left(2\left(\sqrt{2}DL + \sqrt{D\gamma}\right) +  \sqrt{2D\sigma_l}(1+\log T)\right),
\end{align*}
where the second inequality uses $t^{-3/4} - (t+1)^{-3/4} \leq 3t^{-7/4}/4$, and the third inequality  is due to \cref{lem:yt_sum_bound}.
Therefore,
\begin{align}
    \notag
    &\sum_{t=1}^{T} \eta_{x,t}\left\|\sum_{k=2}^{t}\beta_{(k+1):t}\alpha_{k}\epsilon_k^{\texttt{B}}\right\|
    \leq \frac{2\eta LD}{3}((t_0-1)^{3/2}-1) + 2(t_0-2)LD\eta e\sqrt{\kappa_{\sigma}}\left(1 + 2\sqrt{\bar{\sigma}_u/\alpha}\right) \\ \label{eq:bias_case_1}
    &\quad+ \frac{2L\alpha}{\mu\ubar{\sigma}_u}\left(1 + 2e\sqrt{\kappa_{\sigma}}\left(1 + 2\sqrt{\bar{\sigma}_u/\alpha}\right)\right)\left(2\left(\sqrt{2}DL + \sqrt{D\gamma}\right) +  \sqrt{2D\sigma_l}(1+\log T)\right).
\end{align}
Combining \cref{eq:bias_case_0,eq:bias_case_1}, we obtain
\begin{align*}
    &\sum_{t=1}^{T} \eta_{x,t}\left\|\sum_{k=2}^{t}\beta_{(k+1):t}\alpha_{k}\epsilon_k^{\texttt{B}}\right\|
    \leq \left(LD\sqrt{\frac{\eta(\alpha+\gamma)}{\mu}(1+\log T)}\right)\mathbb{I}(\bar{\sigma}_u=0) \\
    &+ \left(\frac{2\eta LD}{3}((t_0-1)^{3/2}-1) + 2(t_0-2)LD\eta e\sqrt{\kappa_{\sigma}}\left(1 + 2\sqrt{\bar{\sigma}_u/\alpha}\right) \right. \\
    &\left.+ \frac{2L\alpha}{\mu\ubar{\sigma}_u}\left(1 + 2e\sqrt{\kappa_{\sigma}}\left(1 + 2\sqrt{\bar{\sigma}_u/\alpha}\right)\right)\left(2\left(\sqrt{2}DL + \sqrt{D\gamma}\right) +  \sqrt{2D\sigma_l}(1+\log T)\right)\right)\mathbb{I}(\ubar{\sigma}_u>0).
\end{align*}
\end{proof}

\begin{lemma} \label{lem:cal_bilevel}
For all $t\geq 1$, let $\alpha_t$, $\beta_t$, $\eta_{x,t}$, and $\epsilon_t^{\texttt{N}}$ be defined as in \cref{eq:alpha_minimax,eq:eta_minimax,lem:momentum_recursion_bilevel} for bilevel optimization (see \cref{sec:adaptive_bilevel}):
\begin{align*}
    \alpha_t = \frac{\alpha}{\sqrt{\alpha^2 + \sum_{k=1}^{t}\|g_{x,k}-\tilde{g}_{x,k}\|^2}},
    \quad
    \beta_t = 1 - \alpha_t, 
    \quad
    \epsilon_t^{\texttt{N}} = \E_{t-1}[g_{x,t}]-\barf(x_t,y_t),
    % \alpha_t' = \frac{\alpha}{\sqrt{\alpha^2 + \sum_{k=1}^{t}\|g_{x,k}-\tilde{g}_{x,k}\|^2 + \|g_{y,k}\|^2}},
    % \quad
    % \eta_t = \frac{\eta\sqrt{\alpha_t'}}{\sqrt{t}},
    % \quad\text{and}\quad
    % \kappa_{\sigma} = 
    % \begin{cases}
    %     % \frac{\bar{\sigma}_x}{\ubar{\sigma}_x} & \ubar{\sigma}_x>0 \\
    %     \bar{\sigma}_x / \ubar{\sigma}_x & \ubar{\sigma}_x>0 \\
    %     1 & \bar{\sigma}_x=0
    % \end{cases}.
\end{align*}
\begin{align*}
    \alpha_t' = \frac{\alpha}{\sqrt{\alpha^2 + \sum_{k=1}^{t}\|g_{x,k}-\tilde{g}_{x,k}\|^2 + \|g_{y,k}\|^2}},
    \quad\text{and}\quad
    \eta_{x,t} = \frac{\eta\sqrt{\alpha_t'}}{\sqrt{t}}.
    % \quad\text{and}\quad
    % \kappa_{\sigma} = 
    % \begin{cases}
    %     % \frac{\bar{\sigma}_{\phi}}{\ubar{\sigma}_{\phi}} & \ubar{\sigma}_{\phi}>0 \\
    %     \bar{\sigma}_{\phi} / \ubar{\sigma}_{\phi} & \ubar{\sigma}_{\phi}>0 \\
    %     1 & \bar{\sigma}_{\phi}=0
    % \end{cases}.
\end{align*}
Then we have
\begin{align*}
    &\sum_{t=1}^{T} \eta_{x,t}\left\|\sum_{k=2}^{t}\beta_{(k+1):t}\alpha_{k}\epsilon_k^{\texttt{N}}\right\|
    \leq \frac{2\eta T^{3/2}l_{g,1}l_{f,0}}{3\mu_g}\left(1 - \frac{\mu_g}{l_{g,1}}\right)^{N}.
\end{align*}
\end{lemma}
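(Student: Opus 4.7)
\textbf{Proof plan for \cref{lem:cal_bilevel}.} The quantity $\epsilon_k^{\texttt{N}} = \E_{k-1}[g_{x,k}] - \barf(x_k,y_k)$ is purely a Neumann-series bias term (unlike $\epsilon_k^{\texttt{B}}$, which involves the lower-level estimation error). Since $g_{x,k} = \barf(x_k,y_k;\bar{\xi}_k)$, we have $\E_{k-1}[g_{x,k}] = \E[\barf(x_k,y_k;\bar{\xi}_k)]$, and \cref{lem:neumann_series} gives the deterministic pointwise bound
\begin{align*}
    \|\epsilon_k^{\texttt{N}}\| = \|\E[\barf(x_k,y_k;\bar{\xi})] - \barf(x_k,y_k)\| \leq \frac{l_{g,1}l_{f,0}}{\mu_g}\left(1-\frac{\mu_g}{l_{g,1}}\right)^{N}.
\end{align*}
This is the entire probabilistic content of the argument; no martingale or high-probability machinery is needed, which is why the bound in \cref{lem:cal_bilevel} is stated without a failure probability.

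Next, I apply the triangle inequality to the inner sum and use that every $\alpha_k, \beta_j \in [0,1]$, so $\beta_{(k+1):t}\alpha_k \leq 1$ and hence $\sum_{k=2}^{t}\beta_{(k+1):t}\alpha_k \leq t-1$. Combining with the pointwise bound above,
\begin{align*}
    \left\|\sum_{k=2}^{t}\beta_{(k+1):t}\alpha_k\epsilon_k^{\texttt{N}}\right\| \leq \frac{(t-1)\,l_{g,1}l_{f,0}}{\mu_g}\left(1-\frac{\mu_g}{l_{g,1}}\right)^{N}.
\end{align*}
Then I sum against $\eta_{x,t}$ using $\eta_{x,t} = \eta\sqrt{\alpha_t'}/\sqrt{t} \leq \eta/\sqrt{t}$, which yields
\begin{align*}
    \sum_{t=1}^{T}\eta_{x,t}(t-1) \leq \eta\sum_{t=1}^{T}\frac{t-1}{\sqrt{t}} \leq \eta\sum_{t=1}^{T}\sqrt{t} \leq \eta\int_{0}^{T}\sqrt{s}\,\d s = \frac{2\eta}{3}T^{3/2},
\end{align*}
where the integral comparison uses the monotonicity of $\sqrt{\cdot}$. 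Multiplying by the Neumann bias bound gives the claimed
\begin{align*}
    \sum_{t=1}^{T}\eta_{x,t}\left\|\sum_{k=2}^{t}\beta_{(k+1):t}\alpha_k\epsilon_k^{\texttt{N}}\right\| \leq \frac{2\eta T^{3/2}l_{g,1}l_{f,0}}{3\mu_g}\left(1-\frac{\mu_g}{l_{g,1}}\right)^{N}.
\end{align*}

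There is no substantive obstacle in this lemma; it is a routine bookkeeping step whose role is to isolate the Neumann-series bias in the hypergradient decomposition of \cref{lem:momentum_recursion_bilevel}. The only minor care is in the outer sum, where I should use the crude $\eta_{x,t} \leq \eta/\sqrt{t}$ bound rather than any momentum-decay trick, because $\epsilon_k^{\texttt{N}}$ is a persistent bias (not a mean-zero noise), so there is no cancellation to exploit across $k$. The $T^{3/2}$ factor is therefore unavoidable with this crude bound, but it is harmless: the whole term is driven to $\widetilde{O}(1)$ by the prescribed choice $N \geq \tfrac{3\log T}{2\log(1/(1-\mu_g/l_{g,1}))}$ in \cref{thm:bilevel_app}, which kills $T^{3/2}(1-\mu_g/l_{g,1})^N$.
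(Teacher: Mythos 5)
Your proposal follows essentially the same route as the paper: triangle inequality on the inner sum, the deterministic Neumann-series bias bound from \cref{lem:neumann_series} applied to $\epsilon_k^{\texttt{N}}$, the crude bounds $\beta_{(k+1):t}\alpha_k\leq 1$ and $\eta_{x,t}\leq \eta/\sqrt{t}$, and a final summation giving $\tfrac{2}{3}\eta T^{3/2}$; your observation that no high-probability machinery is needed here is also correct and matches the paper.

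The one flaw is in your last calculus step. You relax $\tfrac{t-1}{\sqrt{t}}\leq\sqrt{t}$ and then claim $\sum_{t=1}^{T}\sqrt{t}\leq\int_{0}^{T}\sqrt{s}\,\d s=\tfrac{2}{3}T^{3/2}$ "by monotonicity of $\sqrt{\cdot}$". For an increasing integrand the comparison goes the other way: $\sqrt{t}\geq\int_{t-1}^{t}\sqrt{s}\,\d s$, so $\sum_{t=1}^{T}\sqrt{t}\geq\tfrac{2}{3}T^{3/2}$ (already false at $T=1$, where $1>\tfrac{2}{3}$). Thus your chain does not justify the constant $\tfrac{2}{3}$ as written. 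The fix is immediate and recovers exactly the paper's bound: do not discard the $-1/\sqrt{t}$ term. Since $\tfrac{t-1}{\sqrt{t}}=\sqrt{t}-\tfrac{1}{\sqrt{t}}$ and $\bigl(\sqrt{t}-\tfrac{1}{\sqrt{t}}\bigr)^2=t-2+\tfrac{1}{t}\leq t-1$, we have $\tfrac{t-1}{\sqrt{t}}\leq\sqrt{t-1}\leq\int_{t-1}^{t}\sqrt{s}\,\d s$, hence
\begin{align*}
    \sum_{t=1}^{T}\frac{t-1}{\sqrt{t}}
    \leq \int_{0}^{T}\sqrt{s}\,\d s
    = \frac{2}{3}T^{3/2},
\end{align*}
which, combined with the rest of your argument, yields the stated bound. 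With this repair your proof coincides with the paper's.
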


\begin{proof}[Proof of \cref{lem:cal_bilevel}]
By triangle inequality and \cref{lem:neumann_series},
\begin{align*}
    \sum_{t=1}^{T} \eta_{x,t}\left\|\sum_{k=2}^{t}\beta_{(k+1):t}\alpha_{k}\epsilon_k^{\texttt{N}}\right\| 
    &\leq \sum_{t=1}^{T} \eta_{x,t}\sum_{k=2}^{t}\beta_{(k+1):t}\alpha_{k}\|\epsilon_k^{\texttt{N}}\| \\
    &\leq \frac{l_{g,1}l_{f,0}}{\mu_g}\left(1 - \frac{\mu_g}{l_{g,1}}\right)^{N} \sum_{t=1}^{T} \eta_{x,t}\sum_{k=2}^{t}\beta_{(k+1):t}\alpha_{k} \\
    &\leq \frac{l_{g,1}l_{f,0}}{\mu_g}\left(1 - \frac{\mu_g}{l_{g,1}}\right)^{N} \sum_{t=1}^{T} \frac{\eta}{\sqrt{t}}(t-1) \\
    &\leq \frac{2\eta T^{3/2}l_{g,1}l_{f,0}}{3\mu_g}\left(1 - \frac{\mu_g}{l_{g,1}}\right)^{N},
\end{align*}
where the third inequality uses $\eta_{x,t}\leq \eta/\sqrt{t}$ and $\alpha_t, \beta_t \leq 1$.
\end{proof}

\section{Discussion on Existing Algorithms for Minimax Optimization}
\label{app:discussion}
Among existing algorithms for nonconvex-strongly-concave minimax optimization, TiAda~\cite{li2022tiada} is the only work that attempts to be noise-adaptive. However, their convergence guarantees in the stochastic setting depend only on upper bounds of the stochastic gradient norm and the function value (e.g., Assumption 3.4, 3.5, and Theorem 3.2 in~\cite{li2022tiada}), rather than the actual noise level of stochastic gradients. Consequently, TiAda does not achieve optimal convergence in terms of the dependency on stochastic gradient variance. %See a detailed discussion in \cref{sec:adaptiveminimax}.

\section{Experimental Settings for Synthetic Experiments}
\label{app:synthetic}

For synthetic experiments, we tune hyperparameters for each baseline using a grid search and report their best results. Both the parameter $\alpha$ used in the momentum parameter estimate~\eqref{eq:alpha_minimax} and the base learning rates $(\eta_x, \eta_y)$ are tuned within the set $\{0.5, 1.0, 1.5, 2.0, 3.0, 4.0, 5.0\}$.
We use the following parameter choices for various noise magnitude: for $\sigma=0$, $\alpha=2.0$, $\eta_x=3.0$, $\eta_y=3.0$ for \adaminimax, and $\eta_x=4.0$, $\eta_y=4.0$ for TiAda; for $\sigma=20$, $\alpha=2.0$, $\eta_x=1.5$, $\eta_y=1.5$ for \adaminimax, and $\eta_x=2.0$, $\eta_y=2.0$ for TiAda; for $\sigma=50$, $\alpha=3.0$, $\eta_x=2.0$, $\eta_y=2.0$ for \adaminimax, and $\eta_x=2.0$, $\eta_y=2.0$ for TiAda; for $\sigma=100$, $\alpha=5.0$, $\eta_x=3.0$, $\eta_y=3.0$ for \adaminimax, and $\eta_x=2.5$, $\eta_y=2.5$ for TiAda. Other hyperparameters in TiAda are set to the default choices as suggested in \cite{li2022tiada}.

\section{Experimental Settings for Deep AUC Maximization}
\label{app:auc_exp_setting}
% We run a practical variant of our algorithm in deep AUC maximization experiments. The practical variant of our algorithm replaces the term $\sum_{k=1}^{t}\|g_{x, k}-\tilde{g}_{x,k}\|^2$ in \cref{eq:alpha_minimax} with $\sum_{k=1}^{t}\|g_{x, k}-g_{x,k-1}\|^2$, where $g_{x,k-1}$ denotes the gradient of $x$ computed at the previous iteration (i.e., $(k-1)$-th iteration). Additionally, we modify the step size from $\eta_{x,t}=\frac{\eta_x\sqrt{\alpha'_t}}{\sqrt{t}}$ to $\eta_{x,t}=\frac{\eta_x\sqrt{\alpha'_t}}{\sqrt{T}}$, equivalently absorbing the $\frac{1}{\sqrt{T}}$ factor into the constant $\eta_x$ in the following part.

For a fair comparison, we tune hyperparameters for each baseline using a grid search and report their best results. The base learning rates $(\eta_x, \eta_y)$ are tuned within the range of $[0.001, 0.1]$. Specifically, we select $(\eta_x, \eta_y) = (0.1, 0.05)$ for SGDA, $(0.01, 0.1)$ for PDSM, $(0.1, 0.05)$ for TiAda, and $(0.01, 0.01)$ for Ada-Minimax. The exponential hyperparameters $(\alpha, \beta)$ for TiAda follow the original settings in their paper, i.e., $(0.6, 0.4)$. For Ada-Minimax, the parameters $(\alpha, \gamma)$ are tuned within $\alpha \in \{0.1, 0.5, 1.0, 2.0\}$ and $\gamma \in \{0.01, 0.1, 1.0, 2.0\}$, resulting in the optimal choice $(\alpha, \gamma) = (0.5, 0.1)$.

\section{Experiments for Hyperparameter Optimization}
\label{app:hyperparameteroptimization}

In this section, we consider hyperparameter optimization on the TREC text classification dataset~\citep{li-roth-2002-learning}, provided under the Creative Commons Attribution 4.0 License. We formulate the hyperparameter optimization problem as follows:
\begin{equation*} 
% \label{eq:hyper-opt}
    \min_{\lambda}\; \frac{1}{|\mathcal{D}_{\text{val}}|} \sum_{\xi \in \mathcal{D}_{\text{val}}} \mathcal{L}(\vw^*(\lambda); \xi), \quad \mathrm{s.t.}\quad \vw^*(\lambda) = \argmin_{\vw}\; \frac{1}{|\mathcal{D}_{\text{tr}}|} \sum_{\zeta\in\mathcal{D}_{\text{tr}}}\left(\mathcal{L}(\vw; \zeta) + \frac{\lambda}{2}\|\vw\|^2\right),
\end{equation*}
where $\mathcal{L}(\vw; \xi)$ denotes the loss function, $\vw$ represents model parameters, and $\lambda$ is the regularization hyperparameter. Here, $\mathcal{D}_{\text{tr}}$ and $\mathcal{D}_{\text{val}}$ denote the training and validation datasets, respectively. In our experiments, we employ a BERT model with 4 self-attention layers, each comprising 4 attention heads, followed by a fully-connected layer with an output dimension of 6, corresponding to the six classification categories. The model is trained from scratch for 50 epochs. We compare our algorithm's training and test performance against the tuning-free bilevel optimization (TFBO) method proposed by~\cite{yang2024tuning}. For TFBO, we conduct a grid search to select optimal initial values for the upper-level learning rate $\alpha_0$, lower-level learning rate $\beta_0$, and linear system learning rate $\varphi$ within the range $[1.0\times 10^{-5}, 10.0]$, and set them to $\{0.01, 0.1, 0.1\}$. For Ada-BiO, we similarly perform hyperparameter tuning over the parameters $(\eta_x, \eta_y, \alpha, \gamma)$ within the range $[1.0\times 10^{-5}, 1.0]$, selecting the optimal values $(1.0\times 10^{-5}, 0.5, 1.0, 0.1)$ for evaluation. 

The training and test accuracy curves are illustrated in Figure~\ref{fig:hp_acc}. TFBO fails to converge because it is originally designed for deterministic scenarios, rendering it ineffective for stochastic settings. In contrast, Ada-BiO demonstrates rapid convergence in terms of training accuracy and consistently achieves superior test performance.

\begin{figure*}[!t]
\begin{center}
\subfigure[\scriptsize Training ACC]{\includegraphics[width=0.40\linewidth]{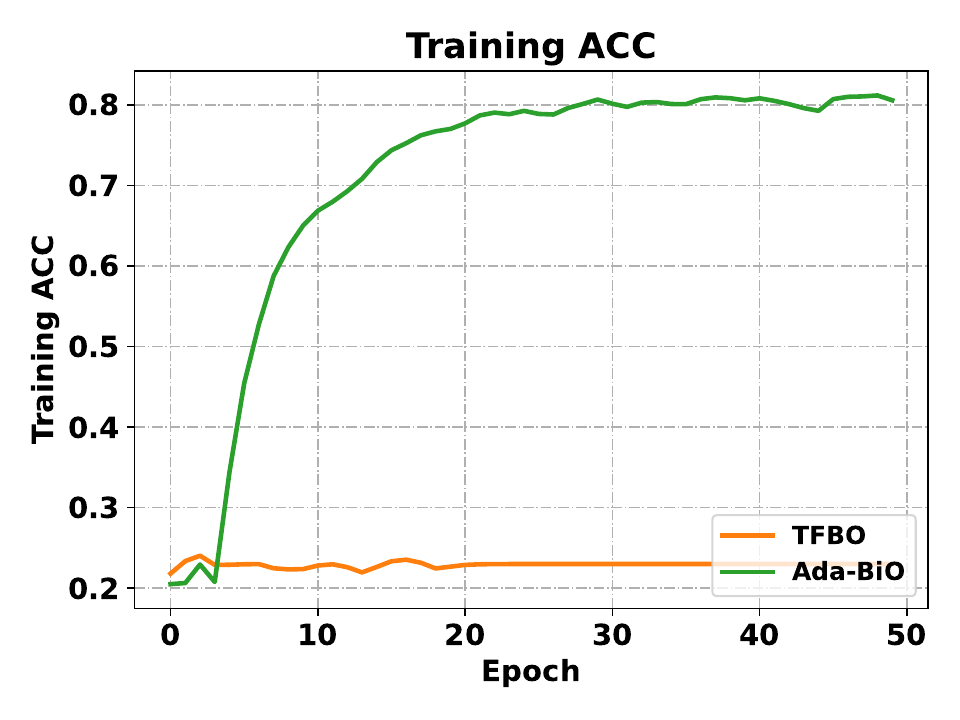}}   
\subfigure[\scriptsize Test ACC]{\includegraphics[width=0.40\linewidth]{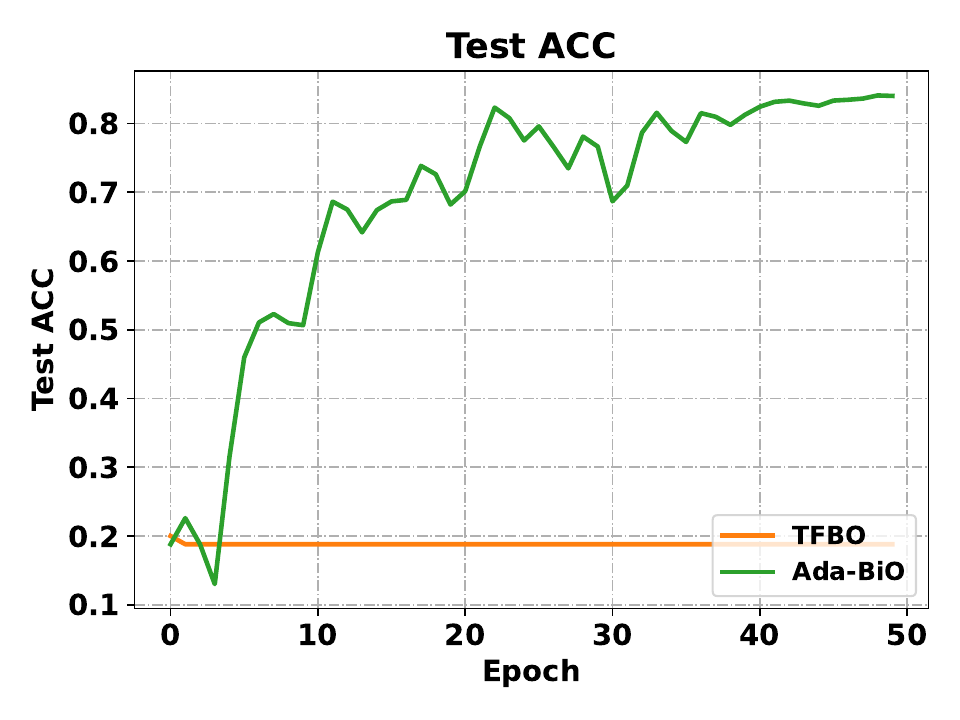}}   
\end{center}
\vspace*{-0.15in}
\caption{Comparison of BERT model on hyperparameter optimization. }
\label{fig:hp_acc}
\end{figure*}

\section{Experiments for Verifying Assumptions}
\label{app:verifyass}

We empirically verify \cref{ass:noise_minimax}(ii), which states that the noise of the stochastic gradient satisfies $\ubar{\sigma}_x \leq \|\nabla_{x} F(x,y;\xi)-\nabla_{x} f(x,y)\|\leq \bar{\sigma}_x$ with $\ubar{\sigma}_x\geq 0$. Specifically, following the experimental setup for deep AUC maximization described in Section~\ref{sec:deepauc}, we compute the exact gradient $\nabla_{x} f(x,y)$ after each training epoch by averaging the gradients over the entire validation dataset with fixed model parameters and hyperparameters. Similarly, we compute the stochastic gradient $\nabla_{x} F(x,y;\xi)$, but using a randomly sampled mini-batch from the validation set. We observe that the empirical maximal and minimal noise levels are $\bar{\sigma}_x = 210.71$ and $\ubar{\sigma}_x = 0.21$, respectively, thus confirming that practical stochastic gradient noise is indeed bounded from both sides.

\end{document}